\documentclass{article}
\usepackage[nonatbib, preprint, logo]{cig}
\usepackage[utf8]{inputenc} 
\usepackage[T1]{fontenc}    
\usepackage{hyperref}
\usepackage{xcolor}
\usepackage{booktabs}
\usepackage{arydshln}
\usepackage{balance}
\usepackage{multirow}
\usepackage{multicol}
\usepackage{algorithm}
\usepackage{algpseudocode}
\usepackage{enumitem}
\definecolor{linkblue}{RGB}{0,0,128} 
\hypersetup{
    colorlinks=true,
    linkcolor=black,   %
    citecolor=linkblue,   
    urlcolor=linkblue,    
    filecolor=linkblue
}

\usepackage{wrapfig}
\usepackage{soul}
\definecolor{mellowgreen}{RGB}{225, 242, 225}
\definecolor{mellowblue}{RGB}{225, 255, 248}
\definecolor{mellowyellow}{RGB}{255, 248, 210}
\definecolor{mellowred}{RGB}{250, 225, 225}
\newcommand{\bluehl}[1]{{\sethlcolor{mellowblue}\hl{#1}}}
\newcommand{\redhl}[1]{{\sethlcolor{mellowred}\hl{#1}}}
\algtext*{EndFor}
\def\BibTeX{{\rm B\kern-.05em{\sc i\kern-.025em b}\kern-.08em
    T\kern-.1667em\lower.7ex\hbox{E}\kern-.125emX}}

\usepackage{amsmath,amsfonts,bm, xcolor,mathtools}
\usepackage{amsthm}
\usepackage{soul}

\def\min{\mathop{\mathsf{min}}}
\def\argmin{\mathop{\mathsf{arg\,min}}} 

\DeclarePairedDelimiterX{\infdivx}[2]{(}{)}{
  #1\;\delimsize\|\;#2}

\definecolor{lightgreen}{rgb}{.9,1,.9}
\definecolor{trolleygrey}{rgb}{0.5, 0.5, 0.5}
\definecolor{BrickRed}{rgb}{0.6,0,0}
\definecolor{RoyalBlue}{rgb}{0,0,0.8}
\definecolor{Tdgreen}{rgb}{0,0.4,0.7}
\definecolor{pinegreen}{rgb}{0.0, 0.47, 0.44}
\definecolor{cornellred}{rgb}{0.7, 0.11, 0.11}
\definecolor{cadmiumgreen}{rgb}{0.0, 0.42, 0.24}
\definecolor{spirodiscoball}{rgb}{0.06, 0.75, 0.99}
\definecolor{mylightblue}{rgb}{0.85, 0.90, 0.94}
\definecolor{maroon}{cmyk}{0,0.87,0.68,0.32}
\definecolor{mydarkblue}{RGB}{0,0,128}  

\DeclareMathAlphabet{\mathsfit}{\encodingdefault}{\sfdefault}{m}{sl}
\SetMathAlphabet{\mathsfit}{bold}{\encodingdefault}{\sfdefault}{bx}{n}

\def\etabm{{\bm{\eta}}}

\def\thetabm{{\bm{\theta }}}

\def\R{\mathbb{R}}
\def\E{\mathbb{E}}

\def\zerobm{{\bm{0}}}

\def\vbm{{\bm{v}}}

\def\\px{{\bm{x}}}
\def\ubm{{\bm{u}}}
\def\zbm{{\bm{z}}}
\def\ybm{{\bm{y}}}

\def\zbm{{\bm{z}}}

\def\abm{{\bm{a}}}

\def\mbm{{\bm{m}}}
\def\bbm{{\bm{b}}}

\def\xbm{{\bm{x}}}

\def\Mbm{{\bm{M}}}

\def\Abm{{\bm{A}}}

\def\Hbm{{\mathbf{H}}}

\def\Pbm{{\bm{P}}}
\def\Fbm{{\bm{F}}}
\def\Ibm{{\bm{I}}}

\def\\px{{\bm{X}}}

\def\Sigmabm{{\bm{\Sigma}}}

\def\px{{p(\xbm)}}

\def\dd{{\mathrm{d}}}

\def\Ncal{{\mathcal{N}}}

\def\Lcal{{\mathcal{L}}}

\def\Ucal{{\mathcal{U}}}

\def\Rsf{{\mathsf{R}}}
\def\Tsf{{\mathsf{T}}}
\def\Tsf{{\mathsf{T}}}

\theoremstyle{plain}

\newtheorem{proposition}{Proposition}
\newtheorem{lemma}{Lemma}

\theoremstyle{definition}

\theoremstyle{remark}

\title{FIRM: Flow-based Imaging via Regularized Minimization}

\vspace{5em}
\author{%
\normalsize Shirin Shoushtari$^1$ \quad  
Edward P.~Chandler$^2$ \quad Xiao Shi$^2$ \quad Ulugbek S.~Kamilov$^2$\\[0.7em]
\small \textnormal{$^1$WashU, $^2$UW-Madison}\\[0.5em]
\footnotesize \texttt{s.shirin@wustl.edu, epchandler@wisc.edu, xiao.shi@wisc.edu, kamilov@wisc.edu}}

\begin{document}
\maketitle

\begin{abstract}
Flow matching methods for imaging inverse problems typically incorporate measurements through network conditioning or guidance during sampling. Neither approach explicitly applies the forward operator within the learned conditional velocity field.
We develop a principled measurement-conditional velocity parameterization that does.
For a linear interpolation path, we express the optimal velocity through the posterior mean $\E[\xbm_1|\xbm_t, \ybm]$ and show that this mean is the unique minimizer of a variational objective with an explicit data-consistency term. 
The velocity defined by this minimizer provably transports the source distribution to the measurement-conditioned posterior.
This result leads to a forward operator-aware velocity field that is trained end-to-end and requires no separate guidance during sampling.
Across five imaging tasks, our method achieves leading reconstruction quality with up to $50\times$ fewer network evaluations than competitive flow-based methods. Varying the number of sampling steps also controls the distortion-perception trade-off without retraining.
\end{abstract}

\section{Introduction}
Imaging inverse problems aim to recover an unknown image from degraded measurements. They arise in magnetic resonance imaging (MRI), computed tomography (CT), microscopy, astronomical imaging, and remote sensing~\cite{bertero2021introduction, mccann2017convolutional, ongie2020deep}. 
Because the forward operator is often underdetermined, the measurements do not uniquely determine the image. Reconstruction therefore requires prior information. A common classical approach combines a data-fidelity term with a hand-designed regularizer, such as sparsity or smoothness~\cite{tikhonov1977solutions,vogel2002computational, tarantola2005inverse, mohammad2021regularization}. Learning-based methods often replace or complement hand-designed regularization with priors learned from data.

Score-based diffusion models~\cite{ho2020denoising, song2020score} and flow-matching models~\cite{lipman2023flow, albergo2023building, liu2023flow} provide expressive learned priors for imaging inverse problems~\cite{daras2024survey}. A key distinction among generative inverse-problem solvers is how and when measurement information enters the reconstruction.

A common approach starts with a pretrained image prior $p(\xbm)$ and introduces measurements during inference. DiffPIR~\cite{zhu2023denoising} and DDRM~\cite{kawar2022denoising} use proximal or spectral data-consistency updates, while variational PnP diffusion~\cite{mardani2024variational} alternates denoising with data-consistency projections. Flow-based methods similarly combine an unconditional velocity field with gradient- or projection-based consistency updates~\cite{martin2025pnpflow,pourya2026flower,zhang2024flow}. Reusing a prior across forward models is valuable, but the resulting solver must repeatedly correct its generative trajectory using measurements at inference, often requiring many velocity evaluations to produce a reconstruction. See Appendix~\ref{app:rel_work} for further discussion.

Another approach trains a score or velocity field conditioned on the measurement $\ybm$~\cite{saharia2022image,delbracio2023inversion,liu2023i2sb}. Conditioning can eliminate separate guidance at inference, but measurement information typically enters only as a network input, such as $\ybm$, $\Abm^\Tsf\ybm$, or a learned embedding. Data consistency must therefore be learned from paired examples rather than imposed through an explicit forward operator-dependent update.

We give explicit data consistency a different role: it becomes part of each evaluation of a learned measurement-conditional velocity field. Under linear interpolation, $\xbm_t$ and $\ybm$ are conditionally independent noisy observations of the clean image $\xbm_1$, and their joint posterior mean determines the optimal conditional velocity, $\vbm(\xbm_t,t,\ybm)=\big(\E[\xbm_1|\xbm_t,\ybm]-\xbm_t\big)/(1-t)$. We characterize this posterior mean as the unique minimizer of a variational objective with interpolation consistency, explicity measurement consistency, and an implicit regularizer. We then prove that the velocity defined by this minimizer transports the source distribution to the measurement-conditioned posterior. Guided by this characterization, we alternate a forward operator-dependent data-consistency steps with a learned update to parametrize the velocity field. The full parameterization is trained end-to-end by flow matching and requires no separate guidance during sampling.

Our contributions are threefold. 
\textbf{(a)} We introduce FIRM, a novel measurement-conditional flow-matching method that explicitly incorporates data-consistency into each velocity field evaluation.
\textbf{(b)} We characterize the optimal conditional velocity through a variational minimizer and show that the resulting exact field transports the source distribution to the measurement-conditioned posterior. 
\textbf{(c)} Across five imaging tasks, our method achieves leading reconstruction quality with up to $50\times$ fewer network evaluations than competitive flow-based methods. Varying the number of sampling steps also controls the distortion-perception trade-off without retraining.

\section{Background}

\paragraph{Inverse problems.}
Imaging inverse problems seek to recover an unknown image $\xbm \in \R^{n}$ from degraded measurements $\ybm \in \R^{m}$, commonly modeled as
\begin{equation}\label{eq:forward_model}
\ybm= \Abm \xbm+ \etabm ,
\end{equation}
where $\Abm:\R^{n}\rightarrow \R^{m}$ is the measurement operator and $\etabm \sim \Ncal(\bm 0, \sigma^2 \Ibm)$ is additive Gaussian noise with standard deviation $\sigma$. Because $\Abm$ may be noninvertible or underdetermined, the measurements may not uniquely determine $\xbm$; even when the solution is unique,
ill-conditioning can make recovery unstable. Thus, reconstruction requires prior information. A common variational estimator is
\begin{equation}\label{eq:inverse_variational}
\hat{\xbm} =\argmin_{\xbm} f(\xbm) \quad\text{with}\quad f(\xbm) = g(\xbm)+h(\xbm),
\end{equation}
where $g(\xbm)$ enforces measurement consistency and $h(\xbm)$ encodes prior information. Under additive Gaussian noise, a standard choice is $g(\xbm) = \frac{1}{2}\|\Abm\xbm-\ybm\|_2^2$, while $h$ may promote properties such as total variation, wavelet sparsity, or sparse dictionary representations~\cite{rudin1992nonlinear, beck2009fast, figueiredo2003algorithm}.

\paragraph{PnP and HQS.}
Plug-and-Play (PnP) methods retain the data-fidelity term and replace the proximal update associated with a hand-designed regularizer by a learned denoiser~\cite{venkatakrishnan2013plug}.  The proximal operator is
\begin{equation}\label{eq:prox}
\text{prox}_{\gamma h}(\xbm) = \argmin_{\ubm}\left\{h(\ubm) + \frac{1}{2\gamma}\|\ubm - \xbm\|_2^2\right\}.
\end{equation}
When $h$ is a negative log-prior, this operator corresponds to MAP denoising with Gaussian noise level $\sqrt{\gamma}$. PnP methods instead use a learned denoiser to encode prior information implicitly~\cite{romano-elad-milanfar-red,ahmad2020plug,ryu2019plug, kamilov2022pnp}. 
Half-quadratic splitting (HQS) introduces an auxiliary variable $\zbm$ and separates the data-fidelity and regularization terms as
\begin{equation}
\min_{\xbm, \zbm}\left\{g(\xbm) + h(\zbm) + \frac{\mu}{2}\|\xbm - \zbm\|_2^2\right\},
\end{equation}
where $\mu>0$ controls the coupling between $\xbm$ and $\zbm$.
Alternately minimizing gives
\begin{align}
\xbm^{k+1} &= \argmin_{\xbm}\left\{g(\xbm) + \frac{\mu}{2}\|\xbm - \zbm^k\|_2^2\right\} = \left(\Abm^\top\Abm + \mu\Ibm\right)^{-1}\left(\Abm^\top\ybm + \mu\zbm^k\right), \label{eq:hqs_x}\\
\zbm^{k+1} &= \argmin_{\zbm}\left\{h(\zbm) + \frac{\mu}{2}\|\xbm^{k+1} - \zbm\|_2^2\right\} = \text{prox}_{h/\mu}\left(\xbm^{k+1}\right). \label{eq:hqs_z}
\end{align}
The $\xbm$-update enforces data consistency, while the $\zbm$-update applies prior information. PnP replaces the latter with a learned denoiser at noise level $1/\sqrt{\mu}$. Algorithm unrolling treats the alternating updates as differentiable network layers that can be trained end-to-end~\cite{gregor2010learning,monga2021algorithm}.

\paragraph{Flow matching.}
Flow matching~\cite{lipman2023flow, liu2023flow, albergo2023building} constructs generative models by learning continuous transport dynamics from a source distribution $p_0$ to a target distribution $p_1$. The intermediate distributions $\{p_t\}_{t\in[0,1]}$ satisfy the continuity equation
\begin{equation}
\partial_t p_t+\nabla \cdot\left(p_t \vbm_t\right)=0,
\label{eq:continuity}
\end{equation}
where $\vbm_t$ is the velocity field and samples evolve according to $d\xbm_t/dt = \vbm_t(\xbm_t)$. Following the linear interpolation path used in flow matching and rectified flows~\cite{lipman2023flow, liu2023flow}, we have 
\begin{equation}
\xbm_t=(1-t)\xbm_0 + t\xbm_1, \qquad t\in[0,1],
\label{eq:linear_path}
\end{equation}
where $\xbm_0\sim p_0$ and $\xbm_1\sim p_1$. The optimal marginal velocity is $\vbm(\xbm_t,t)=\E[\xbm_1-\xbm_0|\xbm_t]$. Because this conditional expectation is generally unavailable, conditional flow matching (CFM)~\cite{lipman2023flow,tong2024improving} regresses directly onto the sample-wise path velocity,
\begin{equation}
\Lcal_{\mathrm{CFM}}(\theta)
=\E_{t,\xbm_0,\xbm_1}\left[\|\vbm_\thetabm(\xbm_t,t)-(\xbm_1-\xbm_0)\|^2\right],
\label{eq:cfm_loss}
\end{equation}
whose gradient matches that of the marginal flow-matching objective. Here, ``conditional'' refers to the probability path used to construct the training target, not to measurement conditioning. For the linear path, the optimal velocity can also be written as
\begin{equation}
\vbm(\xbm_t,t)=\E\left[\xbm_1- \xbm_0|\xbm_t\right]=\frac{\E\left[\xbm_1|\xbm_t\right]-\xbm_t}{1-t}.
\label{eq:velocity_posterior}
\end{equation}
Thus, velocity estimation reduces to conditional-mean estimation. The sample-wise paths are straight, motivating numerical integration with relatively few steps in rectified-flow models~\cite{liu2023flow,tong2024improving}. After training, samples are generated by solving
\begin{equation}
\hat{\xbm}_1=\xbm_0+\int_0^1\vbm_\thetabm(\xbm_t,t)\,\dd t.
\label{eq:sampling}
\end{equation}
toward $t=1$. Numerical solvers evaluate the representation in~\eqref{eq:velocity_posterior} only for $t<1$, where it remains well defined.

\section{Method}
\label{sec:method}

\subsection{Theoretical Characterization}
\label{sec:theory}

Under linear interpolation, the interpolation state $\xbm_t$ and measurement $\ybm=\Abm\xbm_1+\etabm$ are conditionally independent noisy observations of $\xbm_1$. They form the augmented observation model
\begin{equation}
    \underbrace{
        \begin{bmatrix}
            \xbm_t\\
            \ybm
        \end{bmatrix}
    }_{\widetilde{\ybm}}
    =
    \underbrace{
        \begin{bmatrix}
            t\Ibm\\
            \Abm
        \end{bmatrix}
    }_{\widetilde{\Abm}}
    \xbm_1
    +
    \underbrace{
        \begin{bmatrix}
            (1-t)\xbm_0\\
            \etabm
        \end{bmatrix}
    }_{\widetilde{\bbm}},
    \qquad
    \widetilde{\bbm}
    \sim
    \Ncal\left(
        \zerobm,
        \operatorname{diag}\bigl((1-t)^2\Ibm,\sigma^2\Ibm\bigr)
    \right).
    \label{eq:joint_obs}
\end{equation}

The MMSE estimator of $\xbm_1$ under the augmented observation model ($\widetilde{\ybm} = \widetilde{\Abm}\xbm_1 + \widetilde{\bbm}$) is the posterior mean $\E[\xbm_1|\xbm_t,\ybm]$. The following proposition characterizes the posterior mean as the minimizer of a variational objective and shows that the velocity it defines transports the source distribution to the measurement-conditioned posterior.

\begin{proposition}\label{prop:conditional_flow}
For a non-degenerate $p_1$ with bounded support, 
$\sigma>0$, and $t\in(0,1)$:
\begin{enumerate}[label=(\alph*)]
    \item There exists a regularizer $\phi_t:\mathbb{R}^n\to\mathbb{R}\cup\{+\infty\}$, unique up to an additive constant, such that $\E[\xbm_1|\xbm_t,\ybm]$ is the unique global minimizer, and the unique stationary point of
    \begin{equation}
    \E[\xbm_1|\xbm_t,\ybm]
    =\argmin_{\xbm\in\mathbb{R}^n}
    \frac{1}{2(1-t)^2}\|\xbm_t-t\xbm\|_2^2
    +\frac{1}{2\sigma^2}\|\ybm-\Abm\xbm\|_2^2
    +\phi_t(\xbm),
    \label{eq:mmse_variational}
    \end{equation}
where the minimizer determines the optimal measurement-conditional velocity,
\begin{equation}\label{eq:conditional_velocity}
\vbm(\xbm_t,t,\ybm)= \frac{\E[\xbm_1|\xbm_t,\ybm]- \xbm_t}{1-t}.
\end{equation}

\item  Moreover, $\vbm$ satisfies the conditional continuity equation
\begin{equation}
\partial_t p_t(\xbm\mid\ybm) +\nabla\cdot\left(p_t(\xbm\mid\ybm)\vbm(\xbm,t,\ybm)\right) =0.
\end{equation}
For every $\tau<1$, the corresponding flow transports the Gaussian source to
$p_\tau(\cdot\mid\ybm)$, and
$p_\tau(\cdot\mid\ybm)\Rightarrow p(\xbm_1\mid\ybm)$ as
$\tau\rightarrow1^{-}$.
\end{enumerate}
\end{proposition}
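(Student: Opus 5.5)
The plan for (a) is to reduce to the Gaussian-denoising characterization of MMSE estimators (Gribonval-type results). First I will whiten the augmented model~\eqref{eq:joint_obs}. Write $\widetilde{\Abm}^\Tsf\Sigmabm^{-1}\widetilde{\Abm}=\frac{t^2}{(1-t)^2}\Ibm+\frac{1}{\sigma^2}\Abm^\Tsf\Abm=:\Hbm_t$, which is symmetric positive definite because $t>0$. Completing the square shows that the likelihood $p(\xbm_t,\ybm|\xbm_1)$, viewed as a function of $\xbm_1$, is proportional to $\exp(-\frac12\|\xbm_1-\zbm\|_{\Hbm_t}^2)$, where $\zbm=\Hbm_t^{-1}\widetilde{\Abm}^\Tsf\Sigmabm^{-1}\widetilde{\ybm}$. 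Hence $\E[\xbm_1|\xbm_t,\ybm]=\E[\xbm_1|\zbm]$ is the MMSE denoiser for the colored Gaussian model $\zbm=\xbm_1+\Hbm_t^{-1/2}\bm{w}$ with $\bm{w}\sim\Ncal(\zerobm,\Ibm)$. Changing variables by $\Hbm_t^{1/2}$ turns this into white Gaussian denoising with a bounded-support, non-degenerate prior (the pushforward of $p_1$). For that setting I will invoke the known result: the MMSE denoiser $D$ is the gradient of a convex potential (Tweedie's formula), it is injective with a $C^\infty$ inverse on its image, and there is $\phi$, unique up to a constant on the image and $+\infty$ off it, such that $D(\zbm)$ is the unique minimizer and the unique stationary point of $\frac12\|\xbm-\zbm\|^2_{\Hbm_t}+\phi(\xbm)$. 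Expanding $\frac12\|\xbm-\zbm\|_{\Hbm_t}^2$ recovers the two quadratic terms in~\eqref{eq:mmse_variational} up to an additive constant independent of $\xbm$, so I set $\phi_t=\phi$. The velocity formula~\eqref{eq:conditional_velocity} then follows from $\xbm_1-\xbm_0=(\xbm_1-\xbm_t)/(1-t)$ and taking conditional expectations, exactly as in~\eqref{eq:velocity_posterior}.

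The main obstacle in (a) is the uniqueness and stationary-point claim. The cited result needs $\phi$ defined via the inverse of $D$: $\phi(D(\zbm))=-\frac12\|D(\zbm)-\zbm\|^2_{\Hbm_t}-\log p_{\zbm}(\zbm)$ in the white-noise coordinates. Its validity rests on strict convexity of the log-partition potential, which holds because the posterior covariance is positive definite whenever $p_1$ is non-degenerate, i.e.\ not supported on a hyperplane. I will check this carefully in the whitened coordinates, and note that bounded support guarantees the required smoothness and moment conditions.

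For (b), I will fix $\ybm$ and work with the conditional law $p(\xbm_0,\xbm_1|\ybm)=p_0(\xbm_0)\,p(\xbm_1|\ybm)$, using that $\xbm_0$ is independent of $(\xbm_1,\etabm)$. Then $\xbm_t|\ybm$ is the linear interpolation between $p_0$ and the posterior, which reduces (b) to standard unconditional flow matching with target $p(\cdot|\ybm)$. For any test function $\varphi$, I differentiate $\E[\varphi(\xbm_t)|\ybm]$ in $t$. Dominated convergence applies by bounded support and Gaussian tails, and gives $\E[\nabla\varphi(\xbm_t)\cdot(\xbm_1-\xbm_0)|\ybm]=\E[\nabla\varphi(\xbm_t)\cdot\vbm(\xbm_t,t,\ybm)|\ybm]$ by the tower property, since $\sigma(\xbm_t,\ybm)$ contains the conditioning. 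This is the weak form of the conditional continuity equation. For $t<1$, $p_t(\cdot|\ybm)$ is a Gaussian convolution and hence smooth and positive, and $\vbm$ is smooth and of linear growth on $[0,\tau]$ because $\E[\xbm_1|\cdot]$ is bounded. So the ODE flow exists globally and is unique. Uniqueness of solutions to the continuity equation for locally Lipschitz fields (Ambrosio's theory) then shows that the flow pushes $p_0$ to $p_\tau(\cdot|\ybm)$.

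Finally, $\xbm_\tau\to\xbm_1$ almost surely as $\tau\to1^-$, since $\|\xbm_\tau-\xbm_1\|=(1-\tau)\|\xbm_1-\xbm_0\|$. Almost sure convergence implies convergence in distribution conditionally on $\ybm$, which gives $p_\tau(\cdot|\ybm)\Rightarrow p(\xbm_1|\ybm)$.
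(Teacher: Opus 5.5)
Your proof is correct and has the same overall structure as the paper's. Your $\Hbm_t$ and $\zbm$ are the paper's $\Mbm_t$ and $\ubm_t$ in Lemmas~\ref{lem:reduction}--\ref{lem:sufficiency}. From there both arguments use the Gribonval characterization, the velocity identity, uniqueness for the continuity equation, and the almost-sure coupling for the weak limit.

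\textbf{Part (a).} You whiten by $\Hbm_t^{1/2}$, apply the white-noise result, and pull the regularizer back as $\phi\circ\Hbm_t^{1/2}$. The paper instead applies the colored-noise corollary directly. The two are equivalent.

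\textbf{Part (b), continuity equation.} The paper derives it in strong form by averaging the per-endpoint Gaussian equations over $p(\xbm_1|\ybm)$. You derive the weak form directly with test functions and the tower property.

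\textbf{Part (b), well-posedness.} The paper proves the Jacobian identity $\nabla_{\xbm}\mbm_t=\frac{t}{(1-t)^2}\operatorname{Cov}(\xbm_1|\xbm_t=\xbm,\ybm)$. This gives the \emph{global} Lipschitz constant $\frac{1}{1-t}+\frac{tR^2}{(1-t)^3}$, so Picard--Lindel\"of and the classical uniqueness theorem for globally Lipschitz fields suffice. You use only smoothness plus the growth bound $\|\vbm(\xbm,t,\ybm)\|\le(R+\|\xbm\|)/(1-t)$, and then the heavier uniqueness/superposition theorem for locally Lipschitz fields (Ambrosio--Gigli--Savar\'e). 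That theorem also requires $\int_0^\tau\!\int\|\vbm\|\,\dd p_t\,\dd t<\infty$. This holds here by your growth bound and the Gaussian tails of $p_t(\cdot|\ybm)$, but you should state it explicitly.

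\textbf{Trade-off.} Your route avoids the covariance computation and directly produces the distributional solution concept that the uniqueness theorem uses. The paper's route gives a quantitative, uniform Lipschitz bound, which makes the well-posedness step elementary.
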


The proof is  provided in Appendix~\ref{ap_sec:proof_pros1}. The variational statement applies the MMSE characterization of~\cite{gribonval2013reconciling} to the augmented model in~\eqref{eq:joint_obs}. In~\eqref{eq:mmse_variational}, two quadratic terms enforce interpolation and measurement consistency, while $\phi_t$ captures the remaining structure. The minimizer yields the posterior mean and hence the optimal conditional velocity, whose probability flow transports the source distribution to the measurement-conditioned posterior. Because $\phi_t$ is unavailable, the next subsection turns this principled characterization into a practical algorithm.

\subsection{Operator-Aware Conditional Velocity}
\label{sec:param}

Proposition~\ref{prop:conditional_flow} reduces the construction of the optimal conditional velocity to minimizing~\eqref{eq:mmse_variational}. However, the implicit regularizer $\phi_t$ is not available in closed form. We therefore apply HQS to separate the objective into an explicit data-consistency update and a learned update associated with $\phi_t$. Introducing an auxiliary variable $\zbm$ and a quadratic penalty gives
\begin{equation}
\min_{\xbm,\zbm}\frac{1}{2(1-t)^2}\|\xbm_t-t\zbm\|_2^2+\frac{1}{2\sigma^2}\|\ybm-\Abm\xbm\|_2^2+\phi_t(\zbm)+\frac{\mu}{2}\|\xbm-\zbm\|_2^2,
\label{eq:hqs_problem}
\end{equation}
where $\mu>0$ controls the coupling. Alternating minimization over $\xbm$ and $\zbm$ yields
\begin{align}
\xbm^{k+1}&= \argmin_{\xbm}\frac{1}{2\sigma^2}\|\ybm-\Abm\xbm\|_2^2
+\frac{\mu}{2}\|\xbm-\zbm^{k}\|_2^2,
\label{eq:x_update}\\
\zbm^{k+1}&=\argmin_{\zbm}\phi_t(\zbm)
+\frac{1}{2(1-t)^2}\|\xbm_t-t\zbm\|_2^2
+\frac{\mu}{2}\|\xbm^{k+1}-\zbm\|_2^2.
\label{eq:z_update}
\end{align}
The $\xbm$-update explicitly enforces consistency with the measurement model, while the $\zbm$-update combines interpolation consistency with implicit regularization. Because $\phi_t$ is unknown, we implement the $\zbm$-update using a learned estimator. The complete implementation of the algorithm is provided in Appendix~\ref{app:alg_details} and Algorithm~\ref{alg:posterior_mean}.

Let $\zbm_{\thetabm}^{K}(\xbm_t,t,\ybm)$ denote the output after $K$ iterations. The output estimates the measurement-conditioned posterior mean $ \E[\xbm_1|\xbm_t,\ybm]$ and defines the learned velocity
\begin{equation}
    \vbm_{\thetabm}(\xbm_t,t,\ybm)=
    \frac{ \zbm_{\thetabm}^{K}(\xbm_t,t,\ybm)-\xbm_t}{ 1-t}.
    \label{eq:induced_velocity}
\end{equation}
The forward operator therefore enters the learned velocity through the data-consistency update at every iteration.

\subsection{Training the Conditional Velocity Field}\label{sec:training}

Given a clean image $\xbm_1$, we generate a measurement
$\ybm=\Abm\xbm_1+\etabm$, sample
$\xbm_0\sim\Ncal(\bm 0,\Ibm)$ independently, and draw
$t\sim\Ucal[t_{\min},t_{\max}]$. The interpolation state is
\begin{equation}
    \xbm_t=t\xbm_1+(1-t)\xbm_0.
\end{equation}

Algorithm~\ref{alg:posterior_mean} produces the operator-aware estimate
$\zbm_{\thetabm}^K(\xbm_t,t,\ybm)$ of the measurement-conditioned posterior
mean. We use the stabilized time scale
\begin{equation}
    \tau_t=\max(1-t,\tau_{\min}),
\end{equation}
where $t_{\max}<1$ excludes the singular endpoint and $\tau_{\min}>0$ bounds
the velocity scaling. The predicted and target velocities are
\begin{equation}
    \vbm_{\thetabm}(\xbm_t,t,\ybm)
    =
    \frac{\zbm_{\thetabm}^K(\xbm_t,t,\ybm)-\xbm_t}{\tau_t},
    \qquad
    \vbm_{\mathrm{target}}
    =
    \frac{\xbm_1-\xbm_t}{\tau_t}, 
    \label{eq:training_velocities}
\end{equation}
where $\zbm_{\thetabm}^K(\xbm_t,t,\ybm)$ is the posterior mean provided by Algorithm~\ref{alg:posterior_mean}. We train the complete estimator end-to-end with the weighted flow-matching
loss
\begin{equation}
    \Lcal(\thetabm)
    =
    \E_{\xbm_1,\xbm_0,t,\ybm}
    \left[
        \omega(t)
        \left\|
            \vbm_{\thetabm}(\xbm_t,t,\ybm)
            -
            \vbm_{\mathrm{target}}
        \right\|_2^2
    \right],
    \qquad
    \omega(t)
    =
    \frac{1+\tau_t^3}{\tau_t}.
    \label{eq:velocity_loss}
\end{equation}
Algorithm~\ref{alg:training} summarizes the training procedure.

\subsection{Sampling with the Conditional Velocity Field}
\label{sec:sampling}

At inference, we use the theoretical scale $1-t$ rather than the training
floor $\tau_t$. For a state $\xbm_t$, Algorithm~\ref{alg:posterior_mean}
returns $\zbm_{\thetabm}^K(\xbm_t,t,\ybm)$, and the learned conditional
velocity is computed using \eqref{eq:induced_velocity}.
Starting from $\xbm_0\sim\Ncal(\bm 0,\Ibm)$, we integrate
\begin{equation}
    \frac{\dd\xbm_t}{\dd t}
    =
    \vbm_{\thetabm}(\xbm_t,t,\ybm)
\end{equation}
using $N$ explicit Euler steps. With $\Delta t=1/N$ and
$t_i=i\Delta t$, the update is
\begin{align}
    \zbm_{\thetabm,i}^K
    &=
    \zbm_{\thetabm}^K(\xbm_i,t_i,\ybm),
    \\
    \vbm_i
    &=
    \frac{\zbm_{\thetabm,i}^K-\xbm_i}{1-t_i},
    \\
    \xbm_{i+1}
    &=
    \xbm_i+\Delta t\,\vbm_i,
    \qquad i=0,\ldots,N-1.
    \label{eq:euler_sampling}
\end{align}
Since $t_i<1$ for every velocity evaluation, the denominator remains
well-defined. Because the posterior-mean estimator already incorporates the
forward operator, sampling requires no external guidance. Changing $N$ provides
test-time control over the distortion--perception trade-off without retraining.
Algorithm~\ref{alg:sampling} summarizes the sampling procedure.

\section{Experimental Results}
\label{sec:experiments}

\begin{figure*}[t]
\centering
\begin{minipage}[t]{0.48\textwidth}
\begin{algorithm}[H]
\small 
\caption{Training}
\label{alg:training}
\begin{algorithmic}[1]
        \Require $\Abm$, $\sigma$,
        $[t_{\min},t_{\max}]$, and $\tau_{\min}$
        \Repeat
            \State $\xbm_1\sim p_1$, $\xbm_0\sim\Ncal(\bm 0,\Ibm)$,
            $t\sim\Ucal[t_{\min},t_{\max}]$
            \State $\etabm\sim\Ncal(\bm 0,\sigma^2\Ibm)$ , 
            $\ybm\gets\Abm\xbm_1+\etabm$
            \State $\xbm_t\gets t\xbm_1+(1-t)\xbm_0$, $\tau_t= \max(1-t,\tau_{\min})$
            \State $\zbm_{\thetabm}^K\gets
            \Call{PostMean}{\xbm_t,t,\ybm,\Abm}$ \Comment{\colorbox{blue!8}{\scriptsize Alg.~\ref{alg:posterior_mean}}}
            \State $\vbm_{\thetabm}\gets
            (\zbm_{\thetabm}^K-\xbm_t)/\tau_t$
            \State $\vbm_{\mathrm{target}}\gets
            (\xbm_1-\xbm_t)/\tau_t$
            \State Update $\thetabm$ using~\eqref{eq:velocity_loss}
        \Until{convergence}
    \end{algorithmic}
\end{algorithm}
\end{minipage}
\hfill
\begin{minipage}[t]{0.48\textwidth}
\begin{algorithm}[H]
\caption{Sampling}
\small
\label{alg:sampling}
\begin{algorithmic}[1]
        \Require Measurement $\ybm$, forward operator $\Abm$, trained model,
        and step count $N$
        \State Sample $\xbm_0\sim\Ncal(\bm 0,\Ibm)$
        \State $\Delta t\gets1/N$
        \For{$i=0,\ldots,N-1$}
            \State $t_i\gets i\Delta t$
            \State $\zbm_{\thetabm,i}^K\gets
           \Call{PostMean}{\xbm_i,t_i,\ybm,\Abm}$ \Comment{\colorbox{blue!8}{\scriptsize Alg.~\ref{alg:posterior_mean}}}
            \State $\vbm_i\gets
            (\zbm_{\thetabm,i}^K-\xbm_i)/(1-t_i)$
            \State $\xbm_{i+1}\gets\xbm_i+\Delta t\,\vbm_i$
        \EndFor
        \State \Return $\xbm_N$
    \end{algorithmic}
\end{algorithm}
\end{minipage}
\end{figure*}

\begin{figure}[t]
\centering
\includegraphics[width=\textwidth]{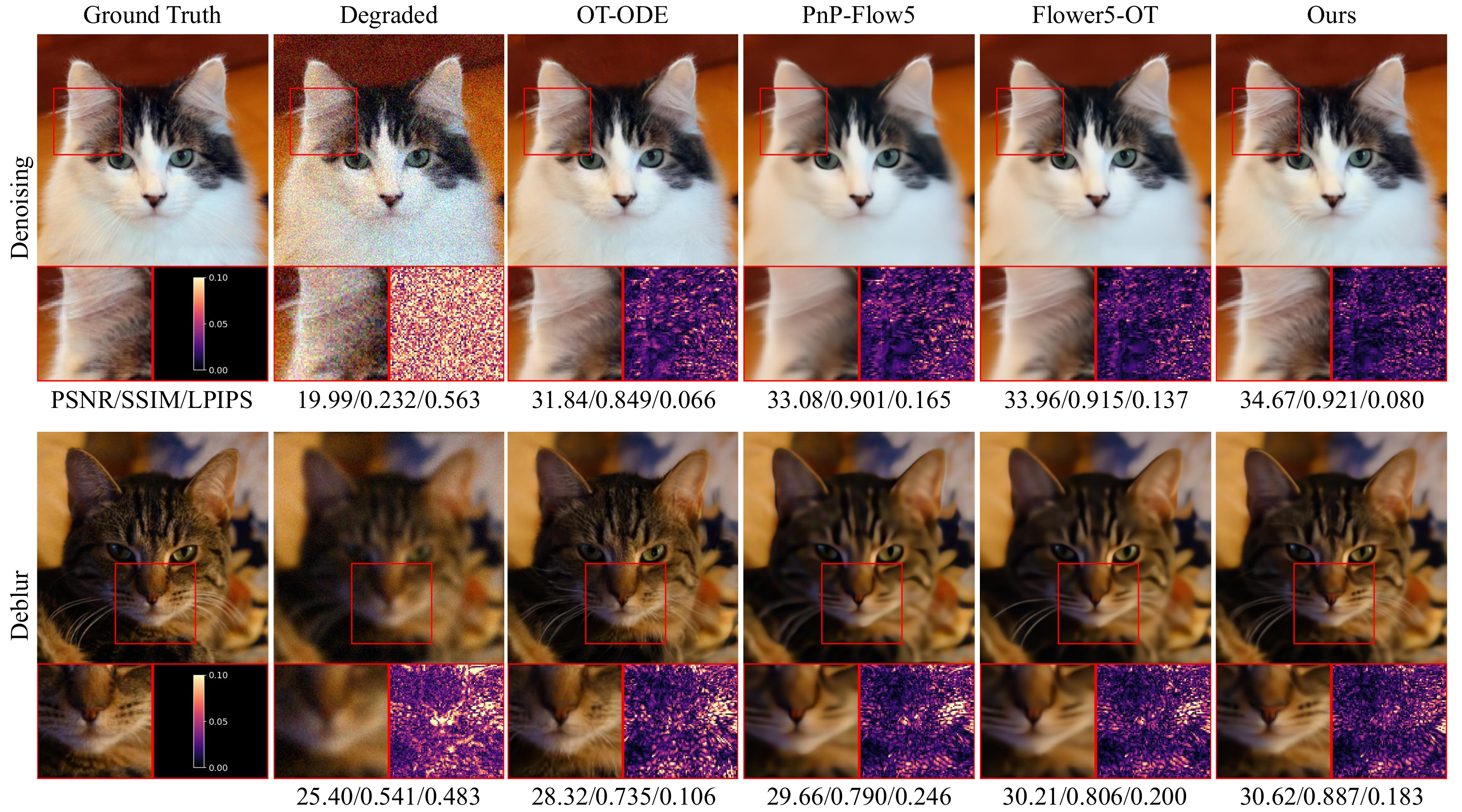}  \vspace{-1.8em}
\caption{Qualitative reconstructions on AFHQ-Cat for denoising (top) and
deblurring (bottom). Values below each panel are PSNR/SSIM/LPIPS. Our method preserves fine image details and produces
faithful reconstructions using only $N{=}2$ sampling steps.}
\label{fig:qual_afhq}
\end{figure}

\begin{table}[t]
\centering
\caption{Quantitative results on 100 CelebA test images. Best and second-best results are highlighted in \bluehl{blue} and \redhl{red}, respectively. Our method achieves the best reconstruction quality while attaining competitive perceptual quality.}
\label{tab:repro_celeba}
\setlength{\tabcolsep}{3.5pt}
\renewcommand{\arraystretch}{1.12}
\resizebox{\textwidth}{!}{
\begin{tabular}{l *{15}{c}}
\toprule
\multirow{2}{*}{\textbf{Method}}
& \multicolumn{3}{c}{\textbf{Denoising}}
& \multicolumn{3}{c}{\textbf{Deblurring}}
& \multicolumn{3}{c}{\textbf{Super-resolution}}
& \multicolumn{3}{c}{\textbf{Random inpainting}}
& \multicolumn{3}{c}{\textbf{Box inpainting}} \\
\cmidrule(lr){2-4}
\cmidrule(lr){5-7}
\cmidrule(lr){8-10}
\cmidrule(lr){11-13}
\cmidrule(lr){14-16}
& PSNR$\uparrow$ & SSIM$\uparrow$ & LPIPS$\downarrow$
& PSNR$\uparrow$ & SSIM$\uparrow$ & LPIPS$\downarrow$
& PSNR$\uparrow$ & SSIM$\uparrow$ & LPIPS$\downarrow$
& PSNR$\uparrow$ & SSIM$\uparrow$ & LPIPS$\downarrow$
& PSNR$\uparrow$ & SSIM$\uparrow$ & LPIPS$\downarrow$ \\
\midrule
Degraded    & 20.00 & 0.348 & 0.371 & 27.81 & 0.740 & 0.125 & 10.25 & 0.183 & 0.827 & 11.95 & 0.196 & 1.041 & 22.26 & 0.743 & 0.213 \\
PnP-GS      & 32.54 & 0.908 & 0.035 & 33.97 & 0.924 & 0.041 & 31.23 & 0.890 & 0.065 & 29.20 & 0.875 & 0.069 & -- & -- & -- \\
PnP-HQS      & 
32.17 & 0.897 & 0.044 & 
34.87 &0.946  & 0.033 & 
32.33 &	0.911 & 0.075 & 
32.81 &	0.945 & 0.023 & -- & -- & -- \\
ISTA-Net     & 
- & - & - & 
35.38 &0.950 & 0.035 & 
33.54 &0.937& 0.038 & 
\redhl{34.70} & 0.960 & 0.022 & -- & -- & -- \\

DiffPIR     & 31.12 & 0.883 & 0.060 & 32.68 & 0.910 & 0.060 & 31.46 & 0.893 & 0.034 & 31.69 & 0.916 & 0.025 & -- & -- & -- \\
I$^2$SB & 
- & - & - & 
33.25 & 0.924& \bluehl{0.012} & 
31.92& 0.891 &\bluehl{0.014} &
32.80 & 0.938 & \bluehl{0.011} & 
28.72 & 0.803 & 0.042\\
OT-ODE      & 30.49 & 0.858 & 0.032 & 32.96 & 0.920 & 0.029 & 31.34 & 0.903 & 0.027 & 28.65 & 0.870 & 0.051 & 29.37 & 0.919 & 0.038 \\
D-Flow      & 26.01 & 0.606 & 0.092 & 31.21 & 0.854 & 0.037 & 30.44 & 0.843 & {0.026} & 33.61 & 0.942 & 0.015 & 30.59 & 0.898 & 0.027 \\
Flow-Priors & 29.31 & 0.767 & 0.135 & 31.51 & 0.857 & 0.056 & 28.36 & 0.715 & 0.100 & 32.87 & 0.943 & 0.019 & 30.06 & 0.859 & 0.048 \\
PnP-Flow1   & 31.75 & 0.904 & 0.044 & 34.48 & 0.936 & 0.039 & 31.04 & 0.901 & 0.045 & 33.02 & 0.944 & 0.018 & 30.45 & 0.933 & 0.037 \\
PnP-Flow5   & 32.24 & 0.910 & 0.056 & 34.80 & 0.941 & 0.046 & 31.44 & 0.905 & 0.056 & 33.95 & 0.953 & 0.022 & 31.06 & 0.939 & 0.043 \\
Flower1-OT  & 32.22 & 0.913 & 0.033 & 34.95 & 0.947 & {0.025} & 32.30 & 0.922 & 0.034 & 33.05 & 0.944 & 0.017 & 31.17 & 0.945 & 0.022 \\
Flower5-OT  & 33.07 & 0.925 & 0.038 & 35.65 & 0.954 & 0.031 & 33.03 & 0.931 & 0.039 & 33.92 & 0.953 & 0.020 & 31.85 & 0.952 & 0.023 \\
\midrule
Ours $(N=2)$    & \bluehl{33.67} & \bluehl{0.932} & 0.034 & 
\bluehl{36.04} & \bluehl{0.957} & 0.029 & 
\bluehl{34.39} & \bluehl{0.949} & 0.028& 
\bluehl{35.03} & \bluehl{0.963} & 0.018 & 
\bluehl{33.25} & \bluehl{0.960} & 0.022 \\

Ours $(N=4)$    & \redhl{33.24} & \redhl{0.928} & 0.034 & 
\redhl{35.78} & \redhl{0.956} & 0.028 & 
\redhl{33.96} & \redhl{0.946} &{0.026} & 
{34.67} &  \redhl{0.962} & 0.016 & 
\redhl{32.57} & \redhl{0.958} & \redhl{0.021} \\

Ours $(N=10)$    & 32.26 & 0.914 & \redhl{0.032} & 
35.03 & 0.950 & \redhl{0.023} & 
33.13 & 0.937 & \redhl{0.022} & 
33.98 & 0.956 & {0.014} & 
31.85 & 0.953 & \bluehl{0.020} \\

Ours $(N=25)$ & 
31.56 & 0.901 & \bluehl{0.028} & 
31.46& 0.848 & 0.026 & 
30.39 & 0.837 & 0.029& 
33.42 & 0.951 & \redhl{0.013} & 
31.33 & 0.947 & \bluehl{0.020} \\
\bottomrule
\end{tabular}}
\end{table}
We evaluate our method on five inverse problems: denoising, deblurring, super-resolution, random-mask inpainting, and box inpainting. Experiments use CelebA~\cite{liu2015celeba} and AFHQ-Cat~\cite{choi2020stargan}, following the datasets and evaluation protocols of~\cite{martin2025pnpflow} and~\cite{pourya2026flower} for a fair comparison.

CelebA contains aligned face images that we center-crop and resize to $128\times128$, using the standard training, validation, and test splits. We resize AFHQ-Cat images to $256\times256$. Because AFHQ-Cat has no official validation split, we follow~\cite{martin2025pnpflow} and reserve 32 test images for validation. All images are normalized to $[-1,1]$.

We compare against state-of-the-art PnP, unfolding, diffusion-based, and flow-based inverse solvers: PnP-GS~\cite{hurault2022gradient}, PnP-HQS (DPIR)~\cite{zhang2021plug}, ISTA-Net~\cite{zhang2018ista}, DiffPIR~\cite{zhu2023denoising}, I$^2$SB~\cite{liu2023i2sb}, OT-ODE~\cite{pokle2024training}, D-Flow~\cite{benhamu2024dflow}, Flow Priors~\cite{zhang2024flow}, PnP-Flow~\cite{martin2025pnpflow}, and Flower~\cite{pourya2026flower}. All baselines except ISTA-Net and I$^2$SB enforce measurement consistency at inference time using a pretrained prior.

\begin{figure}[t]
\centering
\includegraphics[width=\textwidth]{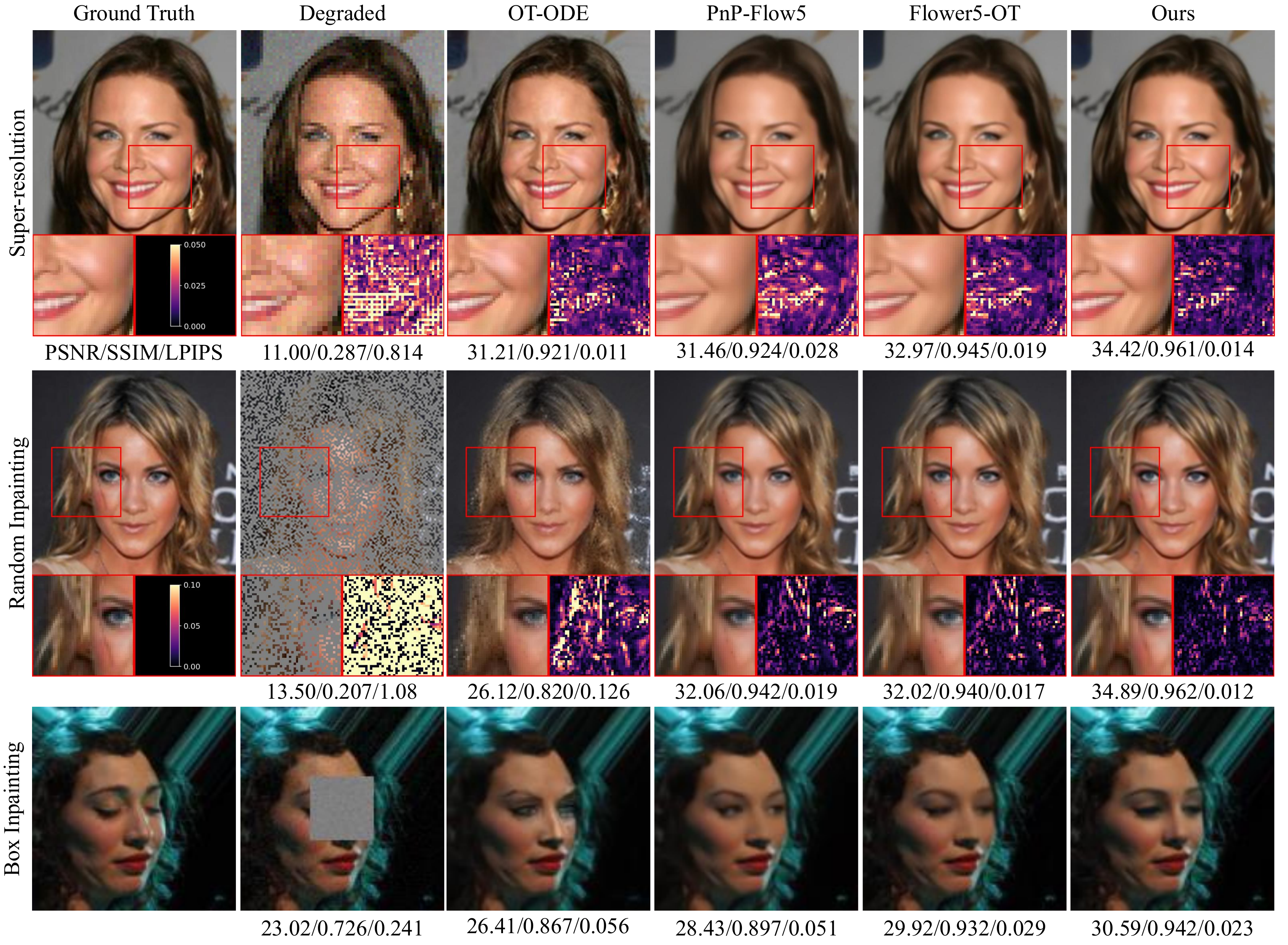}
  \vspace{-1.5em}
\caption{Qualitative reconstructions on CelebA. Values below each panel report PSNR/SSIM/LPIPS. Note that our method recovers details other baselines fail to recover while requiring only $N{=}2$ ODE steps.}
\label{fig:qual_celeba}
\end{figure}

\begin{figure}[t]
    \centering
    \includegraphics[width=\linewidth]{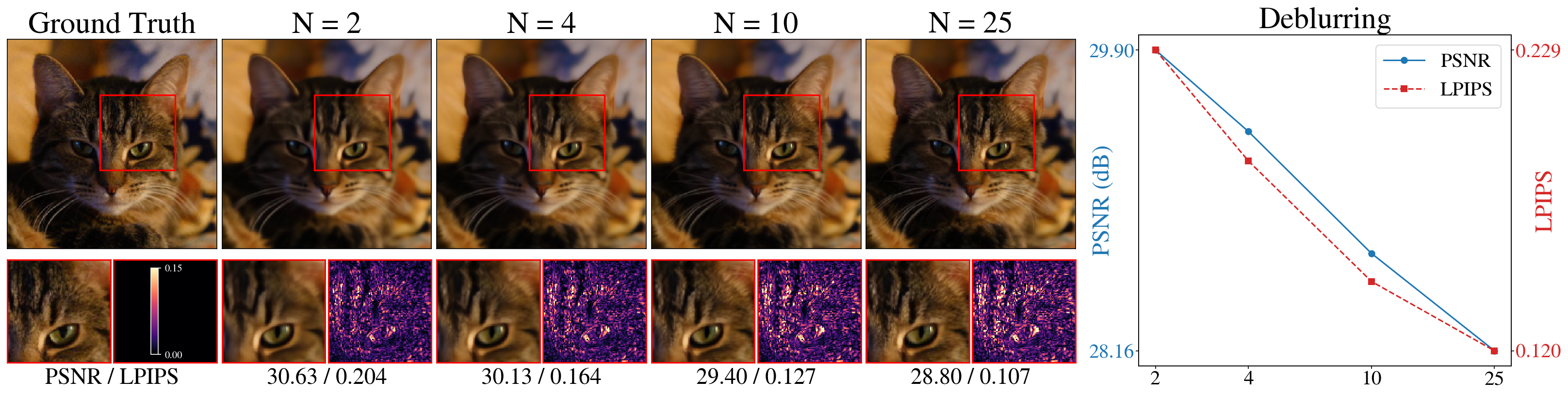}     
    \vspace{-1.5em}
\caption{Effect of the number of ODE steps $N$ on AFHQ-Cat deblurring. Values below each reconstruction report PSNR/LPIPS for the displayed image, while the right panel reports averages over the test set. Increasing $N$ improves perceptual quality at the cost of lower PSNR, allowing a single trained model to control the distortion-perception tradeoff at test time.}
    \label{fig:step_ablation_vis1}
\end{figure}

\subsection{Main Results}\label{sec:main_results}
We evaluate on five restoration tasks and report PSNR ($\uparrow$), SSIM ($\uparrow$), and LPIPS ($\downarrow$) \cite{zhang2018unreasonable}, averaged over 100 test images. We adopt the degradation settings of \cite{martin2025pnpflow} and \cite{pourya2026flower} without modification. Denoising uses Gaussian noise with $\sigma = 0.2$. Deblurring uses a $61 \times 61$ Gaussian kernel, with $\sigma_b = 1.0$ on CelebA and $\sigma_b = 3.0$ on AFHQ-Cat, and $\sigma = 0.05$. Super-resolution uses $2\times$ downsampling on CelebA and $4\times$ on AFHQ-Cat, with $\sigma = 0.05$. Random inpainting masks $70\%$ of pixels with $\sigma = 0.01$, and box inpainting applies a centered mask of $40 \times 40$ on CelebA and $80 \times 80$ on AFHQ-Cat with $\sigma = 0.05$. In tables, ``--'' marks settings where a baseline does not apply.

Tables~\ref{tab:repro_celeba} and~\ref{tab:repro_afhq_cat} report results on CelebA and AFHQ-Cat. Our method ranks first or second in most settings. Varying the number of sampling steps controls the distortion-perception trade-off, where fewer steps favor PSNR and SSIM, whereas additional steps improve LPIPS, allowing one trained model to span both regimes without retraining. Figures~\ref{fig:qual_afhq} and~\ref{fig:qual_celeba} show representative reconstructions. Following~\cite{pourya2026flower}, we also evaluate ours and the faster baselines on \emph{larger testsets} in
Tables~\ref{tab:repro_celeba1000} and ~\ref{tab:repro_afhq_cat400}.

\begin{table}[t]
\centering
\caption{Quantitative results on 100 AFHQ-Cat test images. Best and second-best results are highlighted in \bluehl{blue} and \redhl{red}, respectively. Our method achieves the best reconstruction quality while attaining the best or competitive perceptual quality.}
\setlength{\tabcolsep}{3.5pt}
\renewcommand{\arraystretch}{1.12}
\resizebox{\textwidth}{!}{
\begin{tabular}{l *{15}{c}}
\toprule
\multirow{2}{*}{\textbf{Method}}
& \multicolumn{3}{c}{\textbf{Denoising}}
& \multicolumn{3}{c}{\textbf{Deblurring}}
& \multicolumn{3}{c}{\textbf{Super-resolution}}
& \multicolumn{3}{c}{\textbf{Random inpainting}}
& \multicolumn{3}{c}{\textbf{Box inpainting}} \\
\cmidrule(lr){2-4}
\cmidrule(lr){5-7}
\cmidrule(lr){8-10}
\cmidrule(lr){11-13}
\cmidrule(lr){14-16}
& PSNR$\uparrow$ & SSIM$\uparrow$ & LPIPS$\downarrow$
& PSNR$\uparrow$ & SSIM$\uparrow$ & LPIPS$\downarrow$
& PSNR$\uparrow$ & SSIM$\uparrow$ & LPIPS$\downarrow$
& PSNR$\uparrow$ & SSIM$\uparrow$ & LPIPS$\downarrow$
& PSNR$\uparrow$ & SSIM$\uparrow$ & LPIPS$\downarrow$ \\
\midrule
Degraded & 20.00 & 0.293 & 0.529 & 24.38 & 0.529 & 0.436 & 11.59 & 0.212& 0.867 & 13.23 & 0.213 & 1.082 & 21.52 & 0.727& 0.215 \\
PnP-GS & \redhl{33.00} & 0.895 & \redhl{0.078} & 28.39 & 0.787 & 0.387 & 24.44 & 0.639 & 0.411 & 29.82 & 0.844 & 0.143 & - & - & - \\
PnP-HQS & 
32.69 & 0.896 & 0.108 & 
29.34 & 0.793 & 0.362 & 
26.58 & 0.724 & 0.441 & 
33.07 &	0.916 & 0.051 &
 - & - & - \\
ISTA-Net & 
- & - & - & 
29.36 &  0.790 & 0.342 & 
28.22 & 0.784 & 0.297 & 
34.10 &  0.929 &  0.049& 
- & - & - \\

DiffPIR & 31.05 & 0.839 & 0.186 & 28.24 & 0.747 & 0.319 & 24.24 & 0.650 & 0.385 & 32.31 & 0.886 & 0.057 & - & - & - \\
I$^2$SB & - & - & - & 
27.72 & 0.725 & \bluehl{0.077}  & 
27.19 &  0.739 & \bluehl{0.070}&
31.79 & 0.879 & \bluehl{0.026} &
27.32 & 0.763 & 0.072 \\
OT-ODE & 30.48 & 0.818 & 0.081 & 27.82 & 0.735 &{0.126} & 26.71 & 0.737 & \redhl{0.108} & 29.99 & 0.849 & 0.087 & 24.47 & 0.873 & 0.095 \\
D-Flow & 26.44 & 0.573 & 0.180 & 28.60 & 0.746 & 0.167 & 25.20 & 0.616 & 0.188 & 32.79 & 0.898 & 0.044 & 27.02 & 0.840 & 0.081 \\
Flow-Priors & 29.67 & 0.756 & 0.165 & 27.28 & 0.726 & 0.188 & 23.95 & 0.566 & 0.274 & 32.95 & 0.909 & 0.051 & 26.13 & 0.809 & 0.130 \\
PnP-Flow1 & 31.62 & 0.866 & 0.139 & 28.71 & 0.779 & 0.285 & 27.56 & 0.780 & 0.158 & 33.71 & 0.923 & 0.038 & 26.45 & 0.895 & 0.111 \\
PnP-Flow5 & 31.87 & 0.868 & 0.170 & 29.01 & 0.785 & 0.312 & 28.01 & 0.791 & 0.167 & \redhl{34.44} & \redhl{0.932} & 0.045 & 27.14 & 0.900 & 0.127 \\
Flower1-OT & 32.12 & 0.881 & 0.113 & 29.38 & 0.793 &0.237 & 26.76& 0.759 & 0.262 & 33.68 & 0.922 & 0.042 & 26.67 & 0.914 & 0.065\\
Flower5-OT & 32.75 & 0.892 & 0.126 & \redhl{29.73} & \redhl{0.801} &0.264 & 27.09 & 0.767& 0.272 & 34.36 & {0.931}& 0.048 & 27.32 & \redhl{0.922} & 0.067\\
\midrule
Ours $(N=2)$ & \bluehl{33.31}&\bluehl{0.904}&0.081& \bluehl{29.89} & \bluehl{0.809} & 0.229&
\bluehl{29.10} & \bluehl{0.816} & 0.167& 
\bluehl{34.71} & \bluehl{0.936} & 0.043& 
\bluehl{29.27} & \bluehl{0.924} & 0.060\\

Ours $(N=4)$ & 32.93&\redhl{0.898}&0.079& 
29.42 & 0.796 & 0.188 &
\redhl{28.50}&\redhl{0.801}& 0.141& 
34.30 & 0.932& 0.039& 
\redhl{28.65} & 0.921 & \redhl{0.052}\\

Ours $(N=10)$&31.96 &  0.879 &  \bluehl{0.072}& 
28.72 & 0.775 & 0.145 &
27.75&0.777& {0.115}& 
33.53&0.922&{0.033}&
27.94&0.916&\bluehl{0.047}\\

Ours $(N=25)$ & 29.69&  0.776&  0.081 & 
28.16 & 0.754 & \redhl{0.119} &
26.54&0.676& {0.121}& 
32.91 & 0.912 & \redhl{0.030} & 
26.76 & 0.819 & {0.077} \\
\bottomrule
\end{tabular}}
\label{tab:repro_afhq_cat}
\end{table}
\subsection{Inference Cost and Scalability}
\label{sec:inference-cost}

Table~\ref{tab:efficiency} reports inference time, peak GPU memory, and number of function evaluations (NFEs) for CelebA deblurring, averaged over 10 images on a single NVIDIA RTX A6000. Our inference cost is $O(NK)$, where $N$ is the number of ODE steps and $K{=}5$ is the number of inner HQS iterations; thus, the $N{=}2$ configuration requires 10 NFEs. Although its per-NFE cost is comparable to PnP-Flow and Flower, our method requires $50\times$ fewer evaluations. At the configuration achieving the best distortion metrics, it reconstructs an image in $0.24$\,s, compared with $13.03$\,s for Flower5-OT. This speedup stems from learning data consistency during training rather than enforcing it at inference.
Our method uses more peak memory than PnP-Flow and Flower because it retains intermediate iterates, but substantially less than other baselines.

\begin{table}[t]
\centering
\caption{Computation time, memory usage, and number of function evaluations (NFEs) for the
deblurring task on CelebA ($128\times128$). Lower is better for all metrics. \textbf{Best} and \textbf{second} best results are shown in \bluehl{blue} and \redhl{red} .}
\label{tab:efficiency}
\setlength{\tabcolsep}{4pt}
\resizebox{\textwidth}{!}{
\begin{tabular}{l ccccccc cccc}
\toprule
& \multicolumn{7}{c}{\textbf{Baselines}} & \multicolumn{4}{c}{\textbf{Ours}} \\
\cmidrule(lr){2-8} \cmidrule(lr){9-12}
& OT-ODE & D-Flow & Flow Priors & PnP-Flow$1$ & PnP-Flow$5$ & Flower$1$ & Flower$5$
& $N{=}2$ & $N{=}4$ & $N{=}10$ & $N{=}25$ \\
\midrule
Time (s) $\downarrow$          & 6.37 & 299.25 & 41.27 & 2.33 & 11.46 & 2.67 & 13.03
                               & \bluehl{0.24} & \redhl{0.50} & 1.21 & 3.13 \\
Peak mem (MB) $\downarrow$     & 743  & 6124   & 2728  & \bluehl{192}  & \bluehl{192}   & \redhl{193}  & \redhl{193}
                               & 331  & 331  & 331  & 331 \\
NFEs $\downarrow$              & 180  & --     & 100   & 100  & 500   & 100  & 500
                               & \bluehl{10}  & \redhl{20}   & 50   & 125 \\
\bottomrule
\end{tabular}}
\end{table}

\subsection{Ablation Studies}\label{sec:ablations}

\textbf{Number of sampling steps $N$.}
Figure~\ref{fig:step_ablation_vis1} shows how the number of sampling steps affects reconstruction quality. Across most tasks, increasing $N$ lowers PSNR but improves LPIPS. Coarse integration produces posterior-mean-like reconstructions, while finer integration moves toward perceptually realistic samples. Thus, $N$ provides test-time control over the distortion--perception trade-off using a single trained model.

\textbf{Iteration depth $K$.}
Table~\ref{tab:ablation_K} evaluates $K\in\{3,5,7\}$ on random inpainting with $N{=}2$. Increasing $K$ improves PSNR, while LPIPS is minimized at $K{=}5$. Since inference cost scales as $O(NK)$, we use $K{=}5$ throughout, balancing reconstruction quality with approximately $70\%$ of the cost of $K{=}7$.

\begin{table}[t]
\centering
\small
\setlength{\tabcolsep}{4pt}
\renewcommand{\arraystretch}{0.9}
\begin{minipage}[t]{0.44\textwidth}
\centering
\caption{Iteration depth $K$}
\label{tab:ablation_K}
\begin{tabular}{ccccc}
\toprule
$K$ & PSNR $\uparrow$ & SSIM $\uparrow$ & LPIPS $\downarrow$ & ms $\downarrow$ \\
\midrule
3 & 34.63 & 0.959 & 0.020 & 137\\
5 & 35.03 &  0.963 &  0.018 & 223 \\
7 & 35.21 & 0.964 & 0.019 & 322 \\
\bottomrule
\end{tabular}
\end{minipage}
\hfill
\begin{minipage}[t]{0.55\textwidth}
\centering
\caption{Noise-level mismatch.}
\label{tab:ablation_noise}
\resizebox{\linewidth}{!}{%
\begin{tabular}{lcccccc}
\toprule
\multirow{2}{*}{Method} & \multicolumn{3}{c}{$\sigma_{\text{test}}{=}0.005$} & \multicolumn{3}{c}{$\sigma_{\text{test}}{=}0.02$} \\
\cmidrule(lr){2-4}\cmidrule(lr){5-7}
& PSNR & SSIM & LPIPS & PSNR & SSIM & LPIPS \\
\midrule
Flower5-OT & 34.02 & 0.954 & 0.019 & 33.65 & 0.947 & 0.024 \\
Ours & 35.19 & 0.965 & 0.018 & 34.57 & 0.954 & 0.020 \\
\bottomrule
\end{tabular}}
\end{minipage}
\end{table}

\textbf{Robustness to noise-level and operator mismatch.}
Training with a fixed forward model ties the learned field to the operator $\Abm$ and noise level $\sigma$. We evaluate robustness to this mismatch in Tables~\ref{tab:ablation_operator} and~\ref{tab:ablation_noise}. Table~\ref{tab:ablation_operator} applies a model trained with an inpainting mask ratio of $0.7$ to ratios ranging from $0.5$ to $0.8$. At test time, the data-consistency step uses the actual test mask, while the learned regularizer and step-size parameters remain fixed at their $p{=}0.7$ values. Performance changes smoothly across this range and improves as fewer pixels are masked, indicating graceful transfer rather than failure. Because we do not train a matched model at each ratio, this experiment measures robustness but does not quantify the mismatch penalty. Table~\ref{tab:ablation_noise} evaluates a model trained at $\sigma{=}0.01$ on other noise levels. Despite this mismatch, our model outperforms the operator-agnostic Flower5-OT on all three metrics at both tested noise levels.

\par
\begin{wraptable}[13]{R}{0.48\columnwidth}
\vspace{-2em}
\centering
\small
\setlength{\tabcolsep}{2pt}
\renewcommand{\arraystretch}{0.95}
\caption{Sensitivity to operator mismatch in random inpainting. A model
trained at mask ratio $p{=}0.7$ ($\dagger$) is evaluated at different
test-time mask ratios without retraining. Although reconstruction quality
decreases as more pixels are removed, performance does not exhibit abrupt
degradation under mismatch.}
\label{tab:ablation_operator}
\begin{tabular}{lccc}
\toprule
$p$ & PSNR $\uparrow$ & SSIM $\uparrow$ & LPIPS $\downarrow$ \\
\midrule
0.5 & 38.26 & 0.978 & 0.008 \\
0.6 & 36.95 & 0.973 & 0.012 \\
0.7$^\dagger$ & 35.03 & 0.963 & 0.018 \\
0.8 & 32.10 & 0.936 & 0.034 \\
\bottomrule
\end{tabular}
\end{wraptable}

\subsection{Additional Results}
Appendix~\ref{sec_app:add_res} provides additional quantitative and qualitative evaluations. Tables~\ref{tab:repro_celeba1000} and~\ref{tab:repro_afhq_cat400} report the results on larger test sets of 400 AFHQ-Cat and 1000 CelebA images, including standard deviations. The appendix also examines the distortion--perception trade-off (Figures~\ref{fig:pd_celeba} and~\ref{fig:pd_afhq}), reconstruction trajectories (Figures~\ref{fig:celeba_inpaint_higher_N}-\ref{fig:celeba_sr_higher_N}), error maps (Figure~\ref{fig:all_error}), inference cost (Figure~\ref{fig:pareto_celeba}), operator mismatch (Figure~\ref{fig:mismatch}), internal solver iterates (Figure~\ref{fig:trajectory_all}), and posterior sample diversity (Figure~\ref{fig:posterior_samples}). Appendix~\ref{app_sec:cs} extends the evaluation to subsampled Fourier measurements, while Appendix~\ref{app:toy} studies posterior sampling and conditional-mean estimation on a toy problem.

\section{Conclusion}

We introduced FIRM, a measurement-conditional flow-matching method that incorporates explicit data-consistency updates into each evaluation of the learned velocity field. We characterized the measurement-conditioned posterior mean as the minimizer of a variational objective and proved that the velocity defined by this minimizer transports the source distribution to the posterior. Across five restoration tasks and two datasets, the method achieves leading distortion metrics at a fraction of the inference cost of competing flow-based solvers. Varying the number of sampling steps also controls the distortion-perception trade-off without retraining.

\textbf{Limitations.}
Our theoretical analysis assumes a known linear forward model with additive Gaussian noise. Extending the framework to nonlinear measurement processes and more general noise models is an important direction for future work. We also focus on linear interpolation and explicit Euler sampling; investigating alternative probability paths and numerical solvers may further improve sampling efficiency. Finally, our experiments concentrate on two-dimensional image restoration, and applications to other imaging modalities remain to be explored.
\section*{Acknowledgments}
This work was supported in part by the National Science Foundation under CAREER award \mbox{CCF-2625643} and award \mbox{CCF-2622128}.

\bibliography{utils/refs}
\bibliographystyle{utils/IEEEbib}

\appendix
\newpage
\section{Theoretical Results Proofs}\label{app:proofs}

\paragraph{Proof roadmap.}
The proof follows the same two-stage construction as the method. First, we reduce the joint observation $(\xbm_t,\ybm)$ to an equivalent Gaussian denoising problem. The Gaussian MMSE characterization of \cite{gribonval2013reconciling} then gives the variational form of the posterior mean. Second, we show that this mean determines the conditional velocity, that the velocity satisfies the conditional continuity equation, and that its ODE transports the Gaussian source toward the posterior. The lemmas below verify these steps separately, and the final proof combines
them. 

\subsection{Variational characterization}

The Gaussian MMSE characterization connects posterior-mean estimation to
variational optimization. For a signal observed in Gaussian noise, the
posterior mean is the unique minimizer of a quadratic data-fitting term plus
an implicit regularizer determined by the signal distribution and noise
covariance. We will use this result to express the measurement-conditioned
posterior mean as the solution of an optimization problem.

\begin{lemma}[Gaussian MMSE characterization {\cite[Cor.~1]{gribonval2013reconciling}}]
\label{lem:mmse_characterization}
Let $\zbm$ have a non-degenerate distribution on $\R^n$, and let
\begin{equation}
    \ubm=\zbm+\bm{\xi},
    \qquad
    \bm{\xi}\sim\Ncal(\bm 0,\Sigmabm),
    \qquad
    \Sigmabm\succ0,
\end{equation}
where $\bm{\xi}$ is independent of $\zbm$. Then there exists a function
$\phi:\R^n\rightarrow\R\cup\{+\infty\}$ such that
\begin{equation}
    \E[\zbm\mid\ubm]
    =
    \argmin_{\zbm'\in\R^n}
    \left\{
        \frac{1}{2}
        \|\ubm-\zbm'\|_{\Sigmabm^{-1}}^2
        + \phi(\zbm')
    \right\}.
    \label{eq:gm_characterization}
\end{equation}
The minimizer is the unique global minimizer and stationary point, and
$\phi$ is unique up to an additive constant.
\end{lemma}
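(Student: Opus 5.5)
The plan is to prove Lemma~\ref{lem:mmse_characterization} by writing the posterior mean as the gradient of a strictly convex potential, and then taking $\phi$ from the Fenchel conjugate of that potential.

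\textbf{Step 1: Tweedie's formula.} Let $p_{\ubm}=p_{\zbm}*\Ncal(\zbm,\Sigmabm)$. This density is smooth and strictly positive. Define $\psi(\ubm)=\E[\zbm\mid\ubm]$. Differentiating the Gaussian convolution gives
\begin{equation*}
\psi(\ubm)=\ubm+\Sigmabm\nabla\log p_{\ubm}(\ubm),
\end{equation*}
and hence $\Sigmabm^{-1}\psi=\nabla A$ with $A(\ubm)=\tfrac12\ubm^\Tsf\Sigmabm^{-1}\ubm+\log p_{\ubm}(\ubm)$.

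\textbf{Step 2: Strict convexity of $A$.} Differentiating once more gives
\begin{equation*}
\nabla^2A(\ubm)=\Sigmabm^{-1}\,\mathrm{Cov}(\zbm\mid\ubm)\,\Sigmabm^{-1}.
\end{equation*}
The posterior of $\zbm$ given $\ubm$ is the prior reweighted by an everywhere-positive Gaussian factor. If the prior is non-degenerate, meaning it is not supported on a proper affine subspace, then neither is the posterior. Therefore $\mathrm{Cov}(\zbm\mid\ubm)\succ0$ and $A$ is strictly convex and $C^\infty$. It follows that $\psi$ is injective with everywhere-invertible Jacobian. By the inverse function theorem, its image $\mathcal{I}=\psi(\R^n)$ is open and connected, and $\psi^{-1}:\mathcal{I}\to\R^n$ is smooth.

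\textbf{Step 3: Construction of $\phi$.} Let $A^*$ be the convex conjugate of $A$. Set
\begin{equation*}
\phi(\zbm')=A^*(\Sigmabm^{-1}\zbm')-\tfrac12\zbm'^\Tsf\Sigmabm^{-1}\zbm' \quad\text{for } \zbm'\in\mathcal{I},
\end{equation*}
and $\phi=+\infty$ elsewhere. Expanding the objective gives
\begin{equation*}
\tfrac12\|\ubm-\zbm'\|^2_{\Sigmabm^{-1}}+\phi(\zbm')=\tfrac12\ubm^\Tsf\Sigmabm^{-1}\ubm+\bigl[A^*(\Sigmabm^{-1}\zbm')-\langle\ubm,\Sigmabm^{-1}\zbm'\rangle\bigr].
\end{equation*}
By Fenchel--Young, the bracket is at least $-A(\ubm)$. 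Equality holds iff $\Sigmabm^{-1}\zbm'=\nabla A(\ubm)$, that is, iff $\zbm'=\psi(\ubm)$, so the posterior mean is the unique global minimizer.

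For the stationarity claim, use that $A$ is strictly convex and smooth. Then $A^*$ is differentiable on $\Sigmabm^{-1}\mathcal{I}$ with $\nabla A^*=(\nabla A)^{-1}$. Hence on $\mathcal{I}$ the gradient of the objective is $\Sigmabm^{-1}\bigl(\psi^{-1}(\zbm')-\ubm\bigr)$. It vanishes only at $\zbm'=\psi(\ubm)$, which gives the unique stationary point.

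\textbf{Step 4: Uniqueness of $\phi$ up to a constant.} Suppose another $\tilde\phi$ has the same property. Restrict attention to $\mathcal{I}$, where the objective is differentiable. The first-order condition at each $\ubm$ forces
\begin{equation*}
\nabla\tilde\phi(\psi(\ubm))=\Sigmabm^{-1}\bigl(\ubm-\psi(\ubm)\bigr)=\nabla\phi(\psi(\ubm)).
\end{equation*}
Thus $\nabla(\tilde\phi-\phi)=0$ on the connected open set $\mathcal{I}$, so $\tilde\phi-\phi$ is constant there.

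\textbf{Main obstacle.} I expect the delicate points to be in Steps 2 and 4. In Step 2, the non-degeneracy hypothesis must really yield a positive definite posterior covariance, and in particular $p_{\zbm}$ may have no density. In Step 4, the behavior of $\phi$ off $\mathcal{I}$ is not fixed by the minimizer property alone. I would handle this by proving uniqueness on $\mathcal{I}$ and adopting the convention, as in \cite{gribonval2013reconciling}, that $\phi=+\infty$ outside $\mathcal{I}$.
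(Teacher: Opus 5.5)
Your proposal is correct, and it takes a different route from the paper only in that the paper gives no argument of its own. It imports Corollary~1 of \cite{gribonval2013reconciling} as a black box and uses only its existence and uniqueness conclusions. You instead reconstruct the result. Your mechanism is essentially the one underlying that corollary: Tweedie's identity $\psi=\Sigmabm\nabla A$, the Hessian $\nabla^2A=\Sigmabm^{-1}\mathrm{Cov}(\zbm\mid\ubm)\Sigmabm^{-1}$, and $\phi$ built from $A^*$. At $\zbm'=\psi(\ubm)$ your $\phi$ reduces to the familiar closed form $\phi(\psi(\ubm))=-\tfrac12\|\ubm-\psi(\ubm)\|^2_{\Sigmabm^{-1}}-\log p_{\ubm}(\ubm)$.

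The citation buys brevity. Your derivation exposes hypotheses that the paper's restatement leaves implicit. First, non-degeneracy must mean ``not supported on a proper affine subspace'', which the paper never states. Second, your worry about $\zbm$ lacking a density is unnecessary. The argument works at the level of measures, and the Gaussian factor gives the posterior finite moments of all orders for any prior, so $\mathrm{Cov}(\zbm\mid\ubm)$ exists and is positive definite. Third, your caveat on uniqueness genuinely corrects the restatement: $\phi$ is pinned down only on $\mathcal{I}=\psi(\R^n)$. For example, take $\zbm$ uniform on $\{-1,1\}$ with unit noise, so $\psi=\tanh$ and $\mathcal{I}=(-1,1)$. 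Then $\phi(\pm1)$ can be any finite value no smaller than the endpoint limit of $\phi$ without breaking either the unique-minimizer or the unique-stationary-point property. Global uniqueness therefore needs the $+\infty$ convention.

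One small tightening in Step~4: differentiability of a competitor $\tilde\phi$ on $\mathcal{I}$ is not automatic. It follows from the stationary-point hypothesis, since every point of $\mathcal{I}$ equals $\psi(\ubm)$ for some $\ubm$. Alternatively, the global-minimizer property alone suffices. With $g(\bm w)=\tilde\phi(\Sigmabm\bm w)+\tfrac12\bm w^\Tsf\Sigmabm\bm w$, that property says $(\nabla A)^{-1}(\bm w)$ is the slope of a global affine minorant of $g$ touching at each $\bm w\in\Sigmabm^{-1}\mathcal{I}$. Sandwiching the minorants at $\bm w$ and $\bm w+\bm h$ shows $g$ is differentiable there with gradient $(\nabla A)^{-1}$, the same as $A^*$.
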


\begin{proof}
This is Corollary~1 of~\cite{gribonval2013reconciling}, restated using
$\|\abm\|_{\mathbf B}^2=\abm^\top\mathbf B\abm$. We use only the existence and
uniqueness conclusions in~\eqref{eq:gm_characterization}.
\end{proof}

The interpolation state $\xbm_t$ and measurement $\ybm$ are both noisy
linear observations of the clean image $\xbm_1$. Because their noise terms
are independent, the two observations can be stacked into a single linear
Gaussian model with block-diagonal noise covariance.

\begin{lemma}
\label{lem:augmented_observation}
For $t\in(0,1)$, define
\begin{equation}
    \widetilde{\ybm}
    =
    \begin{bmatrix}
        \xbm_t\\
        \ybm
    \end{bmatrix},
    \qquad
    \widetilde{\Abm}
    =
    \begin{bmatrix}
        t\Ibm\\
        \Abm
    \end{bmatrix},
    \qquad
    \widetilde{\Sigmabm}
    =
    \begin{bmatrix}
        (1-t)^2\Ibm & \bm 0\\
        \bm 0 & \sigma^2\Ibm
    \end{bmatrix}.
    \label{eq:proof_augmented_terms}
\end{equation}
Then
\begin{equation}
    \widetilde{\ybm}
    =
    \widetilde{\Abm}\xbm_1+\widetilde{\bbm},
    \qquad
    \widetilde{\bbm}
    \sim
    \Ncal(\bm 0,\widetilde{\Sigmabm}),
    \label{eq:proof_augmented_model}
\end{equation}
and $\widetilde{\bbm}$ is independent of $\xbm_1$.
\end{lemma}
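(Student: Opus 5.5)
The plan is to verify the three claims of Lemma~\ref{lem:augmented_observation} in turn: the algebraic identity, the Gaussian law of $\widetilde{\bbm}$ with block-diagonal covariance, and independence from $\xbm_1$. Throughout we use the generative assumptions of the training setup in Section~\ref{sec:training}. These are $\xbm_1\sim p_1$, $\xbm_0\sim\Ncal(\bm 0,\Ibm)$ and $\etabm\sim\Ncal(\bm 0,\sigma^2\Ibm)$, drawn mutually independently, together with $\xbm_t=t\xbm_1+(1-t)\xbm_0$ and $\ybm=\Abm\xbm_1+\etabm$.

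\textbf{Identity.} Set $\widetilde{\bbm}=\bigl[(1-t)\xbm_0;\ \etabm\bigr]$. Stacking the two defining equations gives
\begin{equation*}
\widetilde{\ybm}=\begin{bmatrix} t\xbm_1+(1-t)\xbm_0\\ \Abm\xbm_1+\etabm\end{bmatrix}=\widetilde{\Abm}\xbm_1+\widetilde{\bbm},
\end{equation*}
which holds immediately by block matrix multiplication.

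\textbf{Distribution.} Since $\xbm_0$ and $\etabm$ are independent Gaussians, the pair $(\xbm_0,\etabm)$ is jointly Gaussian. The vector $\widetilde{\bbm}=\operatorname{diag}\bigl((1-t)\Ibm,\Ibm\bigr)[\xbm_0;\etabm]$ is a linear image of this pair, so it is Gaussian with mean zero. Its covariance is $\operatorname{diag}\bigl((1-t)^2\Ibm,\sigma^2\Ibm\bigr)=\widetilde{\Sigmabm}$. The off-diagonal blocks vanish because $\E[(1-t)\xbm_0\etabm^\top]=(1-t)\E[\xbm_0]\E[\etabm]^\top=\bm 0$. For $t\in(0,1)$ and $\sigma>0$ we also get $\widetilde{\Sigmabm}\succ0$, which is the form needed later when Lemma~\ref{lem:mmse_characterization} is applied.

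\textbf{Independence.} The vector $\widetilde{\bbm}$ is a measurable function of $(\xbm_0,\etabm)$ alone, and $(\xbm_0,\etabm)$ is independent of $\xbm_1$ by mutual independence of the three draws. Hence $\widetilde{\bbm}$ is independent of $\xbm_1$. Note that this does not require $p_1$ to be Gaussian. There is no real obstacle here. The only point needing care is stating explicitly the modeling assumption that the source noise $\xbm_0$ is drawn independently of both $\xbm_1$ and the measurement noise $\etabm$. That assumption is exactly what makes $\xbm_t$ and $\ybm$ conditionally independent given $\xbm_1$, and what makes the covariance block-diagonal.
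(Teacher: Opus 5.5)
Your proposal is correct and follows the paper's proof essentially verbatim. Both define $\widetilde{\bbm}=[(1-t)\xbm_0;\,\etabm]$ from the interpolation and measurement equations, then read off the block-diagonal Gaussian law and the independence from $\xbm_1$ using the mutual independence of $\xbm_0$, $\etabm$, and $\xbm_1$; you only spell out the joint-Gaussianity and zero cross-covariance steps that the paper leaves implicit.
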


\begin{proof}
The interpolation equation gives $\xbm_t=t\xbm_1+(1-t)\xbm_0$,
and the measurement equation gives $\ybm=\Abm\xbm_1+\etabm$. Therefore,
\begin{equation}
    \widetilde{\bbm}
    =
    \begin{bmatrix}
        (1-t)\xbm_0\\
        \etabm
    \end{bmatrix}.
\end{equation}
Because $\xbm_0\sim\Ncal(\bm 0,\Ibm)$ and
$\etabm\sim\Ncal(\bm 0,\sigma^2\Ibm)$ are independent of each other and of
$\xbm_1$, the stated covariance and independence follow.
\end{proof}

The augmented model is linear Gaussian, but it is not yet in the additive
denoising form required by Lemma~\ref{lem:mmse_characterization}. We combine
the interpolation state and measurement according to their noise levels to
form $\ubm_t$. The resulting variable is equivalent to observing $\xbm_1$
directly under additive Gaussian noise, with precision matrix $\Mbm_t$.
\begin{lemma}
\label{lem:reduction}
Let
\begin{equation}
    \Mbm_t
    =
    \widetilde{\Abm}^{\top}
    \widetilde{\Sigmabm}^{-1}
    \widetilde{\Abm}
    =
    \frac{t^2}{(1-t)^2}\Ibm
    +
    \frac{1}{\sigma^2}\Abm^\top\Abm
    \label{eq:proof_precision}
\end{equation}
and define
\begin{equation}
    \ubm_t
    =
    \Mbm_t^{-1}
    \widetilde{\Abm}^{\top}
    \widetilde{\Sigmabm}^{-1}
    \widetilde{\ybm}.
    \label{eq:proof_sufficient_statistic}
\end{equation}
Then $\Mbm_t\succ0$ and
\begin{equation}
    \ubm_t=\xbm_1+\bm{\xi}_t,
    \qquad
    \bm{\xi}_t\sim\Ncal(\bm 0,\Mbm_t^{-1}),
    \label{eq:proof_denoising_model}
\end{equation}
where $\bm{\xi}_t$ is independent of $\xbm_1$.
\end{lemma}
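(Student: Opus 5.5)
The argument is a direct computation built on Lemma~\ref{lem:augmented_observation}. We first verify the closed form of $\Mbm_t$ and its positive definiteness. Next we substitute the augmented model into the definition of $\ubm_t$, which splits $\ubm_t$ into $\xbm_1$ plus a linear image of $\widetilde{\bbm}$. Finally we read off the law of that linear image.

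\emph{Form of $\Mbm_t$.} Since $\widetilde{\Sigmabm}^{-1}=\operatorname{diag}\bigl((1-t)^{-2}\Ibm,\sigma^{-2}\Ibm\bigr)$, block multiplication gives
\begin{equation*}
\widetilde{\Abm}^\top\widetilde{\Sigmabm}^{-1}\widetilde{\Abm}=\frac{t^2}{(1-t)^2}\Ibm+\frac{1}{\sigma^2}\Abm^\top\Abm .
\end{equation*}
For $t\in(0,1)$ the first term is a strictly positive multiple of $\Ibm$. The second term is positive semidefinite. Hence $\Mbm_t\succeq \frac{t^2}{(1-t)^2}\Ibm\succ 0$, so $\Mbm_t$ is invertible and $\ubm_t$ is well defined. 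This positivity needs $t>0$; it would fail at $t=0$ whenever $\Abm$ is rank-deficient.

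\emph{Substitution.} By Lemma~\ref{lem:augmented_observation}, $\widetilde{\ybm}=\widetilde{\Abm}\xbm_1+\widetilde{\bbm}$. Therefore
\begin{equation*}
\ubm_t=\Mbm_t^{-1}\widetilde{\Abm}^\top\widetilde{\Sigmabm}^{-1}\widetilde{\Abm}\,\xbm_1+\Mbm_t^{-1}\widetilde{\Abm}^\top\widetilde{\Sigmabm}^{-1}\widetilde{\bbm}=\xbm_1+\bm{\xi}_t,
\end{equation*}
where $\bm{\xi}_t:=\mathbf{L}_t\widetilde{\bbm}$ and $\mathbf{L}_t=\Mbm_t^{-1}\widetilde{\Abm}^\top\widetilde{\Sigmabm}^{-1}$ is deterministic. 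Independence of $\bm{\xi}_t$ from $\xbm_1$ follows because $\bm{\xi}_t$ is a deterministic measurable function of $\widetilde{\bbm}$, and Lemma~\ref{lem:augmented_observation} gives $\widetilde{\bbm}$ independent of $\xbm_1$.

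\emph{Law of $\bm{\xi}_t$.} As a linear image of the Gaussian $\widetilde{\bbm}\sim\Ncal(\bm 0,\widetilde{\Sigmabm})$, the vector $\bm{\xi}_t$ is Gaussian with mean zero. Its covariance is
\begin{equation*}
\mathbf{L}_t\widetilde{\Sigmabm}\mathbf{L}_t^\top=\Mbm_t^{-1}\widetilde{\Abm}^\top\widetilde{\Sigmabm}^{-1}\widetilde{\Sigmabm}\,\widetilde{\Sigmabm}^{-1}\widetilde{\Abm}\Mbm_t^{-1}=\Mbm_t^{-1}\Mbm_t\Mbm_t^{-1}=\Mbm_t^{-1},
\end{equation*}
using the symmetry of $\Mbm_t$ and $\widetilde{\Sigmabm}$. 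This establishes~\eqref{eq:proof_denoising_model}.

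None of these steps is a real obstacle. The only point needing care is the invertibility of $\Mbm_t$, which is exactly where the hypothesis $t\in(0,1)$ enters, since no full-rank assumption on $\Abm$ is available.

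This lemma only recasts $(\xbm_t,\ybm)$ in the additive denoising form. To feed it into Lemma~\ref{lem:mmse_characterization} later, one also uses that $\ubm_t$ is a sufficient statistic for $\xbm_1$ given $\widetilde{\ybm}$, so that $\E[\xbm_1\mid\xbm_t,\ybm]=\E[\xbm_1\mid\ubm_t]$. That fact is not part of the present statement. It follows from the likelihood factorization
\begin{equation*}
\exp\bigl(-\tfrac12\|\widetilde{\ybm}-\widetilde{\Abm}\xbm\|^2_{\widetilde{\Sigmabm}^{-1}}\bigr)\propto\exp\bigl(-\tfrac12\|\ubm_t-\xbm\|^2_{\Mbm_t}\bigr)\,c(\widetilde{\ybm}),
\end{equation*}
obtained by completing the square, and can be recorded as a remark if it is needed.
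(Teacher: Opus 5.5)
Your proof is correct and follows the paper's argument step for step. You get $\Mbm_t\succ0$ from the strictly positive $\frac{t^2}{(1-t)^2}\Ibm$ term for $t\in(0,1)$, substitute the augmented model to obtain $\ubm_t=\xbm_1+\Mbm_t^{-1}\widetilde{\Abm}^\top\widetilde{\Sigmabm}^{-1}\widetilde{\bbm}$, and compute the sandwich covariance to get $\Mbm_t^{-1}$, exactly as the paper does (your closing sufficiency remark is what the paper proves next in Lemma~\ref{lem:sufficiency}).
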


\begin{proof}
For every nonzero $\abm\in\R^n$,
\begin{equation}
    \abm^\top\Mbm_t\abm
    =
    \frac{t^2}{(1-t)^2}\|\abm\|_2^2
    +
    \frac{1}{\sigma^2}\|\Abm\abm\|_2^2
    >0.
\end{equation}
Hence $\Mbm_t\succ0$. Substituting
\eqref{eq:proof_augmented_model} into
\eqref{eq:proof_sufficient_statistic} gives
\begin{equation}
    \ubm_t
    =
    \xbm_1+
    \Mbm_t^{-1}
    \widetilde{\Abm}^{\top}
    \widetilde{\Sigmabm}^{-1}
    \widetilde{\bbm}
    =:
    \xbm_1+\bm{\xi}_t.
\end{equation}
The noise $\bm{\xi}_t$ is Gaussian and independent of $\xbm_1$. Its
covariance is
\begin{align}
    \operatorname{Cov}(\bm{\xi}_t)
    &=
    \Mbm_t^{-1}
    \widetilde{\Abm}^{\top}
    \widetilde{\Sigmabm}^{-1}
    \widetilde{\Sigmabm}
    \widetilde{\Sigmabm}^{-1}
    \widetilde{\Abm}
    \Mbm_t^{-1}=
    \Mbm_t^{-1}.
\end{align}
This proves~\eqref{eq:proof_denoising_model}.
\end{proof}

The reduced observation $\ubm_t$ is useful only if it preserves all the
information about $\xbm_1$ contained in $(\xbm_t,\ybm)$. The following
lemma shows that the augmented data-fitting term equals a Gaussian denoising
term involving $\ubm_t$, up to a constant independent of the unknown image.
Consequently, conditioning on $\ubm_t$ gives the same posterior mean as
conditioning on the original observations.
\begin{lemma}
\label{lem:sufficiency}
The statistic $\ubm_t$ in~\eqref{eq:proof_sufficient_statistic} is sufficient
for $\xbm_1$ in $\widetilde{\ybm}$. Consequently,
\begin{equation}
    \E[\xbm_1\mid\xbm_t,\ybm]
    =
    \E[\xbm_1\mid\widetilde{\ybm}]
    =
    \E[\xbm_1\mid\ubm_t].
    \label{eq:proof_equal_means}
\end{equation}
Moreover, for every $\xbm\in\R^n$,
\begin{equation}
    \frac{1}{2(1-t)^2}\|\xbm_t-t\xbm\|_2^2
    +
    \frac{1}{2\sigma^2}\|\ybm-\Abm\xbm\|_2^2
    \notag
    =
    \frac{1}{2}\|\ubm_t-\xbm\|_{\Mbm_t}^2
    +c(\widetilde{\ybm}),
    \label{eq:proof_quadratic_reduction}
\end{equation}
where $c(\widetilde{\ybm})$ does not depend on $\xbm$.
\end{lemma}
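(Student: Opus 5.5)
The plan is to prove the quadratic identity first and then derive sufficiency from it through the factorization theorem.

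\emph{Quadratic identity.} Write the left-hand side compactly as $\frac12\|\widetilde{\ybm}-\widetilde{\Abm}\xbm\|_{\widetilde{\Sigmabm}^{-1}}^2$. This uses the block-diagonal form of $\widetilde{\Sigmabm}$ from Lemma~\ref{lem:augmented_observation}, whose inverse is $\operatorname{diag}\bigl((1-t)^{-2}\Ibm,\sigma^{-2}\Ibm\bigr)$. Expanding in $\xbm$ gives
\begin{equation*}
\tfrac12\xbm^\top\Mbm_t\xbm-\xbm^\top\widetilde{\Abm}^\top\widetilde{\Sigmabm}^{-1}\widetilde{\ybm}+\tfrac12\widetilde{\ybm}^\top\widetilde{\Sigmabm}^{-1}\widetilde{\ybm}.
\end{equation*}
By the definition \eqref{eq:proof_sufficient_statistic}, $\widetilde{\Abm}^\top\widetilde{\Sigmabm}^{-1}\widetilde{\ybm}=\Mbm_t\ubm_t$, so the linear term equals $-\xbm^\top\Mbm_t\ubm_t$. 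Completing the square produces $\frac12\|\ubm_t-\xbm\|_{\Mbm_t}^2$ plus the remainder
\begin{equation*}
c(\widetilde{\ybm})=\tfrac12\widetilde{\ybm}^\top\widetilde{\Sigmabm}^{-1}\widetilde{\ybm}-\tfrac12\ubm_t^\top\Mbm_t\ubm_t,
\end{equation*}
which depends only on $\widetilde{\ybm}$. Invertibility of $\Mbm_t$ comes from Lemma~\ref{lem:reduction}. This step is pure algebra.

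\emph{Sufficiency.} By Lemma~\ref{lem:augmented_observation}, the conditional density of $\widetilde{\ybm}$ given $\xbm_1=\xbm$ is Gaussian:
\begin{equation*}
p(\widetilde{\ybm}\mid\xbm)\propto\exp\bigl(-\tfrac12\|\widetilde{\ybm}-\widetilde{\Abm}\xbm\|_{\widetilde{\Sigmabm}^{-1}}^2\bigr),
\end{equation*}
with a normalizing constant independent of $\xbm$. Substituting the identity factors this as $\exp(-c(\widetilde{\ybm}))\exp(-\frac12\|\ubm_t-\xbm\|_{\Mbm_t}^2)$. The second factor depends on $\widetilde{\ybm}$ only through $\ubm_t$, so the Fisher--Neyman factorization theorem gives sufficiency of $\ubm_t$.

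\emph{Equality of posterior means.} The first equality in \eqref{eq:proof_equal_means} holds because $\widetilde{\ybm}$ is just the pair $(\xbm_t,\ybm)$. For the second, I would argue directly rather than cite abstract sufficiency. For any prior $p_1$ (no density is needed; work with the measure $p_1(\dd\xbm)$), Bayes' rule gives a posterior proportional to $\exp(-\frac12\|\ubm_t-\xbm\|_{\Mbm_t}^2)\,p_1(\dd\xbm)$, since the factor $e^{-c}$ cancels in normalization. This is a function of $\ubm_t$ alone. By Lemma~\ref{lem:reduction} it is also exactly the posterior of $\xbm_1$ given $\ubm_t$ under $\ubm_t=\xbm_1+\bm{\xi}_t$ with $\bm{\xi}_t\sim\Ncal(\bm 0,\Mbm_t^{-1})$. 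The two posteriors therefore coincide, and so do their means. Because $p_1$ has bounded support, the means are finite.

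The main obstacle is measure-theoretic care in this last step: stating the factorization when $p_1$ has no density and confirming the normalizer is positive and finite, which is immediate because the Gaussian factor is bounded and strictly positive. The rest is routine.
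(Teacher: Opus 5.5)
Your proposal is correct and follows the paper's proof: you complete the square using $\widetilde{\Abm}^\top\widetilde{\Sigmabm}^{-1}\widetilde{\ybm}=\Mbm_t\ubm_t$ to get the same remainder $c(\widetilde{\ybm})$, then apply the factorization criterion to the Gaussian likelihood. Your Bayes-rule identification of the posterior with that of the denoising model $\ubm_t=\xbm_1+\bm{\xi}_t$ only spells out the step from sufficiency to equality of the conditional means, which the paper leaves implicit.
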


\begin{proof}
By the definitions of $\Mbm_t$ and $\ubm_t$,
\begin{equation}
    \widetilde{\Abm}^{\top}
    \widetilde{\Sigmabm}^{-1}
    \widetilde{\ybm}
    =
    \Mbm_t\ubm_t.
\end{equation}
Expanding the augmented quadratic and completing the square therefore gives
\begin{equation}
    \frac{1}{2}
    \|\widetilde{\ybm}-\widetilde{\Abm}\xbm
    \|_{\widetilde{\Sigmabm}^{-1}}^2
    \notag =
    \frac{1}{2}\xbm^\top\Mbm_t\xbm
    -
    \xbm^\top\Mbm_t\ubm_t
    +
    \frac{1}{2}
    \widetilde{\ybm}^{\top}
    \widetilde{\Sigmabm}^{-1}
    \widetilde{\ybm}=
    \frac{1}{2}\|\ubm_t-\xbm\|_{\Mbm_t}^2
    + c(\widetilde{\ybm}),
\end{equation}
where
\begin{equation}
    c(\widetilde{\ybm})
    =
    \frac{1}{2}
    \widetilde{\ybm}^{\top}
    \widetilde{\Sigmabm}^{-1}
    \widetilde{\ybm}
    -
    \frac{1}{2}\ubm_t^\top\Mbm_t\ubm_t.
\end{equation}
Expanding the left-hand side blockwise yields
\eqref{eq:proof_quadratic_reduction}.

The conditional likelihood consequently factors as
\begin{equation}
    p(\widetilde{\ybm}\mid\xbm_1)
    =
    r(\widetilde{\ybm})\,
    \exp\left(
        -\frac{1}{2}
        \|\ubm_t-\xbm_1\|_{\Mbm_t}^2
    \right),
\end{equation}
where $r(\widetilde{\ybm})$ is independent of $\xbm_1$. The factorization
criterion shows that $\ubm_t$ is sufficient for $\xbm_1$, which proves
\eqref{eq:proof_equal_means}.
\end{proof}

The preceding lemmas reduce the observations $(\xbm_t,\ybm)$ to an
equivalent Gaussian denoising observation $\ubm_t$ without losing information
about $\xbm_1$. We can therefore apply the Gaussian MMSE characterization to
$\ubm_t$ and then rewrite its quadratic term using the original observations.
The resulting objective contains one term associated with the interpolation
state and another that explicitly enforces the measurement model.

\begin{lemma}
\label{lem:posterior_variational}
Suppose that $p_1$ is non-degenerate. For every $t\in(0,1)$, there
exists a function
$\phi_t:\R^n\rightarrow\R\cup\{+\infty\}$, unique up to an additive
constant, such that
\begin{align}
    \E[\xbm_1\mid\xbm_t,\ybm]
    =
    \argmin_{\xbm\in\R^n}
    \bigg\{
        &\frac{1}{2(1-t)^2}\|\xbm_t-t\xbm\|_2^2
        +
        \frac{1}{2\sigma^2}\|\ybm-\Abm\xbm\|_2^2
        +
        \phi_t(\xbm)
    \bigg\}.
    \label{eq:proof_variational_result}
\end{align}
The posterior mean is the unique global minimizer and stationary point.
The dependence of $\phi_t$ on the fixed operator $\Abm$ and noise level
$\sigma$ is suppressed for readability.
\end{lemma}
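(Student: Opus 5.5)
The plan is to reduce the statement to Lemma~\ref{lem:mmse_characterization} through the chain of lemmas just established. Fix $t\in(0,1)$. Lemma~\ref{lem:augmented_observation} rewrites $(\xbm_t,\ybm)$ as the linear Gaussian model $\widetilde{\ybm}=\widetilde{\Abm}\xbm_1+\widetilde{\bbm}$, with noise independent of $\xbm_1$. Lemma~\ref{lem:reduction} then produces the equivalent denoising observation $\ubm_t=\xbm_1+\bm{\xi}_t$, where $\bm{\xi}_t\sim\Ncal(\bm 0,\Mbm_t^{-1})$, $\Mbm_t\succ0$, and $\bm{\xi}_t$ is independent of $\xbm_1$.

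\textbf{Applying the MMSE characterization.} Set $\zbm=\xbm_1$, which is non-degenerate by assumption on $p_1$, $\ubm=\ubm_t$, and $\Sigmabm=\Mbm_t^{-1}\succ0$. Lemma~\ref{lem:mmse_characterization} then yields a function $\phi_t$, unique up to an additive constant, such that $\E[\xbm_1\mid\ubm_t]$ is the unique global minimizer and unique stationary point of $\frac12\|\ubm_t-\xbm\|_{\Mbm_t}^2+\phi_t(\xbm)$. Note that $\Sigmabm^{-1}=\Mbm_t$, so this is exactly the $\Mbm_t$-weighted norm.

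\textbf{Rewriting in the original variables.} By Lemma~\ref{lem:sufficiency}, $\E[\xbm_1\mid\ubm_t]=\E[\xbm_1\mid\xbm_t,\ybm]$. The same lemma shows that $\frac12\|\ubm_t-\xbm\|_{\Mbm_t}^2$ equals the sum of the interpolation and measurement quadratics up to the constant $c(\widetilde{\ybm})$, which does not depend on $\xbm$. Adding a constant leaves the argmin, the set of stationary points, and the gradient unchanged, so the objective in \eqref{eq:proof_variational_result} inherits the unique-minimizer and unique-stationary-point properties. This gives the claimed representation.

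\textbf{Uniqueness of $\phi_t$.} This step needs care. Suppose some $\psi$ also satisfies \eqref{eq:proof_variational_result} for all $(\xbm_t,\ybm)$. Then $\psi$ also satisfies the characterization in the $\ubm_t$ form, because the two quadratics differ only by $c(\widetilde{\ybm})$. Uniqueness in Lemma~\ref{lem:mmse_characterization} then forces $\psi=\phi_t+\text{const}$. For this to work, $\ubm_t$ must range over all of $\R^n$ as $(\xbm_t,\ybm)$ varies; it does, since $\ubm_t$ depends affinely on $\xbm_t$ with invertible coefficient $\frac{t}{(1-t)^2}\Mbm_t^{-1}$.

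The \textbf{main obstacle} is a subtle point in this last step. $\phi_t$ is defined through $\ubm_t$ alone, yet the uniqueness claim quantifies over $\phi_t$ with the pair $(\xbm_t,\ybm)$ varying. The surjectivity remark above, together with the fact that $\Mbm_t$ is fixed once $t$, $\Abm$ and $\sigma$ are fixed, resolves this; the dependence of $\phi_t$ on $\Abm$ and $\sigma$ is the one the statement suppresses.
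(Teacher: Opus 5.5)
Your proposal is correct and follows essentially the same route as the paper. You reduce $(\xbm_t,\ybm)$ to the denoising statistic $\ubm_t$ via Lemmas~\ref{lem:reduction} and~\ref{lem:sufficiency}, apply the Gaussian MMSE characterization with covariance $\Mbm_t^{-1}$, and transfer back using the fact that the two quadratics differ by an $\xbm$-independent constant. Your extra check that $\xbm_t\mapsto\ubm_t$ is onto $\R^n$, so that uniqueness of $\phi_t$ carries over from the $\ubm_t$-form, is a detail the paper leaves implicit, and your argument for it is sound.
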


\begin{proof}
Lemma~\ref{lem:reduction} expresses $\ubm_t$ as the Gaussian denoising model
\begin{equation}
    \ubm_t=\xbm_1+\bm{\xi}_t,
    \qquad
    \bm{\xi}_t\sim\Ncal(\bm 0,\Mbm_t^{-1}).
\end{equation}
Applying Lemma~\ref{lem:mmse_characterization} with noise covariance
$\Mbm_t^{-1}$ gives a function $\phi_t$ for which
\begin{equation}
    \E[\xbm_1\mid\ubm_t]
    =
    \argmin_{\xbm\in\R^n}
    \left\{
        \frac{1}{2}\|\ubm_t-\xbm\|_{\Mbm_t}^2
        +
        \phi_t(\xbm)
    \right\}.
\end{equation}
Lemma~\ref{lem:sufficiency} shows that the conditional mean equals
$\E[\xbm_1\mid\xbm_t,\ybm]$ and that the quadratic term differs from the two
data terms in~\eqref{eq:proof_variational_result} only by a constant
independent of $\xbm$. Adding or removing that constant does not change the
minimizer or stationary points, proving the result.
\end{proof}

\subsection{Conditional probability flow}

The variational result characterizes the posterior mean, whereas flow
matching requires a velocity field. Under linear interpolation, the velocity
of each trajectory can be written using only its endpoint $\xbm_1$ and
current state $\xbm_t$. Averaging over the possible endpoints conditioned on
$\xbm_t$ and $\ybm$ therefore expresses the optimal conditional velocity
directly in terms of the posterior mean.

\begin{lemma}
\label{lem:velocity_identity}
For $t\in[0,1)$, the optimal measurement-conditional velocity is
\begin{equation}
    \vbm(\xbm,t,\ybm)
    =
    \E[\xbm_1-\xbm_0\mid\xbm_t=\xbm,\ybm]
    =
    \frac{
        \E[\xbm_1\mid\xbm_t=\xbm,\ybm]-\xbm
    }{
        1-t
    }.
    \label{eq:proof_velocity_identity}
\end{equation}
\end{lemma}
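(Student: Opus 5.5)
We prove the two equalities in \eqref{eq:proof_velocity_identity} one at a time. The first equality is definitional. The optimal measurement-conditional velocity is the minimizer over measurable fields $\vbm(\cdot,t,\ybm)$ of the conditional flow-matching risk
\[
\E\bigl[\|\vbm(\xbm_t,t,\ybm)-(\xbm_1-\xbm_0)\|_2^2\bigr].
\]
It is therefore the $L^2$ projection of the path velocity onto functions of $(\xbm_t,\ybm)$, namely $\E[\xbm_1-\xbm_0\mid\xbm_t,\ybm]$. The risk is finite because $p_1$ has bounded support and $\xbm_0$ is Gaussian, so $\xbm_1-\xbm_0$ is square integrable. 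By the orthogonality property of conditional expectation, this projection is unique almost everywhere. This is the measurement-conditional analogue of the marginal statement preceding \eqref{eq:velocity_posterior}.

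For the second equality, we eliminate $\xbm_0$ pathwise using the linear interpolation \eqref{eq:linear_path}. For $t\in[0,1)$ we have $1-t>0$, so
\[
\xbm_0=\frac{\xbm_t-t\xbm_1}{1-t},
\qquad
\xbm_1-\xbm_0=\frac{(1-t)\xbm_1-\xbm_t+t\xbm_1}{1-t}=\frac{\xbm_1-\xbm_t}{1-t}.
\]
This identity holds almost surely, not only in expectation. We then take $\E[\,\cdot\mid\xbm_t=\xbm,\ybm]$ and use linearity of conditional expectation. Because $\xbm_t$ is measurable with respect to the conditioning $\sigma$-algebra, $\E[\xbm_t\mid\xbm_t=\xbm,\ybm]=\xbm$. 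This gives $\bigl(\E[\xbm_1\mid\xbm_t=\xbm,\ybm]-\xbm\bigr)/(1-t)$.

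The main obstacle is making sense of conditioning on the event $\{\xbm_t=\xbm\}$ pointwise, rather than only almost everywhere. We handle it as follows.
\begin{itemize}
\item For $t<1$, the pair $(\xbm_t,\ybm)$ has a strictly positive, smooth joint density. It is a Gaussian convolution of the bounded-support law of $\xbm_1$, with nondegenerate noise $(1-t)\xbm_0$ and $\etabm$ as in Lemma~\ref{lem:augmented_observation}.
\item Hence $\E[\xbm_1\mid\xbm_t=\xbm,\ybm]$ has a canonical continuous version, given by a ratio of integrals against the Gaussian likelihood with a positive denominator.
\item The same canonical version applies to $\E[\xbm_0\mid\xbm_t=\xbm,\ybm]$, so the identity holds for every $\xbm$.
\item At $t=0$, $\xbm_t=\xbm_0$ is independent of $\xbm_1$ and the formula reduces to $\E[\xbm_1\mid\ybm]-\xbm$, which is consistent with the general case.
\end{itemize}
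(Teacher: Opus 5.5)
Your proof is correct and follows the paper's route: you solve the linear interpolation for $\xbm_0$ to get the almost-sure identity $\xbm_1-\xbm_0=(\xbm_1-\xbm_t)/(1-t)$, then take the conditional expectation given $(\xbm_t,\ybm)$. Your $L^2$-projection justification of the first equality, and your choice of the continuous version via the strictly positive Gaussian-smoothed density, add rigor that the paper's short proof leaves implicit (it establishes that density regularity only later, in Lemma~\ref{lem:mean_regularity}).
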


\begin{proof}
The linear path has velocity
$\dot{\xbm}_t=\xbm_1-\xbm_0$. Since
\begin{equation}
    \xbm_0=\frac{\xbm_t-t\xbm_1}{1-t},
\end{equation}
we obtain
\begin{equation}
    \xbm_1-\xbm_0
    =
    \frac{\xbm_1-\xbm_t}{1-t}.
\end{equation}
Taking the conditional expectation given $\xbm_t=\xbm$ and $\ybm$ proves
\eqref{eq:proof_velocity_identity}.
\end{proof}

To establish a probability flow, we must first show that the posterior mean
changes smoothly with the current state $\xbm$. The bounded-support assumption
allows differentiation through the conditional density. The resulting
Jacobian has a simple covariance form, which will be used to control the
regularity of the conditional velocity.
\begin{lemma}[Regularity of the conditional mean]
\label{lem:mean_regularity}
Suppose
$\operatorname{supp}p_1\subseteq\overline{B}(\bm 0,R)$ and fix
$\ybm$. Define
\begin{equation}
    \mbm_t(\xbm)
    =
    \E[\xbm_1\mid\xbm_t=\xbm,\ybm].
\end{equation}
For every $t\in[0,1)$, the conditional density
$p_t(\xbm\mid\ybm)$ is positive and smooth, and $\mbm_t$ is smooth in
$(\xbm,t)$. Moreover, $\|\mbm_t(\xbm)\|_2\leq R$ and
\begin{equation}
    \nabla_{\xbm}\mbm_t(\xbm)
    =
    \frac{t}{(1-t)^2}
    \operatorname{Cov}
    (\xbm_1\mid\xbm_t=\xbm,\ybm).
    \label{eq:proof_mean_jacobian}
\end{equation}
\end{lemma}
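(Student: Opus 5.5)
We will write the conditional law explicitly and differentiate under the integral sign. Fix $\ybm$ and $t\in[0,1)$. Since $\xbm_0\sim\Ncal(\bm 0,\Ibm)$, $\etabm$ and $\xbm_1$ are independent, Bayes' rule gives
\begin{equation}
p_t(\xbm\mid\ybm)=\frac{1}{p(\ybm)}\int \Ncal\big(\xbm;\,t\xbm_1,(1-t)^2\Ibm\big)\,\Ncal\big(\ybm;\,\Abm\xbm_1,\sigma^2\Ibm\big)\,p_1(\dd\xbm_1),
\end{equation}
where $p(\ybm)=\int\Ncal(\ybm;\Abm\xbm_1,\sigma^2\Ibm)\,p_1(\dd\xbm_1)>0$. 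The posterior of $\xbm_1$ given $(\xbm_t=\xbm,\ybm)$ is then the tilted measure $\pi_{\xbm,t}(\dd\xbm_1)\propto \exp(\psi(\xbm_1;\xbm,t))\,p_1(\dd\xbm_1)$, with
\begin{equation}
\psi(\xbm_1;\xbm,t)=-\frac{\|\xbm-t\xbm_1\|_2^2}{2(1-t)^2}-\frac{\|\ybm-\Abm\xbm_1\|_2^2}{2\sigma^2}.
\end{equation}
Hence $\mbm_t(\xbm)=\int\xbm_1\,\pi_{\xbm,t}(\dd\xbm_1)$. The bound $\|\mbm_t(\xbm)\|_2\le R$ is immediate, since $\pi_{\xbm,t}$ is supported in $\overline{B}(\bm 0,R)$ and the ball is convex. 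Positivity of $p_t(\xbm\mid\ybm)$ also follows, because the integrand is strictly positive and $p_1$ is a probability measure.

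For smoothness, note that $\psi$ is a polynomial in $(\xbm,\xbm_1)$ with a coefficient $1/(1-t)^2$ that is smooth on $t\in[0,1)$. On any compact set $K\subset\R^n\times[0,1)$, every partial derivative in $(\xbm,t)$ of $\xbm_1^{\alpha}e^{\psi}$ is a polynomial in $\xbm_1$ times $e^{\psi}$. Since $e^{\psi}\le1$ and $\|\xbm_1\|\le R$ on the support, each such derivative is uniformly bounded on $K\times\overline{B}(\bm 0,R)$. Dominated convergence therefore lets us differentiate under the integral to all orders. This shows that both the normalizer $Z(\xbm,t)=\int e^{\psi}\,\dd p_1$ and the numerator $\int\xbm_1e^{\psi}\,\dd p_1$ are $C^\infty$. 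Since $Z>0$, the quotient $\mbm_t(\xbm)$ is smooth in $(\xbm,t)$. The same argument gives smoothness of $p_t(\xbm\mid\ybm)$, because the Gaussian prefactors are smooth for $t<1$.

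The Jacobian follows from the usual exponential-family identity. The only $\xbm$-dependence of $\psi$ is through $-\|\xbm\|_2^2/(2(1-t)^2)+t\,\xbm^\top\xbm_1/(1-t)^2$. The first term does not involve $\xbm_1$ and cancels between numerator and normalizer. Differentiating the quotient then gives
\begin{equation}
\nabla_{\xbm}\mbm_t(\xbm)=\frac{t}{(1-t)^2}\Big(\E_{\pi}[\xbm_1\xbm_1^\top]-\E_{\pi}[\xbm_1]\E_{\pi}[\xbm_1]^\top\Big),
\end{equation}
which is \eqref{eq:proof_mean_jacobian}. For $t=0$ the formula correctly gives zero Jacobian.

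The main obstacle is justifying differentiation under the integral uniformly in $t$. The difficulty is that the bounds degrade as $t\to1$, since the coefficient $(1-t)^{-2}$ blows up. We handle this by restricting to compact subsets of $[0,1)$, which suffices for smoothness on the open-in-$t$ domain. The key point is that the bounded support of $p_1$, rather than any density assumption, supplies the dominating function.
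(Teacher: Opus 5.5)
Your proposal is correct and follows essentially the same route as the paper. In both, the posterior of $\xbm_1$ is a Gaussian tilt of a measure supported in $\overline{B}(\bm 0,R)$, the bounded support dominates all $(\xbm,t)$-derivatives on compact subsets of $\R^n\times[0,1)$, and the Jacobian comes from differentiating that tilt in $\xbm$. The only difference is bookkeeping: the paper tilts $\nu=p(\xbm_1\mid\ybm)$ by $\varphi_{1-t}(\xbm-t\xbm_1)$ and uses the score $\frac{t}{(1-t)^2}(\xbm_1-\mbm_t(\xbm))$, whereas you tilt $p_1$ by both likelihoods and differentiate the numerator/normalizer quotient, which is equivalent.
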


\begin{proof}
Let $\nu=p(\xbm_1\mid\ybm)$. Because the Gaussian likelihood is positive,
$\nu$ is supported on $\overline{B}(\bm 0,R)$. For $t<1$, the conditional
density of $\xbm_t$ is
\begin{equation}
    p_t(\xbm\mid\ybm)
    =
    \int
    \varphi_{1-t}(\xbm-t\xbm_1)
    \,\nu(\mathrm d\xbm_1),
    \label{eq:proof_conditional_density}
\end{equation}
where $\varphi_s$ denotes the density of
$\Ncal(\bm 0,s^2\Ibm)$. The Gaussian kernel is positive and smooth.
Because $\nu$ has bounded support, its derivatives are dominated uniformly
on compact subsets of $\R^n\times[0,1)$. We may therefore differentiate
under the integral sign.

The conditional distribution is supported on
$\overline{B}(\bm 0,R)$, so its mean satisfies
$\|\mbm_t(\xbm)\|_2\leq R$.

Bayes' rule gives
\begin{equation}
    q_{\xbm,t}(\mathrm d\xbm_1)
    =
    \frac{
        \varphi_{1-t}(\xbm-t\xbm_1)
    }{
        p_t(\xbm\mid\ybm)
    }
    \nu(\mathrm d\xbm_1).
\end{equation}
Differentiating its log-density with respect to $\xbm$ yields
\begin{equation}
    \nabla_{\xbm}\log q_{\xbm,t}(\xbm_1)
    =
    \frac{t}{(1-t)^2}
    \bigl(\xbm_1-\mbm_t(\xbm)\bigr).
\end{equation}
Since
$\mbm_t(\xbm)=\int\xbm_1q_{\xbm,t}(\mathrm d\xbm_1)$, differentiation
under the integral gives~\eqref{eq:proof_mean_jacobian}.
\end{proof}

A unique probability flow requires the conditional velocity to vary
regularly with the current state. The Jacobian identity from
Lemma~\ref{lem:mean_regularity} allows us to bound this variation through the
conditional covariance of $\xbm_1$. Bounded support provides a uniform
covariance bound, making the velocity globally Lipschitz on every time
interval $[0,\tau]$ with $\tau<1$.

\begin{lemma}[Lipschitz conditional velocity]
\label{lem:regularity}
Under the assumptions of Lemma~\ref{lem:mean_regularity},
$\vbm(\cdot,t,\ybm)$ is globally Lipschitz for every $t\in[0,1)$, with
\begin{equation}
    \operatorname{Lip}
    (\vbm(\cdot,t,\ybm))
    \leq
    \frac{1}{1-t}
    +
    \frac{tR^2}{(1-t)^3}.
    \label{eq:proof_lipschitz_bound}
\end{equation}
The bound is uniform over $t\in[0,\tau]$ for every $\tau<1$.
\end{lemma}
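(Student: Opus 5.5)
We will bound the Jacobian of the velocity field directly and then pass from a Jacobian bound to a global Lipschitz bound by the mean value inequality. By Lemma~\ref{lem:velocity_identity}, for fixed $t\in[0,1)$ and $\ybm$,
\begin{equation}
    \vbm(\xbm,t,\ybm)=\frac{\mbm_t(\xbm)-\xbm}{1-t},
\end{equation}
where $\mbm_t$ is the conditional mean of Lemma~\ref{lem:mean_regularity}. That lemma makes $\mbm_t$ smooth in $\xbm$, so $\vbm(\cdot,t,\ybm)$ is $C^1$. Its Jacobian is
\begin{equation}
    \nabla_{\xbm}\vbm(\xbm,t,\ybm)=\frac{1}{1-t}\Bigl(\nabla_{\xbm}\mbm_t(\xbm)-\Ibm\Bigr)
    =\frac{t}{(1-t)^3}\operatorname{Cov}(\xbm_1\mid\xbm_t=\xbm,\ybm)-\frac{1}{1-t}\Ibm,
\end{equation}
using the covariance identity~\eqref{eq:proof_mean_jacobian}.

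The key step is a uniform bound on the operator norm of the conditional covariance. The conditional law $q_{\xbm,t}$ from the proof of Lemma~\ref{lem:mean_regularity} is supported on $\overline{B}(\bm 0,R)$. Hence, for any unit vector $\abm$,
\begin{equation}
    \abm^\top\operatorname{Cov}(\xbm_1\mid\xbm_t=\xbm,\ybm)\,\abm
    =\E\bigl[(\abm^\top(\xbm_1-\mbm_t(\xbm)))^2\mid\xbm_t=\xbm,\ybm\bigr]
    \le \E\bigl[(\abm^\top\xbm_1)^2\mid\xbm_t=\xbm,\ybm\bigr]\le R^2.
\end{equation}
The first inequality holds because variance is at most the second moment. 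So the covariance is positive semidefinite with spectral norm at most $R^2$, uniformly in $\xbm$ and $t$.

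The triangle inequality then gives
\begin{equation}
    \|\nabla_{\xbm}\vbm(\xbm,t,\ybm)\|_{2}\le \frac{1}{1-t}+\frac{tR^2}{(1-t)^3}
\end{equation}
for all $\xbm$. Since $\R^n$ is convex and $\vbm(\cdot,t,\ybm)$ is $C^1$, the mean value inequality $\|\vbm(\xbm)-\vbm(\xbm')\|\le\sup\|\nabla\vbm\|\,\|\xbm-\xbm'\|$ yields~\eqref{eq:proof_lipschitz_bound}. The right-hand side is increasing in $t$ on $[0,1)$, so on $[0,\tau]$ it is bounded by its value at $t=\tau$, which is finite for $\tau<1$. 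This gives the claimed uniformity.

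The only delicate point is justifying the Jacobian formula~\eqref{eq:proof_mean_jacobian} globally, that is, differentiating under the integral for all $\xbm$ and not just on compacts. Lemma~\ref{lem:mean_regularity} already supplies this, since differentiability is local and compacts cover $\R^n$. A sharper constant is possible because the covariance is PSD: the eigenvalues of the Jacobian lie in $[-\tfrac{1}{1-t},\,\tfrac{tR^2}{(1-t)^3}-\tfrac{1}{1-t}]$, giving norm at most $\max\{\tfrac{1}{1-t},\tfrac{tR^2}{(1-t)^3}\}$. This refinement is not needed, since the stated sum bound suffices.
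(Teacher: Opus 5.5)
Your proposal is correct and follows the paper's proof essentially step for step. You use the same Jacobian formula from Lemma~\ref{lem:velocity_identity} and the covariance identity, the same bound $\abm^\top\operatorname{Cov}(\xbm_1\mid\xbm_t=\xbm,\ybm)\,\abm\le R^2$ from bounded support, and the triangle inequality; you also spell out the mean value inequality and the monotonicity in $t$, which the paper leaves implicit, and your sharper eigenvalue-based constant is valid because the Jacobian is symmetric.
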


\begin{proof}

Using Lemma~\ref{lem:velocity_identity} and Lemma~\ref{lem:mean_regularity},
\begin{equation}
    \nabla_{\xbm}\vbm(\xbm,t,\ybm)
    =
    \frac{t}{(1-t)^3}
    \operatorname{Cov}
    (\xbm_1\mid\xbm_t=\xbm,\ybm)
    -
    \frac{1}{1-t}\Ibm.
\end{equation}
For every unit vector $\abm$,
\begin{equation}
    \abm^\top
    \operatorname{Cov}
    (\xbm_1\mid\xbm_t=\xbm,\ybm)
    \abm
    \leq
    \E[(\abm^\top\xbm_1)^2\mid\xbm_t=\xbm,\ybm]
    \leq R^2.
\end{equation}
Thus the conditional covariance has operator norm at most $R^2$, giving
\eqref{eq:proof_lipschitz_bound}. For every $\tau<1$,
\begin{equation}
    \sup_{t\in[0,\tau]}
    \operatorname{Lip}(\vbm(\cdot,t,\ybm))
    \leq
    \frac{1}{1-\tau}
    +
    \frac{\tau R^2}{(1-\tau)^3}
    <\infty.
\end{equation}
At $t=0$, the identity
$\vbm(\xbm,0,\ybm)=\E[\xbm_1\mid\ybm]-\xbm$ gives Lipschitz constant
one, consistent with the bound.
\end{proof}

For each possible endpoint $\xbm_1$, linear interpolation defines a simple
Gaussian probability path with a known velocity. The conditional distribution
$p_t(\xbm\mid\ybm)$ is obtained by averaging these paths over the posterior
distribution of $\xbm_1$. Averaging their probability fluxes yields the
optimal measurement-conditional velocity and establishes the corresponding
continuity equation.
\begin{lemma}[Conditional continuity equation]
\label{lem:conditional_continuity}
Under the assumptions of Lemma~\ref{lem:regularity}, the conditional density
$p_t(\xbm\mid\ybm)$ satisfies
\begin{equation}
    \partial_t p_t(\xbm\mid\ybm)
    +
    \nabla\cdot
    \left(
        p_t(\xbm\mid\ybm)
        \vbm(\xbm,t,\ybm)
    \right)
    =
    0,
    \qquad t\in[0,1).
    \label{eq:proof_conditional_continuity}
\end{equation}
\end{lemma}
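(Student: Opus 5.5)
We prove the continuity equation by writing $p_t(\cdot\mid\ybm)$ as a mixture of Gaussian paths indexed by the endpoint $\xbm_1$, verifying the equation for each component, and averaging.

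\textbf{Single-endpoint identity.} Fix $\ybm$ and set $\nu=p(\xbm_1\mid\ybm)$ as in the proof of Lemma~\ref{lem:mean_regularity}. By~\eqref{eq:proof_conditional_density},
\begin{equation*}
p_t(\xbm\mid\ybm)=\int k_t(\xbm\mid\xbm_1)\,\nu(\dd\xbm_1),
\qquad
k_t(\xbm\mid\xbm_1)=\varphi_{1-t}(\xbm-t\xbm_1).
\end{equation*}
This uses the fact that $\xbm_0$ is independent of $(\xbm_1,\ybm)$, so conditioning on $\xbm_1$ and $\ybm$ leaves $\xbm_t\sim\Ncal(t\xbm_1,(1-t)^2\Ibm)$. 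For fixed $\xbm_1$, this is the law of $t\xbm_1+(1-t)\xbm_0$. It is transported by the affine flow whose velocity is
\begin{equation*}
\ubm_t(\xbm\mid\xbm_1)=\frac{\xbm_1-\xbm}{1-t},
\end{equation*}
the per-path velocity from Lemma~\ref{lem:velocity_identity}. I will check directly that $\partial_t k_t+\nabla\cdot(k_t\ubm_t)=0$ for $t\in[0,1)$. Write $\log k_t=-n\log(1-t)-\|\xbm-t\xbm_1\|^2/(2(1-t)^2)+\text{const}$ and compute $\partial_t\log k_t$. Compare it with $-\nabla\cdot\ubm_t-\ubm_t\cdot\nabla\log k_t$, using $\nabla\cdot\ubm_t=-n/(1-t)$ and $\nabla\log k_t=-(\xbm-t\xbm_1)/(1-t)^2$. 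Both sides then reduce to the same expression in the variable $\mathbf{r}=\xbm-t\xbm_1$, once we use $\xbm_1-\xbm=(1-t)\xbm_1-\mathbf{r}$. This is routine algebra.

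\textbf{Averaging the fluxes.} Integrate the single-endpoint identity against $\nu(\dd\xbm_1)$ and exchange $\partial_t$ and $\nabla\cdot$ with the integral. The averaged flux is
\begin{equation*}
\int k_t\,\ubm_t\,\nu(\dd\xbm_1)=p_t(\xbm\mid\ybm)\,\frac{\E[\xbm_1\mid\xbm_t=\xbm,\ybm]-\xbm}{1-t}
\end{equation*}
by Bayes' rule, since $q_{\xbm,t}(\dd\xbm_1)=k_t\,\nu(\dd\xbm_1)/p_t(\xbm\mid\ybm)$ as in Lemma~\ref{lem:mean_regularity}. Lemma~\ref{lem:velocity_identity} identifies this as $p_t(\xbm\mid\ybm)\,\vbm(\xbm,t,\ybm)$. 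Positivity of $p_t$, from Lemma~\ref{lem:mean_regularity}, makes the division legitimate, and~\eqref{eq:proof_conditional_continuity} follows pointwise.

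\textbf{Main obstacle: exchanging derivatives and integrals.} This is the only delicate step. I will argue as in Lemma~\ref{lem:mean_regularity}. On any compact set $K\times[0,\tau]$ with $\tau<1$ and $\|\xbm_1\|\le R$, the quantities $k_t$, $\partial_t k_t$, $\nabla k_t$ and $\ubm_t$ are continuous in $(\xbm,t,\xbm_1)$. They are therefore uniformly bounded on this compact set, and $\nu$ is a probability measure, so dominated convergence justifies differentiating under the integral sign. Since every point with $t<1$ lies in such a set, the equation holds classically on $\R^n\times[0,1)$. Alternatively, one can state it weakly by testing against $C_c^\infty$ functions and applying Fubini, which needs only the same bounds.
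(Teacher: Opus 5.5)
Your proposal is correct and follows the paper's proof essentially step for step. Both write $p_t(\cdot\mid\ybm)$ as a mixture over $\nu=p(\xbm_1\mid\ybm)$ of Gaussian paths, each satisfying a continuity equation with velocity $(\xbm_1-\xbm)/(1-t)$, and both identify the averaged flux as $p_t(\xbm\mid\ybm)\,\vbm(\xbm,t,\ybm)$ via Bayes' rule and Lemma~\ref{lem:velocity_identity}. Your direct computation of the single-endpoint identity and your dominated-convergence justification for exchanging derivatives with the $\nu$-integral supply details the paper leaves implicit: it cites only the pushforward structure and the differentiation-under-the-integral remark in Lemma~\ref{lem:mean_regularity}.
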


\begin{proof}
Fix $\ybm$ and write $\nu=p(\xbm_1\mid\ybm)$. For a fixed endpoint
$\xbm_1$, the density
\begin{equation}
    p_t(\xbm\mid\xbm_1)
    =
    \varphi_{1-t}(\xbm-t\xbm_1)
\end{equation}
is induced by the deterministic map
$\xbm_0\mapsto t\xbm_1+(1-t)\xbm_0$. It therefore satisfies
\begin{equation}
    \partial_t p_t(\xbm\mid\xbm_1)
    +
    \nabla\cdot
    \left(
        p_t(\xbm\mid\xbm_1)
        \frac{\xbm_1-\xbm}{1-t}
    \right)
    =
    0.
    \label{eq:proof_endpoint_continuity}
\end{equation}
Because $\xbm_0$ is independent of $(\xbm_1,\ybm)$,
\begin{equation}
    p_t(\xbm\mid\ybm)
    =
    \int p_t(\xbm\mid\xbm_1)\,\nu(\mathrm d\xbm_1).
\end{equation}
Integrating~\eqref{eq:proof_endpoint_continuity} with respect to $\nu$ gives
\begin{align}
    \partial_t p_t(\xbm\mid\ybm)
    =
    -\nabla\cdot
    \int
    p_t(\xbm\mid\xbm_1)
    \frac{\xbm_1-\xbm}{1-t}
    \,\nu(\mathrm d\xbm_1).
    \label{eq:proof_marginalized_flux}
\end{align}
By Bayes' rule and Lemma~\ref{lem:velocity_identity}, the integral in
\eqref{eq:proof_marginalized_flux} equals
\begin{equation}
    p_t(\xbm\mid\ybm)
    \E\left[
        \frac{\xbm_1-\xbm}{1-t}
        \,\middle|\,
        \xbm_t=\xbm,\ybm
    \right]
    =
    p_t(\xbm\mid\ybm)\vbm(\xbm,t,\ybm).
\end{equation}
Substitution proves~\eqref{eq:proof_conditional_continuity}.
\end{proof}

The continuity equation identifies the probability path associated with the
conditional velocity, but a well-defined sampler also requires the
corresponding ODE to have unique trajectories. The Lipschitz bound guarantees a unique flow on every interval that ends before $t=1$. Because the ODE flow and the conditional path start from the same Gaussian distribution and satisfy the same continuity equation, their distributions agree. The conditional path then converges to the measurement-conditioned posterior as  $t$ approaches one.

\begin{lemma}[Conditional transport and posterior limit]
\label{lem:posterior_transport}
Under the assumptions of Lemma~\ref{lem:regularity}, for every $\tau<1$ the
ODE
\begin{equation}
    \frac{\dd\xbm_t}{\dd t}
    =
    \vbm(\xbm_t,t,\ybm)
    \label{eq:proof_conditional_ode}
\end{equation}
generates a unique flow map $\Phi_{0\rightarrow\tau}$ satisfying
\begin{equation}
    (\Phi_{0\rightarrow\tau})_{\#}
    \Ncal(\bm 0,\Ibm)
    =
    p_\tau(\cdot\mid\ybm).
    \label{eq:proof_pushforward}
\end{equation}
Furthermore,
\begin{equation}
    p_\tau(\cdot\mid\ybm)
    \Rightarrow
    p(\xbm_1\mid\ybm)
    \qquad\text{as }\tau\rightarrow1^-.
    \label{eq:proof_weak_limit}
\end{equation}
\end{lemma}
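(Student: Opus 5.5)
We would prove the lemma in three steps: well-posedness of the ODE, identification of the pushforward through uniqueness for the continuity equation, and then the weak limit.

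\textbf{Existence and uniqueness of the flow.} Fix $\tau<1$. By Lemma~\ref{lem:mean_regularity}, $\vbm(\xbm,t,\ybm)$ is continuous (indeed smooth) in $(\xbm,t)$ on $\R^n\times[0,\tau]$. By Lemma~\ref{lem:regularity}, it is globally Lipschitz in $\xbm$, uniformly over $t\in[0,\tau]$. Global Cauchy--Lipschitz then gives, for every initial point, a unique solution on $[0,\tau]$ with no finite-time blow-up. Linear growth also follows, since $\|\vbm\|\le (R+\|\xbm\|)/(1-t)$ by the bound $\|\mbm_t\|\le R$. Hence the flow map $\Phi_{0\to t}$ is well defined and is a bi-Lipschitz homeomorphism for every $t\in[0,\tau]$.

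\textbf{Identification of the pushforward.} Set $\mu_t=(\Phi_{0\to t})_\#\Ncal(\bm 0,\Ibm)$. Differentiating $\int\psi\,\dd\mu_t$ for test functions $\psi$ shows that $\mu_t$ is a distributional solution of the continuity equation with velocity $\vbm$. Lemma~\ref{lem:conditional_continuity} shows that $p_t(\cdot\mid\ybm)$ is another solution with the same field. The initial data agree: at $t=0$ we have $\xbm_t=\xbm_0$, which is independent of $\ybm$, so $p_0(\cdot\mid\ybm)=\Ncal(\bm 0,\Ibm)$.

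To conclude $\mu_t=p_t(\cdot\mid\ybm)$, we would invoke uniqueness of measure-valued solutions of the continuity equation under a globally Lipschitz velocity. This is the standard superposition and uniqueness result (Ambrosio--Gigli--Savar\'e, Prop.~8.1.7/8.1.8), which requires $\int_0^\tau\!\int\|\vbm\|\,\dd p_t\,\dd t<\infty$. That integrability holds because of the linear growth bound and because $p_t(\cdot\mid\ybm)$ has Gaussian tails.

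An alternative that avoids the PDE theorem is a duality argument. For a smooth compactly supported $\psi$, solve the backward transport equation $\partial_s g+\vbm\cdot\nabla g=0$ with $g(\tau)=\psi$ by composing $\psi$ with the flow, i.e.\ $g(s,\cdot)=\psi\circ\Phi_{s\to\tau}$. Then check that $t\mapsto\int g(t,\cdot)\,p_t\,\dd\xbm$ is constant in $t$.

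This is the step we expect to be the main obstacle. The difficulty is the technical justification of that computation: $g$ is only Lipschitz in $\xbm$, and it is not compactly supported in a way that trivially kills the boundary terms. We would handle this by mollifying the field or by using the cited uniqueness theorem directly.

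\textbf{Posterior limit.} With $\xbm_1\sim p(\xbm_1\mid\ybm)$ and $\xbm_0$ independent, write $\xbm_\tau=\xbm_1+(1-\tau)(\xbm_0-\xbm_1)$. Then
\begin{equation*}
\E\|\xbm_\tau-\xbm_1\|_2\le(1-\tau)\bigl(\E\|\xbm_0\|_2+R\bigr)\to0 .
\end{equation*}
So $\xbm_\tau\to\xbm_1$ in $L^1$, hence in probability, and therefore in distribution. Since the law of $\xbm_\tau$ given $\ybm$ is $p_\tau(\cdot\mid\ybm)$, this yields \eqref{eq:proof_weak_limit}.
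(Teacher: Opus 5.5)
Your proposal is correct and follows the paper's proof step for step: Cauchy--Lipschitz well-posedness from the uniform Lipschitz bound and linear growth, then identification of the pushforward with $p_\tau(\cdot\mid\ybm)$ via uniqueness for the continuity equation with the common Gaussian initial law, then the coupling $\xbm_\tau=\tau\xbm_1+(1-\tau)\xbm_0\to\xbm_1$ for the weak limit. The differences are minor: you make the uniqueness step precise by citing the Ambrosio--Gigli--Savar\'e result and checking its integrability hypothesis, which the paper only asserts, and you use $L^1$ rather than almost-sure convergence in the final step.
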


\begin{proof}
Lemma~\ref{lem:mean_regularity} and Lemma~\ref{lem:regularity} show that the velocity is continuous in time and
globally Lipschitz in $\xbm$, uniformly on every interval $[0,\tau]$ with
$\tau<1$. It also has at most linear growth in $\xbm$. The
Picard--Lindel\"of theorem therefore gives a unique global flow map on each
such interval.

The pushforward of the initial distribution through this flow satisfies the
continuity equation associated with $\vbm$. By
Lemma~\ref{lem:conditional_continuity}, $p_t(\cdot\mid\ybm)$ satisfies the
same equation. The two solutions also have the same initial distribution
because $\xbm_0$ is independent of $\ybm$:
\begin{equation}
    p_0(\cdot\mid\ybm)
    =
    \Ncal(\bm 0,\Ibm).
\end{equation}
Uniqueness of the transported measure under a globally Lipschitz velocity
proves~\eqref{eq:proof_pushforward}.

Finally, couple the conditional path by drawing
$\xbm_1\sim p(\xbm_1\mid\ybm)$ and
$\xbm_0\sim\Ncal(\bm 0,\Ibm)$ independently. Then
\begin{equation}
    \xbm_\tau
    =
    \tau\xbm_1+(1-\tau)\xbm_0
    \longrightarrow
    \xbm_1
    \qquad\text{almost surely}.
\end{equation}
Almost-sure convergence implies convergence in distribution, proving
\eqref{eq:proof_weak_limit}. The result requires no extension of the flow
map to $t=1$.
\end{proof}

\subsection{Proof of Proposition~\ref{prop:conditional_flow}}
\label{ap_sec:proof_pros1}

\begin{proof}
Lemma~\ref{lem:posterior_variational} proves that the
measurement-conditioned posterior mean is the unique minimizer of
\eqref{eq:mmse_variational}. Lemma~\ref{lem:velocity_identity} converts this
mean into the optimal conditional velocity. Finally,
Lemmas~\ref{lem:conditional_continuity} and~\ref{lem:posterior_transport}
show that the resulting velocity defines the conditional probability flow
and converges weakly to the measurement-conditioned posterior. These are
the two claims of Proposition~\ref{prop:conditional_flow}.
\end{proof}

\section{Algorithm Details}\label{app:alg_details}

This section provides implementation details of the HQS iterations used to parameterize the conditional velocity. The main text gives the HQS objective and its two alternating subproblems in~\eqref{eq:x_update} and~\eqref{eq:z_update}.

\subsection{Data-Consistency Update}\label{app:x_update}
The $\xbm$-subproblem is
\begin{equation}
    \xbm^{k+1}
    =
    \argmin_{\xbm}
    \left\{
        \frac{1}{2\sigma^2}
        \|\ybm-\Abm\xbm\|_2^2
        +
        \frac{\mu}{2}
        \|\xbm-\zbm^k\|_2^2
    \right\}.
    \label{eq:appendix_x_update}
\end{equation}
Its optimality condition is
\begin{equation}
    \left(\sigma^{-2}\Abm^\top\Abm+\mu\Ibm\right)\xbm^{k+1}
    =
    \sigma^{-2}\Abm^\top\ybm+\mu\zbm^k.
\end{equation}
Defining $\rho=\mu\sigma^2$ gives the normal equations used by the
implementation,
\begin{equation}
    \left(\Abm^\top\Abm+\rho\Ibm\right)\xbm^{k+1}
    =
    \Abm^\top\ybm+\rho\zbm^k.
    \label{eq:normal_equations}
\end{equation}
The update depends explicitly on the forward operator $\Abm$. Closed-form
solutions for the operators considered in this work are given in
Appendix~\ref{app:operator_solutions}.

\subsection{Learned Regularization Update}

The exact $\zbm$-subproblem is
\begin{equation}
    \zbm_\star^{k+1}
    =
    \argmin_{\zbm}
    \left\{
        \phi_t(\zbm)
        +
        \frac{1}{2(1-t)^2}
        \|\xbm_t-t\zbm\|_2^2
        +
        \frac{\mu}{2}
        \|\xbm^{k+1}-\zbm\|_2^2
    \right\}.
    \label{eq:appendix_z_update}
\end{equation}
The two quadratic terms can be written as the weighted residual
\begin{equation}
    \frac{1}{2}
    \left\|
        \widetilde{\ybm}_k-\widetilde{\Abm}_k\zbm
    \right\|_{\widetilde{\Sigmabm}_k^{-1}}^2,
\end{equation}
where
\begin{equation}
    \widetilde{\ybm}_k
    =
    \begin{bmatrix}
        \xbm_t\\
        \xbm^{k+1}
    \end{bmatrix},
    \qquad
    \widetilde{\Abm}_k
    =
    \begin{bmatrix}
        t\Ibm\\
        \Ibm
    \end{bmatrix},
    \qquad
    \widetilde{\Sigmabm}_k
    =
    \begin{bmatrix}
        (1-t)^2\Ibm & \zerobm\\
        \zerobm & \mu^{-1}\Ibm
    \end{bmatrix}.
    \label{eq:z_augmented_terms}
\end{equation}
Completing the square reduces the two observations to the sufficient statistic
\begin{equation}
    \ubm_k
    =
    \frac{
        \dfrac{t}{(1-t)^2}\xbm_t
        +
        \mu\xbm^{k+1}
    }{
        \dfrac{t^2}{(1-t)^2}+\mu
    },
    \label{eq:z_sufficient_statistic}
\end{equation}
with precision
\begin{equation}
    \Mbm_k
    =
    \left(
        \frac{t^2}{(1-t)^2}+\mu
    \right)\Ibm.
\end{equation}
Consequently, ~\eqref{eq:appendix_z_update} is equivalent, up to terms
independent of $\zbm$, to
\begin{equation}
    \zbm_\star^{k+1}
    =
    \argmin_{\zbm}
    \left\{
        \phi_t(\zbm)
        +
        \frac{1}{2}
        \|\ubm_k-\zbm\|_{\Mbm_k}^2
    \right\}.
    \label{eq:z_reduced_update}
\end{equation}
Thus, the exact update denoises a precision-weighted combination of the
interpolation state and the data-consistent iterate. The reduction identifies
the form of the regularization subproblem but does not imply that an arbitrary
minimizer of~\eqref{eq:z_reduced_update} is a posterior mean. In particular,
the converse direction of the MMSE characterization in
\cite{gribonval2013reconciling} does not hold in general.

Because $\phi_t$ is unavailable, we follow PnP
practice~\cite{venkatakrishnan2013plug} and replace the exact update with a
learned estimator,
\begin{equation}
    \widehat{\zbm}^{k+1}
    =
    \Rsf_{\thetabm}
    \left(
        \xbm^{k+1},
        \xbm_t,
        t
    \right).
    \label{eq:learned_z_update}
\end{equation}
The network receives the two components of~\eqref{eq:z_sufficient_statistic}
separately, allowing their relative weighting to depend on $t$ and the HQS
parameters.

Following~\cite{ryu2019plug}, we damp the learned estimate toward the
data-consistent iterate,
\begin{equation}
    \zbm^{k+1}
    =
    (1-\beta)\xbm^{k+1}
    +
    \beta\widehat{\zbm}^{k+1},
    \qquad
    \beta\in[0,1].
    \label{eq:z_update_final}
\end{equation}
Damping retains an explicit contribution from $\xbm^{k+1}$ and prevents the
learned estimator from completely overwriting the data-consistent iterate.

\subsection{Velocity Parameterization and Cost}

Starting from $\zbm^0=\xbm_t$, we alternate \eqref{eq:normal_equations}, \eqref{eq:learned_z_update},
and~\eqref{eq:z_update_final} for $K$ iterations. The final iterate estimates
the measurement-conditioned posterior mean,
\begin{equation}
    \zbm_{\thetabm}^K(\xbm_t,t,\ybm)
    \approx
    \E[\xbm_1\mid\xbm_t,\ybm],
\end{equation}
and defines the learned conditional velocity through~\eqref{eq:induced_velocity}.

The network parameters $\thetabm$ enter only through
$\Rsf_{\thetabm}$. Each velocity evaluation requires $K$ data-consistency
solves and $K$ learned-estimator evaluations. Since every data-consistency
solve depends on $\Abm$ and $\ybm$, the forward operator acts within every
inner iteration of the learned velocity field.

Algorithm~\ref{alg:posterior_mean} summarizes the operator-aware posterior mean evaluation.

\begin{algorithm}[t]
    \caption{Operator-aware posterior-mean estimation}
    \label{alg:posterior_mean}
    \begin{algorithmic}[1]
        \Require Interpolation state $\xbm_t$, time $t<1$, measurement $\ybm$,
        forward operator $\Abm$, learned estimator $\Rsf_{\thetabm}$,
        iteration count $K$, coupling $\rho$, and damping $\beta$
        \Ensure Posterior-mean estimate $\zbm^K$
        \State $\zbm^0 \gets \xbm_t$
        \For{$k=0,\ldots,K-1$}
            \State $\xbm^{k+1} \gets
            \left(\Abm^\top\Abm+\rho\Ibm\right)^{-1}
            \left(\Abm^\top\ybm+\rho\zbm^k\right)$
            \State $\widehat{\zbm}^{k+1} \gets
            \Rsf_{\thetabm}(\xbm^{k+1},\xbm_t,t)$
            \State $\zbm^{k+1} \gets
            (1-\beta)\xbm^{k+1}+\beta\widehat{\zbm}^{k+1}$
        \EndFor
        \State \Return $\zbm^K$
    \end{algorithmic}
\end{algorithm}

\subsection{Operator-Specific Data-Consistency Solutions}
\label{app:operator_solutions}

\paragraph{Image denoising.}
For denoising, $\Abm=\Ibm$, and ~\eqref{eq:normal_equations} gives
\begin{equation}
    \xbm^{k+1}
    =
    \frac{\ybm+\rho\zbm^k}{1+\rho}.
    \label{eq:denoising_update}
\end{equation}
The update is a weighted average of the measurement and the current iterate.

\paragraph{Image inpainting.}
For inpainting, $\Abm=\Mbm$ is a diagonal binary mask satisfying
$\Mbm^\top\Mbm=\Mbm$. Equation~\eqref{eq:normal_equations} can therefore be
solved elementwise,
\begin{equation}
    x_i^{k+1}
    =
    \begin{cases}
        \dfrac{y_i+\rho z_i^k}{1+\rho}, & i\in\Omega,\\[2ex]
        z_i^k, & i\notin\Omega,
    \end{cases}
    \label{eq:inpainting_update}
\end{equation}
where $\Omega$ denotes the observed pixels.

\paragraph{Image super-resolution.}
We implement super-resolution using stride subsampling without an anti-aliasing
prefilter. The forward operator retains a periodic subset of pixels, and
$\Abm^\top\Abm$ is a diagonal binary mask. The update is therefore
~\eqref{eq:inpainting_update}, with $\ybm$ replaced by
$\Abm^\top\ybm$ and $\Omega$ denoting the sampled grid locations.

\paragraph{Image deblurring.}
For deblurring, $\Abm=\Hbm$ is a spatially invariant blur. Under circular
boundary conditions,
$\Hbm=\Fbm^H\bm{\Lambda}\Fbm$, where $\bm{\Lambda}$ contains the Fourier
coefficients of the blur kernel. Equation~\eqref{eq:normal_equations} then gives
\begin{equation}
    \xbm^{k+1}
    =
    \Fbm^H
    \left[
        \frac{
            \overline{\bm{\Lambda}}\,\Fbm\ybm
            +
            \rho\Fbm\zbm^k
        }{
            |\bm{\Lambda}|^2+\rho
        }
    \right],
    \label{eq:deblurring_update}
\end{equation}
where all operations inside the brackets are elementwise.

\paragraph{Subsampled Fourier measurements.}
For the compressed-sensing experiments in Appendix~\ref{app_sec:cs}, let
$\Abm=\Pbm\Fbm$, where $\Fbm$ is the orthonormal two-dimensional DFT and
$\Pbm=\operatorname{diag}(\bm{p})$ is the sampling mask. Writing
$\widehat{\zbm}^{,k}=\Fbm\zbm^k$, the Fourier-domain update is
\begin{equation}
    \widehat{x}_i^{,k+1}
    =
    \begin{cases}
        \dfrac{y_i+\rho\widehat{z}_i^{,k}}{1+\rho},
        & i\in\Omega,\\[2ex]
        \widehat{z}_i^{,k},
        & i\notin\Omega,
    \end{cases}
    \qquad
    \xbm^{k+1}
    =
    \operatorname{Re}
    \left\{
        \Fbm^H\widehat{\xbm}^{,k+1}
    \right\}.
    \label{eq:cs_update}
\end{equation}
For the conjugate-symmetric mask used in our experiments, taking the real part
is equivalent to solving the corresponding real normal equations. The operator
is applied independently to each color channel.

\paragraph{General linear operators.}
For operators without exploitable structure, ~\eqref{eq:normal_equations}
can be solved using conjugate gradients. All operators used in this work admit
the closed-form updates described above.

\section{Related Works}\label{app:rel_work}
 
Generative solvers for imaging inverse problems differ mainly in whether the forward operator $\Abm$ enters only at inference, through an operator-aware architecture, or during conditional training. We organize prior work along this axis.
 
\paragraph{Inference-time guidance with a pretrained prior.}
Diffusion-based methods combine an unconditional prior with measurement consistency at sampling time. DPS and $\Pi$GDM approximate likelihood guidance using denoised clean-image estimates \cite{chung2023diffusion,song2023pseudoinverseguided}; DDRM and DDNM exploit the spectral or null-space structure of $\bm A$ \cite{kawar2022denoising,wang2023ddnm}; and DiffPIR, variational PnP diffusion, and DAPS alternate generative updates with optimization, projection, or conditional resampling \cite{zhu2023denoising,mardani2024variational,zhang2025improving}. Flow-based counterparts include OT-ODE, D-Flow, Flow Priors, FlowDPS, PnP-Flow, and Flower \cite{pokle2024training,benhamu2024dflow,zhang2024flow,kim2025flowdps,martin2025pnpflow,pourya2026flower}. These methods can reuse a pretrained prior across many compatible operators, but construct only an approximation to the measurement-conditional dynamics at inference. A recent posterior-transport analysis places several such approximations in a common framework and bounds their posterior bias \cite{xu2026flow}.
 
\paragraph{Amortized conditional models.}
Conditional restoration methods instead learn the measurement-to-image dynamics from paired data. Examples include SR3, Palette, InDI, I\textsuperscript{2}SB, conditional diffusion bridges, ResShift, and ResFlow \cite{saharia2022image,saharia2022palette,delbracio2023inversion,liu2023i2sb,chung2023cddb,yue2023resshift,qin2025resflow}. Within flow matching, data-dependent stochastic interpolants and conditional Wasserstein formulations provide theoretical foundations \cite{albergostochastic,chemseddine2025conditional}, while DAWN-FM conditions its velocity field on a measurement embedding, typically $\Abm^{\top}\ybm$, and the noise level \cite{ahamed2024dawn}. Most such models are trained for a particular task or degradation family and encode the measurement through $\ybm$, $\Abm^{\top}\ybm$, or a learned representation. Our distinction is that $\bm A$ acts explicitly through data-consistency updates inside the generative dynamics.

\paragraph{Operator-aware learned solvers.}Plug-and-play methods retain the forward operator in a data-fidelity update while replacing the regularizer with a learned denoiser \cite{venkatakrishnan2013plug,zhang2021plug,hurault2022gradient}. Algorithm unrolling makes these iterations end-to-end trainable, yielding methods such as ISTA-Net, ADMM-Net, learned primal--dual, MoDL, and VarNet \cite{zhang2018ista,yang2016admmnet,adler2018learnedpd,aggarwal2019modl,sriram2020varnet}. MoDL is particularly close in its alternation between data consistency and a shared-weight learned update. These architectures are typically trained as point estimators and do not by themselves define posterior samplers. Our method imports their operator-aware structure into a generative velocity field by applying data consistency at every inner iteration of every ODE step.

\paragraph{Distortion--perception control.}The distortion--perception trade-off is fundamental in image restoration \cite{blau2018perception}. For MSE distortion and Wasserstein perception, optimal estimators trace a Wasserstein geodesic between the minimum-MSE and perfect-perception endpoints \cite{freirich2021theory}. PMRF operationalizes this idea by transporting a posterior-mean estimate toward the data distribution \cite{ohayon2025pmrf}; related interpolation mechanisms appear in InDI and data-dependent interpolants \cite{delbracio2023inversion,albergostochastic}. Flow-map denoisers similarly produce a one-parameter family of estimators that traverses the distortion--perception plane, with optimality established in the Gaussian setting \cite{zilberstein2026flowmap}. Our Euler step count $N$ plays an analogous empirical role. Under the ideal conditional velocity, one Euler step reduces to a conditional-mean update; for a learned velocity field, the $N{=}1$ output should be interpreted as posterior-mean-like rather than exactly equal to the posterior mean. Increasing $N$ evaluates the learned estimator at progressively lower interpolation noise levels, where posterior modes become more resolvable.

Our model is trained for a specified family of forward operators and noise levels,  placing it in the amortized conditional family. The central question is whether, given operator-specific training, explicitly embedding $\bm A$ in the dynamics improves upon conditioning a network only on an image-space measurement. We investigate this question in Appendix~\ref{app:hqs_ablation}. We also compare with inference-time flow solvers because they are established reference points for the benchmark \cite{pokle2024training,benhamu2024dflow,zhang2024flow,martin2025pnpflow,pourya2026flower}. Methods that amortize across families of operators \cite{terris2026ram,elata2025invfussion} provide a natural direction for extending our approach.

\section{Implementation Details} \label{app:implementation}

\paragraph{Estimator network.}
The learned estimator $\Rsf_{\thetabm}$ is a U-Net that takes the concatenation of the
data-consistent iterate $\xbm^{k+1}$ and the flow state $\xbm_t$ along the channel
dimension ($2C$ input channels) and is conditioned on $t$ through a sinusoidal
embedding of width $4\,\mathrm{ch}$ injected into every residual block. Blocks use
GroupNorm with 32 groups and Swish activations, with self-attention at the resolutions
listed in Table~\ref{tab:hparams}. Weights are shared across the $K$ inner
iterations, so the parameter count of the full model equals that of a single U-Net.
CelebA models use $\mathrm{ch}=32$ with multipliers $(1,2,4,8)$ ($34.5$M parameters);
AFHQ-Cat models use multipliers $(1,2,4,8,8)$ with $\mathrm{ch}=32$ ($59.9$M), except
deblurring which uses $\mathrm{ch}=64$ ($239.4$M).

\paragraph{Solver.}
We use $K=5$ solver inner iterations, damping $\beta=0.5$, and a fixed coupling
$\rho=0.01$ for every operator and noise level. Writing the data-consistency step as
$\xbm^{k+1}=(\Abm^{\top}\Abm+\rho\Ibm)^{-1}(\Abm^{\top}\ybm+\rho\zbm^{k})$ absorbs the
noise variance into $\rho=\mu\sigma^{2}$; the solve is closed-form for all operators we
consider (Appendix~\ref{app:alg_details}). The iteration is initialized at
$\zbm^{0}=\xbm_t$.

\paragraph{Training.}
AdamW with learning rate $10^{-4}$ decayed by a cosine schedule to $10^{-6}$, weight
decay $10^{-4}$, gradient-norm clipping at $1.0$, and no mixed precision. We train with
a total batch size of $32$ (8 per GPU on 4 GPUs, DDP) for
$500$ epochs of $2000$ steps on CelebA and $700$ epochs of $750$ steps on AFHQ-Cat,
where each epoch resamples the training set with replacement. An exponential moving
average of the weights with decay $0.999$ is maintained from epoch $20$ and used for all
validation and test-time evaluation. For the first $20$ epochs we train the estimator
without data-consistency, using a single network call on the concatenation of $\Abm^{\top}\ybm$
and $\xbm_t$ and then enable the $K=5$ data-term solver for the remainder of
training; this warm start avoids backpropagating through five iterations of an
untrained estimator. Times $t$ are drawn by low-discrepancy stratified sampling over
$64$ equal strata of $[t_{\min},t_{\max}]$ rather than i.i.d.\ uniformly, which reduces
the variance of the epoch-averaged loss at small batch size.  Training roughly takes between 24--48 hours. 

\paragraph{Evaluation protocol.}
PSNR is computed on $[0,1]$ with \texttt{data\_range}$=1$,
matching the postprocessing used by \cite{pourya2026flower} so that the numbers are directly comparable; SSIM is computed on the clamped $[0,1]$ image and LPIPS with an AlexNet backbone on the clamped $[-1,1]$ image. All runs use seed $42$.

\begin{table}[t]
\caption{Hyperparameters. All values are held fixed across the five tasks unless noted.}
\label{tab:hparams}
\centering
\small
\begin{tabular}{llcc}
\toprule
& & CelebA ($128^2$) & AFHQ-Cat ($256^2$) \\
\midrule
\multirow{4}{*}{Solver}
& iteration depth $K$                & 5 & 5 \\
& damping $\beta$                    & 0.5 & 0.5 \\
& coupling $\rho$                    & 0.01 & 0.01 \\
& initialization $\zbm^{0}$          & $\xbm_t$ & $\xbm_t$ \\
\midrule
\multirow{4}{*}{Flow}
& $t_{\min}$ / $t_{\max}$            & $10^{-3}$ / $0.995$ & $10^{-3}$ / $0.995$ \\
& clamp $\tau_{\min}$         & 0.1 & 0.1 \\
& integrator                         & Euler & Euler \\
\midrule
\multirow{5}{*}{Network}
& base width $\mathrm{ch}$           & 32 & 32 (64 for deblurring) \\
& channel multipliers                & $(1,2,4,8)$ & $(1,2,4,8,8)$ \\
& residual blocks per level          & 6 & 6 \\
& attention resolutions              & 16, 8 & 32, 16 \\
& parameters                         & 34.5\,M & 59.9\,M (239.4\,M) \\
\midrule
\multirow{6}{*}{Optimization}
& optimizer                          & AdamW & AdamW \\
& learning rate (cosine to $10^{-6}$)& $10^{-4}$ & $10^{-4}$ \\
& weight decay / grad clip           & $10^{-4}$ / 1.0 & $10^{-4}$ / 1.0 \\
& batch size (total)                 & 32 & 32 \\
& epochs $\times$ steps per epoch    & $500\times2000$ & $700\times750$ \\
& EMA decay (from epoch 20)          & 0.999 & 0.999 \\
\bottomrule
\end{tabular}
\end{table}

\subsection{Baselines}
The flow baselines share the pretrained prior of \cite{martin2025pnpflow}: $34.5$M parameters on CelebA and $31.0$M on AFHQ-Cat (computed from the released checkpoints). Our CelebA models match this capacity exactly; on AFHQ-Cat we use a larger estimator for deblurring, the hardest of our settings at $256\times256$, where smaller estimators underfit. Its parameters are shared across the $K$ inner iterations.

\paragraph{PnP-HQS (DPIR).}
PnP-HQS is DPIR~\cite{zhang2021plug}, which is a  half-quadratic splitting with a DRUNet
denoiser prior, alternating the closed-form data step
$(\Abm^\top\Abm+\rho_k\Ibm)\xbm=\Abm^\top\ybm+\rho_k\zbm$ with a denoiser step at
noise level $\sigma_k$, log-spaced from $49/255$ down to the measurement noise, with
$\rho_k=\lambda\sigma^2/\sigma_k^2$ and $\lambda=0.23$ (DPIR default). Operators and
measurements are taken from our own implementation, so the degradations are identical
to those used by our method and the other baselines. To avoid handicapping the
baseline with a generic prior, we train the DRUNet ($32.6$M parameters) on the same
data and preprocessing as our own models, AFHQ-Cat at $256\times256$, the CelebA denoiser is trained analogously at $128\times128$. Iteration counts are tuned per
task to the plateau of a validation subset. On AFHQ-Cat, $24$ for deblurring and
super-resolution, $100$ for random inpainting, which needs many more iterations to
propagate the fill across missing pixels, and $1$ for denoising, where
$\Abm=\Ibm$ makes the data step trivial and DPIR reduces to a single denoiser pass
(more iterations are worse, $32.32$ and $32.36$\,dB at $8$ and $16$ iterations, against
$32.69$\,dB for the single pass). Same number of iterations was used for CelebA experiments as well. 
Super-resolution is initialized from a bicubic upsampling of the measurement, as in
DPIR; the remaining tasks use $\xbm_0=\Abm^\top\ybm$. We do not report box inpainting, since a Gaussian denoiser is non-generative and cannot fill a large contiguous hole. 

\paragraph{ISTA-Net$^+$.}
We use ISTA-Net$^+$~\cite{zhang2018ista}  as a representative deep-unrolling baseline. We use $K$ proximal-gradient steps, each a data-fidelity gradient step $\xbm\leftarrow\xbm-\lambda_k\Abm^\top(\Abm\xbm-\ybm)$ followed by a learned proximal (convolutional forward transform, soft-threshold, inverse transform, residual), where we trained end-to-end with the standard reconstruction plus symmetry loss ($\gamma=0.01$).  We
adapt it to three-channel images and to the measurement operators used in this paper. We use $K=9$ layers with $32$ channels and unshared weights ($0.35$M parameters), and train one model per task with Adam, learning rate $10^{-4}$, initialized at $\Abm^\top\ybm$ (bicubic for super-resolution). Box inpainting and denoising are omitted since compact convolutional proximal cannot fill a
large contiguous hole.

\section{Additional Results}\label{sec_app:add_res}
\subsection{Additional Ablation}
\paragraph{Damping coefficient $\beta$.}
Table~\ref{tab:ablation_beta} reports quality for $\beta\in\{0.25,0.5,0.75,1.0\}$ on random inpainting at $N{=}2$, $K{=}5$. The coefficient sets how much of each update comes from the learned estimator rather than the data-consistent iterate, so $\beta{=}1$ discards the data-consistent iterate from the update (though $\xbm^{k+1}$ still enters $\Rsf_\theta$ as an input) and is worst on all three metrics. Among the damped settings differences are small, all ahead of every baseline in Table~\ref{tab:repro_celeba}. Thus we use $\beta{=}0.5$ for all tasks rather than tuning per setting.

\begin{table}[t]
\centering
\caption{Damping $\beta$}
\label{tab:ablation_beta}
\begin{tabular}{cccc}
\toprule
$\beta$ & PSNR $\uparrow$ & SSIM $\uparrow$ & LPIPS $\downarrow$ \\
\midrule
0.25 & 35.30 &0.965 & 0.018 \\
0.50 & 35.03 & 0.963 &  0.018 \\
0.75 & 35.17 & 0.964 & 0.018 \\
1.00 & 34.70 & 0.961& 0.020 \\
\bottomrule
\end{tabular}
\end{table}

\subsection{Ablating the HQS}\label{app:hqs_ablation}
We evaluate whether embedding the measurement operator $\Abm$ in the generative dynamics improves upon conditioning a network on the measurement. Here, we compare this effect by using the same network, objective, training schedule, but we concatenate  $\Abm^\top\ybm$ as an additional conditioning channel with $\xbm_t$ as input in place of the HQS solver. The cost will be reduced, since we will not have   the  inner iterations of HQS. We report the results for random inpainting and deblurring tasks, across different ODE step-sizes in Table~\ref{tab:ablation_flowonly}.

\begin{table}[t]
\centering
\caption{Ablation of the operator embedding on CelebA ($100$ images). \emph{Flow-only}
is the same U-Net trained under the same objective on the concatenation of
$\Abm^\top\ybm$ and $\xbm_t$, without the HQS solver, costing one network call per
Euler step against our $K{=}5$; rows are matched by step count $N$, so our entries use
five times the function evaluations. Better of each pair in bold.}
\label{tab:ablation_flowonly}
\small
\begin{tabular}{llccccccc}
\toprule
& & \multicolumn{3}{c}{Random inpainting} & & \multicolumn{3}{c}{Deblurring} \\
\cmidrule(lr){3-5} \cmidrule(lr){7-9}
$N$ & Method & PSNR$\uparrow$ & SSIM$\uparrow$ & LPIPS$\downarrow$ & & PSNR$\uparrow$ & SSIM$\uparrow$ & LPIPS$\downarrow$ \\
\midrule
\multirow{2}{*}{$2$}
 & Flow-only & 34.54 & 0.955 & \textbf{0.0186} & & 35.77 & 0.952 & \textbf{0.0281} \\
 & Ours      & \textbf{35.03} & \textbf{0.963} & 0.0189 & & \textbf{36.04} & \textbf{0.957} & 0.0295 \\
\hdashline
\multirow{2}{*}{$4$}
 & Flow-only & 34.15 & 0.953 & 0.0172 & & 35.45 & 0.950 & \textbf{0.0256} \\
 & Ours      & \textbf{34.67} & \textbf{0.962} & \textbf{0.0169} & & \textbf{35.78} & \textbf{0.956} & 0.0280 \\
\hdashline
\multirow{2}{*}{$10$}
 & Flow-only & 33.53 & 0.948 & 0.0149 & & 34.72 & 0.943 & \textbf{0.0209} \\
 & Ours      & \textbf{33.98} & \textbf{0.956} & \textbf{0.0146} & & \textbf{35.03} & \textbf{0.950} & 0.0235 \\
\hdashline
\multirow{2}{*}{$25$}
 & Flow-only & 33.07 & 0.943 & 0.0136 & & \textbf{34.03} & \textbf{0.934} & \textbf{0.0174} \\
 & Ours      & \textbf{33.42} & \textbf{0.951} & \textbf{0.0134} & & 31.46 & 0.848 & 0.0265 \\
\bottomrule
\end{tabular}
\end{table}

At equal compute, our model at $N{=}2$ ($10$ NFEs) improves over Flow-only at $N{=}10$ ($10$ NFEs) by $1.5$\,dB PSNR on random inpainting and $1.3$\,dB on deblurring, with higher SSIM, while Flow-only attains lower LPIPS. This is consistent with the distortion--perception trade-off controlled by $N$: more Euler steps move either model toward a more perceptual operating point. At matched step count, our model improves PSNR and SSIM in all but one setting.

\subsection{Posterior recovery on a toy problem}\label{app:toy}

To evaluate posterior fidelity against ground truth, we consider a two-dimensional
problem with an analytical posterior. Let $\xbm=(u,v)\in\R^2$ follow an equally
weighted four-component Gaussian mixture with means $(\pm1.5,\pm1.5)$ and
component standard deviation $0.2$. We observe
$\ybm=\Abm\xbm+\etabm$ with $\Abm=\mathrm{diag}(1,0)$ and $\sigma=0.1$.
Thus, $u$ is observed while $p(v|\ybm)$ remains bimodal, with its mean at the
low-density point $v=0$. We replace the U-Net with a small MLP, retain
$K{=}5$, $\beta{=}0.5$, and $\rho{=}0.01$, and draw $4000$ samples for each
fixed measurement.

At $N{=}50$, the sampler recovers both posterior modes with empirical weights
$0.49/0.51$, compared with the exact $0.50/0.50$
(Figure~\ref{fig:toy_gmm}). The masked-coordinate distance is
$W_1(v)\approx0.1$, and the observed coordinate agrees with the measurement
within $\pm0.02$. The remaining error is primarily due to mild
under-dispersion of the sampled modes.

\begin{figure}[t]
\centering
\includegraphics[width=\textwidth]{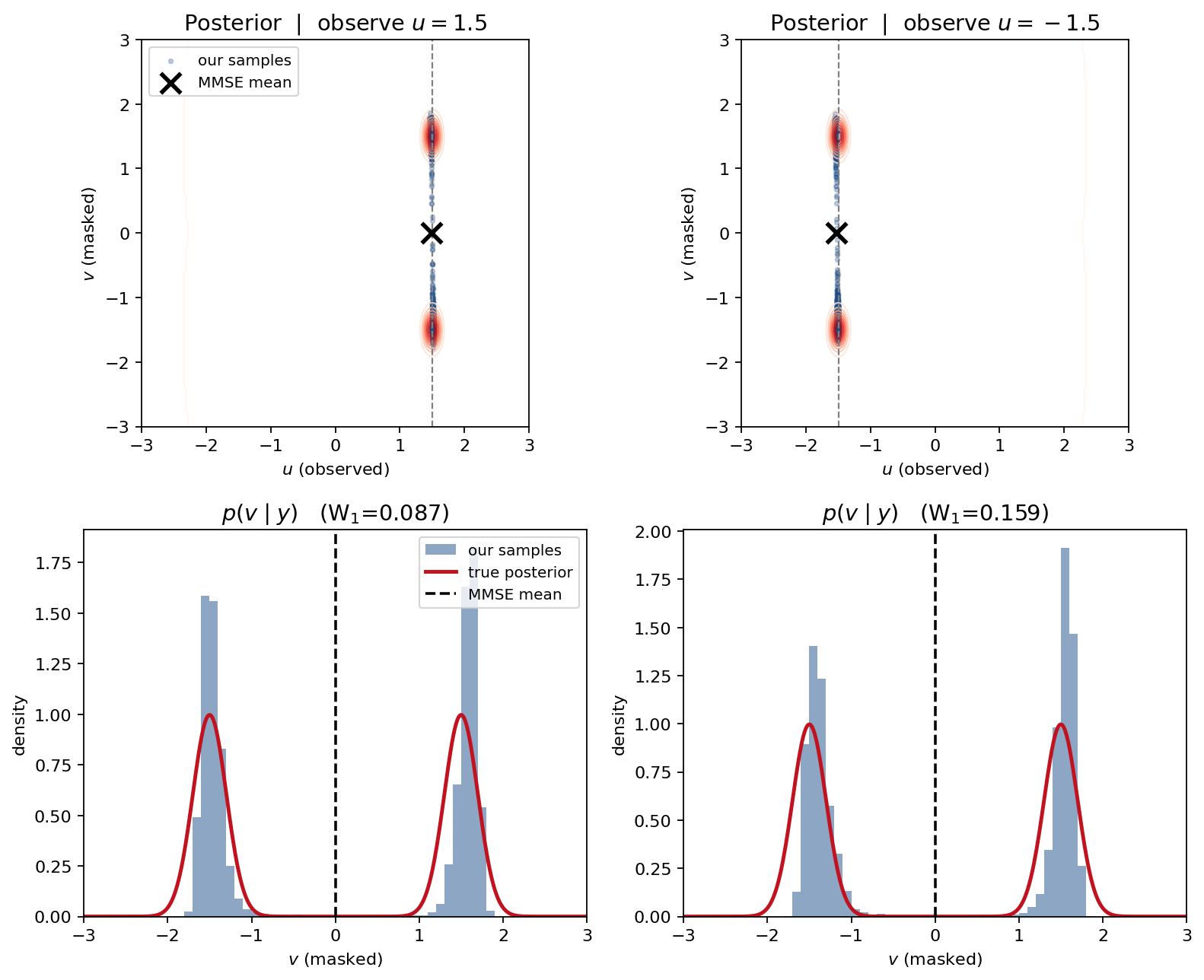}
\caption{Posterior recovery at $N{=}50$. \textbf{Top:} samples (blue) and
analytical posterior contours (red) for measurements near $u=\pm1.5$; the
posterior mean ($\times$) lies between the modes. \textbf{Bottom:} sampled
and analytical marginals $p(v|\ybm)$.}
\label{fig:toy_gmm}
\end{figure}

\paragraph{Effect of the number of sampling steps.}
Table~\ref{tab:toy_nsweep} and Figure~\ref{fig:toy_nsweep} show that the
$N{=}1$ output collapses to the posterior-mean region, whereas increasing $N$
progressively resolves both modes. Mode coverage reaches $77\%$ at $N{=}2$
and $93\%$ at $N{=}4$; for $6\le N\le25$, the recovered weights and mode
locations are within $1\%$ and $0.05$ of their analytical values.

For uniform Euler integration, $\Delta t=1/N=1-t_{N-1}$, so the final update
satisfies
\[
\xbm_N
=\E_{\thetabm}(\xbm_{N-1},t_{N-1},\ybm),
\qquad t_{N-1}=\frac{N-1}{N}.
\]
Increasing $N$ therefore moves the final estimator evaluation closer to
$t=1$, where the interpolation noise is smaller, while also changing the
state produced by the preceding Euler steps. This explains the transition
from posterior-mean-like estimates at small $N$ to mode-resolving samples at
larger $N$, consistent with the distortion--perception trade-off observed in
the image experiments.

\begin{table}[t]
\centering
\caption{Posterior recovery versus $N$ for $u{=}{+}1.5$ using $4000$
samples. The analytical posterior has negligible mass in $|v|<0.75$, equal
mode weights, and modes near $\pm1.5$.}
\label{tab:toy_nsweep}
\small
\begin{tabular}{lcccc}
\toprule
$N$ & 1 & 2 & 4 & 50 \\
\midrule
$W_1(v)\downarrow$        & 1.44 & 0.44 & 0.13 & 0.11 \\
mass in $|v|<0.75$        & 100\% & 23\% & 7\% & 0.5\% \\
mode weights              & --- & 0.50/0.50 & 0.52/0.48 & 0.49/0.51 \\
empirical modes           & 0 & $\pm1.26$ & $\pm1.45$ & $\pm1.52$ \\
\bottomrule
\end{tabular}
\end{table}

\begin{figure}[t]
\centering
\includegraphics[width=\textwidth]{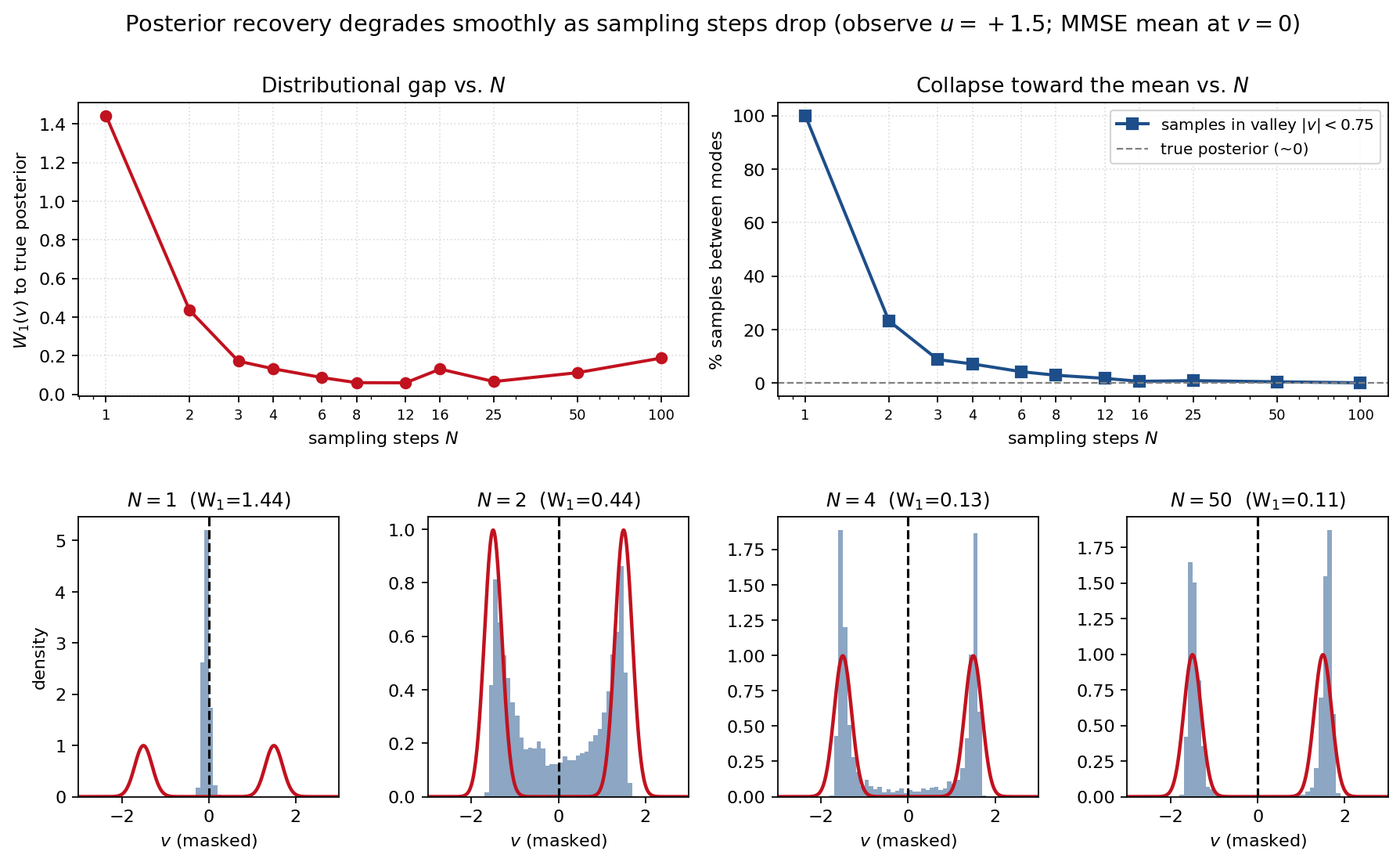}
\caption{Posterior recovery versus $N$. \textbf{Top:} $W_1(v)$ and mass in
the low-density valley. \textbf{Bottom:} sampled (blue) and analytical (red)
marginals. The modes become resolved near $N\approx6$, followed by mild drift
and under-dispersion at large $N$.}
\label{fig:toy_nsweep}
\end{figure}




\providecommand{\rotlabel}[1]{\rotatebox[origin=c]{90}{#1}}

\begin{table}[t]
\centering
\caption{Results on 1000 CelebA images. \textbf{Best} and \textbf{second} best results are shown in \bluehl{blue} and \redhl{red} .}
\label{tab:repro_celeba1000}
\setlength{\tabcolsep}{3.5pt}
\renewcommand{\arraystretch}{2.5}
\resizebox{\textwidth}{!}{
\begin{tabular}{cl *{10}{c}}
\toprule
& \multirow{2}{*}{\textbf{Metric}}
& \multirow{2}{*}{Degraded} & \multirow{2}{*}{OT-ODE}
& \multicolumn{2}{c}{PnP-Flow5}
& \multicolumn{2}{c}{Flower5-OT}
& \multicolumn{4}{c}{Ours} \\
\cmidrule(lr){5-6} \cmidrule(lr){7-8} \cmidrule(lr){9-12}
& & & & 1 & 5 & 1 & 5 & $N=2$ & $N=4$ & $N=10$ & $N=25$ \\
\midrule

\multirow{3}{*}{\rotlabel{Denoising}}
& PSNR$\uparrow$
& $20.00 \pm 0.03$ & $30.48 \pm 1.13$ & $31.79 \pm 1.21$ & $32.27 \pm 1.25$ & $32.25 \pm 1.19$ & $33.10 \pm 1.21$
& \bluehl{$33.68\pm1.18$} & \redhl{$33.25\pm1.20$} & $32.27\pm1.23$ & $31.58\pm1.22$ \\
& SSIM$\uparrow$
& $0.351 \pm 0.058$ & $0.856 \pm 0.029$ & $0.903 \pm 0.028$ & $0.909 \pm 0.028$ & $0.912 \pm 0.024$ & $0.924 \pm 0.021$
& \bluehl{$0.931\pm0.019$} & \redhl{$0.927\pm0.020$} & $0.914\pm0.023$ & $0.901\pm0.025$ \\
& LPIPS$\downarrow$
& $0.367 \pm 0.087$ & $0.033 \pm 0.014$ & $0.044 \pm 0.020$ & $0.056 \pm 0.024$ & $0.033 \pm 0.014$ & $0.038 \pm 0.016$
& $0.034\pm0.013$ & $0.034\pm0.013$ & \redhl{$0.031\pm0.012$} & \bluehl{$0.028\pm0.010$} \\
\addlinespace[3pt] \cmidrule(lr){2-12} \addlinespace[3pt]

\multirow{3}{*}{\rotlabel{Deblurring}}
& PSNR$\uparrow$
& $27.82 \pm 1.20$ & $33.00 \pm 1.68$ & $34.56 \pm 1.72$ & $34.89 \pm 1.79$ & $35.01 \pm 1.72$ & $35.71 \pm 1.80$
& \bluehl{$36.09\pm1.72$} & \redhl{$35.82\pm1.77$} & $35.05\pm1.80$ & $31.46\pm0.98$ \\
& SSIM$\uparrow$
& $0.741 \pm 0.028$ & $0.919 \pm 0.024$ & $0.935 \pm 0.016$ & $0.940 \pm 0.016$ & $0.946 \pm 0.016$ & $0.953 \pm 0.015$
& \bluehl{$0.957\pm0.015$} & \redhl{$0.955\pm0.015$} & $0.949\pm0.017$ & $0.848\pm0.039$ \\
& LPIPS$\downarrow$
& $0.123 \pm 0.035$ & $0.029 \pm 0.014$ & $0.037 \pm 0.018$ & $0.044 \pm 0.020$ & \redhl{$0.025 \pm 0.012$} & $0.031 \pm 0.014$
& $0.029\pm0.013$ & $0.027\pm0.011$ & \bluehl{$0.023\pm0.009$} & $0.026\pm0.013$ \\
\addlinespace[3pt] \cmidrule(lr){2-12} \addlinespace[3pt]

\multirow{3}{*}{\rotlabel{Super-resolution}}
& PSNR$\uparrow$
& $10.38 \pm 1.31$ & $31.43 \pm 3.50$ & $31.12 \pm 1.57$ & $31.53 \pm 1.58$ & $32.41 \pm 1.75$ & $33.14 \pm 1.80$
& \bluehl{$34.52\pm2.00$} & \redhl{$34.10\pm2.06$} & $33.27\pm2.05$ & $30.48\pm1.29$ \\
& SSIM$\uparrow$
& $0.185 \pm 0.053$ & $0.903 \pm 0.075$ & $0.900 \pm 0.030$ & $0.904 \pm 0.031$ & $0.921 \pm 0.023$ & $0.930 \pm 0.022$
& \bluehl{$0.949\pm0.017$} & \redhl{$0.946\pm0.019$} & $0.937\pm0.021$ & $0.838\pm0.040$ \\
& LPIPS$\downarrow$
& $0.829 \pm 0.063$ & \redhl{$0.026 \pm 0.063$} & $0.045 \pm 0.017$ & $0.054 \pm 0.020$ & $0.034 \pm 0.013$ & $0.039 \pm 0.015$
& $0.028\pm0.012$ & $0.027\pm0.010$ & \bluehl{$0.023\pm0.008$} & $0.028\pm0.012$ \\
\addlinespace[3pt] \cmidrule(lr){2-12} \addlinespace[3pt]

\multirow{3}{*}{\rotlabel{Random inpainting}}
& PSNR$\uparrow$
& $12.13 \pm 1.51$ & $28.87 \pm 1.89$ & $33.23 \pm 2.24$ & $34.20 \pm 2.29$ & $33.26 \pm 2.25$ & $34.15 \pm 2.29$
& \bluehl{$35.27\pm2.51$} & \redhl{$34.93\pm2.61$} & $34.23\pm2.64$ & $33.66\pm2.61$ \\
& SSIM$\uparrow$
& $0.199 \pm 0.057$ & $0.872 \pm 0.037$ & $0.944 \pm 0.019$ & $0.954 \pm 0.017$ & $0.945 \pm 0.019$ & $0.953 \pm 0.017$
& \bluehl{$0.963\pm0.016$} & \redhl{$0.962\pm0.017$} & $0.957\pm0.019$ & $0.951\pm0.022$ \\
& LPIPS$\downarrow$
& $1.027 \pm 0.077$ & $0.050 \pm 0.023$ & $0.017 \pm 0.007$ & $0.021 \pm 0.008$ & $0.017 \pm 0.007$ & $0.019 \pm 0.007$
& $0.018\pm0.009$ & $0.016\pm0.007$ & \redhl{$0.014\pm0.006$} & \bluehl{$0.013\pm0.006$} \\
\addlinespace[3pt] \cmidrule(lr){2-12} \addlinespace[3pt]

\multirow{3}{*}{\rotlabel{Box inpainting}}
& PSNR$\uparrow$
& $22.29 \pm 1.49$ & $29.68 \pm 2.56$ & $30.44 \pm 2.42$ & $30.99 \pm 2.49$ & $31.19 \pm 2.63$ & $31.81 \pm 2.69$
& \bluehl{$33.28\pm2.44$} & \redhl{$32.63\pm2.50$} & $31.95\pm2.52$ & $31.45\pm2.48$ \\
& SSIM$\uparrow$
& $0.745 \pm 0.038$ & $0.921 \pm 0.021$ & $0.934 \pm 0.017$ & $0.940 \pm 0.016$ & $0.946 \pm 0.013$ & $0.953 \pm 0.012$
& \bluehl{$0.961\pm0.012$} & \redhl{$0.959\pm0.013$} & $0.955\pm0.013$ & $0.949\pm0.014$ \\
& LPIPS$\downarrow$
& $0.211 \pm 0.035$ & $0.035 \pm 0.017$ & $0.036 \pm 0.018$ & $0.042 \pm 0.020$ & $0.021 \pm 0.014$ & $0.022 \pm 0.015$
& $0.022\pm0.012$ & $0.021\pm0.011$ & \redhl{$0.020\pm0.010$} & \bluehl{$0.019\pm0.010$} \\

\bottomrule
\end{tabular}}
\end{table}

\providecommand{\rotlabel}[1]{\rotatebox[origin=c]{90}{#1}}
\begin{table}[t]
\centering
\caption{Results on 400 AFHQ-Cat images. \textbf{Best} and \textbf{second} best results are shown in \bluehl{blue} and \redhl{red} .}
\label{tab:repro_afhq_cat400}
\setlength{\tabcolsep}{3.5pt}
\renewcommand{\arraystretch}{2.5}
\resizebox{\textwidth}{!}{
\begin{tabular}{cl *{10}{c}}
\toprule
& \multirow{2}{*}{\textbf{Metric}}
& \multirow{2}{*}{Degraded} & \multirow{2}{*}{OT-ODE}
& \multicolumn{2}{c}{PnP-Flow5}
& \multicolumn{2}{c}{Flower5-OT}
& \multicolumn{4}{c}{Ours} \\
\cmidrule(lr){5-6} \cmidrule(lr){7-8} \cmidrule(lr){9-12}
& & & & 1 & 5 & 1 & 5 & $N=2$ & $N=4$ & $N=10$ & $N=25$ \\
\midrule

\multirow{3}{*}{\rotlabel{Denoising}}
& PSNR$\uparrow$
& $20.00 \pm 0.01$ & $30.07 \pm 1.53$ & $31.22 \pm 1.60$ & $31.47 \pm 1.63$ & $31.70 \pm 1.60$ & $32.35 \pm 1.56$
& \bluehl{$32.90\pm1.55$} & \redhl{$32.50\pm1.58$} & $31.51\pm1.63$ & $29.39\pm1.15$ \\
& SSIM$\uparrow$
& $0.311 \pm 0.072$ & $0.813 \pm 0.032$ & $0.861 \pm 0.034$ & $0.862 \pm 0.035$ & $0.876 \pm 0.029$ & $0.888 \pm 0.027$
& \bluehl{$0.901\pm0.022$} & \redhl{$0.894\pm0.024$} & $0.874\pm0.028$ & $0.776\pm0.026$ \\
& LPIPS$\downarrow$
& $0.511 \pm 0.114$ & $0.079 \pm 0.021$ & $0.142 \pm 0.037$ & $0.176 \pm 0.044$ & $0.113 \pm 0.033$ & $0.127 \pm 0.036$
& $0.082\pm0.026$ & $0.080\pm0.024$ & \bluehl{$0.072\pm0.023$} & \redhl{$0.078\pm0.022$} \\
\addlinespace[3pt] \cmidrule(lr){2-12} \addlinespace[3pt]

\multirow{3}{*}{\rotlabel{Deblurring}}
& PSNR$\uparrow$
& $24.16 \pm 1.62$ & $27.23 \pm 2.31$ & $28.18 \pm 2.41$ & $28.48 \pm 2.42$ & $28.85 \pm 2.45$ & \redhl{$29.20 \pm 2.47$}
& \bluehl{$29.37\pm2.49$} & $28.88\pm2.49$ & $28.15\pm2.47$ & $27.57\pm2.44$ \\
& SSIM$\uparrow$
& $0.522 \pm 0.046$ & $0.713 \pm 0.076$ & $0.764 \pm 0.070$ & $0.769 \pm 0.070$ & $0.777 \pm 0.068$ & \redhl{$0.786 \pm 0.067$}
& \bluehl{$0.795\pm0.064$} & $0.782\pm0.067$ & $0.758\pm0.070$ & $0.735\pm0.073$ \\
& LPIPS$\downarrow$
& $0.437 \pm 0.052$ & \redhl{$0.125 \pm 0.035$} & $0.302 \pm 0.064$ & $0.331 \pm 0.067$ & $0.253 \pm 0.058$ & $0.282 \pm 0.063$
& $0.244\pm0.059$ & $0.199\pm0.051$ & $0.151\pm0.042$ & \bluehl{$0.124\pm0.035$} \\
\addlinespace[3pt] \cmidrule(lr){2-12} \addlinespace[3pt]

\multirow{3}{*}{\rotlabel{Super-resolution}}
& PSNR$\uparrow$
& $11.88 \pm 1.72$ & $26.13 \pm 2.54$ & $27.15 \pm 2.42$ & $27.61 \pm 2.43$ & $26.41 \pm 2.10$ & $26.73 \pm 2.10$
& \bluehl{$28.56\pm2.54$} & \redhl{$27.94\pm2.58$} & $27.19\pm2.56$ & $26.08\pm2.16$ \\
& SSIM$\uparrow$
& $0.219 \pm 0.073$ & $0.717 \pm 0.077$ & $0.765 \pm 0.065$ & $0.777 \pm 0.062$ & $0.743 \pm 0.069$ & $0.751 \pm 0.067$
& \bluehl{$0.803\pm0.064$} & \redhl{$0.786\pm0.069$} & $0.760\pm0.075$ & $0.665\pm0.058$ \\
& LPIPS$\downarrow$
& $0.878 \pm 0.062$ & \bluehl{$0.110 \pm 0.029$} & $0.168 \pm 0.043$ & $0.178 \pm 0.045$ & $0.275 \pm 0.051$ & $0.286 \pm 0.053$
& $0.179\pm0.042$ & $0.150\pm0.036$ & $0.122\pm0.030$ & \redhl{$0.122\pm0.026$} \\
\addlinespace[3pt] \cmidrule(lr){2-12} \addlinespace[3pt]

\multirow{3}{*}{\rotlabel{Random inpainting}}
& PSNR$\uparrow$
& $13.56 \pm 1.84$ & $29.52 \pm 2.21$ & $33.15 \pm 2.86$ & \redhl{$33.88 \pm 2.87$} & $33.12 \pm 2.84$ & $33.82 \pm 2.84$
& \bluehl{$34.12\pm2.96$} & $33.70\pm3.11$ & $32.93\pm3.16$ & $32.32\pm3.12$ \\
& SSIM$\uparrow$
& $0.230 \pm 0.070$ & $0.840 \pm 0.044$ & $0.918 \pm 0.030$ & \redhl{$0.929 \pm 0.027$} & $0.918 \pm 0.030$ & $0.927 \pm 0.027$
& \bluehl{$0.933\pm0.027$} & $0.929\pm0.031$ & $0.918\pm0.036$ & $0.907\pm0.039$ \\
& LPIPS$\downarrow$
& $1.065 \pm 0.080$ & $0.088 \pm 0.021$ & $0.040 \pm 0.012$ & $0.047 \pm 0.014$ & $0.043 \pm 0.012$ & $0.049 \pm 0.013$
& $0.047\pm0.016$ & $0.042\pm0.014$ & \redhl{$0.036\pm0.012$} & \bluehl{$0.032\pm0.012$} \\
\addlinespace[3pt] \cmidrule(lr){2-12} \addlinespace[3pt]

\multirow{3}{*}{\rotlabel{Box inpainting}}
& PSNR$\uparrow$
& $21.82 \pm 2.03$ & $24.76 \pm 2.73$ & $26.57 \pm 3.23$ & $27.19 \pm 3.26$ & $26.79 \pm 3.51$ & $27.47 \pm 3.46$
& \bluehl{$28.94\pm2.53$} & \redhl{$28.29\pm2.47$} & $27.60\pm2.42$ & $26.52\pm1.99$ \\
& SSIM$\uparrow$
& $0.743 \pm 0.038$ & $0.873 \pm 0.021$ & $0.895 \pm 0.020$ & $0.900 \pm 0.020$ & $0.915 \pm 0.014$ & \redhl{$0.923 \pm 0.014$}
& \bluehl{$0.925\pm0.012$} & $0.922\pm0.012$ & $0.916\pm0.013$ & $0.825\pm0.022$ \\
& LPIPS$\downarrow$
& $0.201 \pm 0.040$ & $0.091 \pm 0.026$ & $0.110 \pm 0.027$ & $0.127 \pm 0.029$ & $0.065 \pm 0.020$ & $0.067 \pm 0.019$
& $0.059\pm0.015$ & \redhl{$0.051\pm0.013$} & \bluehl{$0.045\pm0.012$} & $0.071\pm0.022$ \\

\bottomrule
\end{tabular}}
\end{table}

\begin{figure}[t]
\centering
\includegraphics[width=\textwidth]{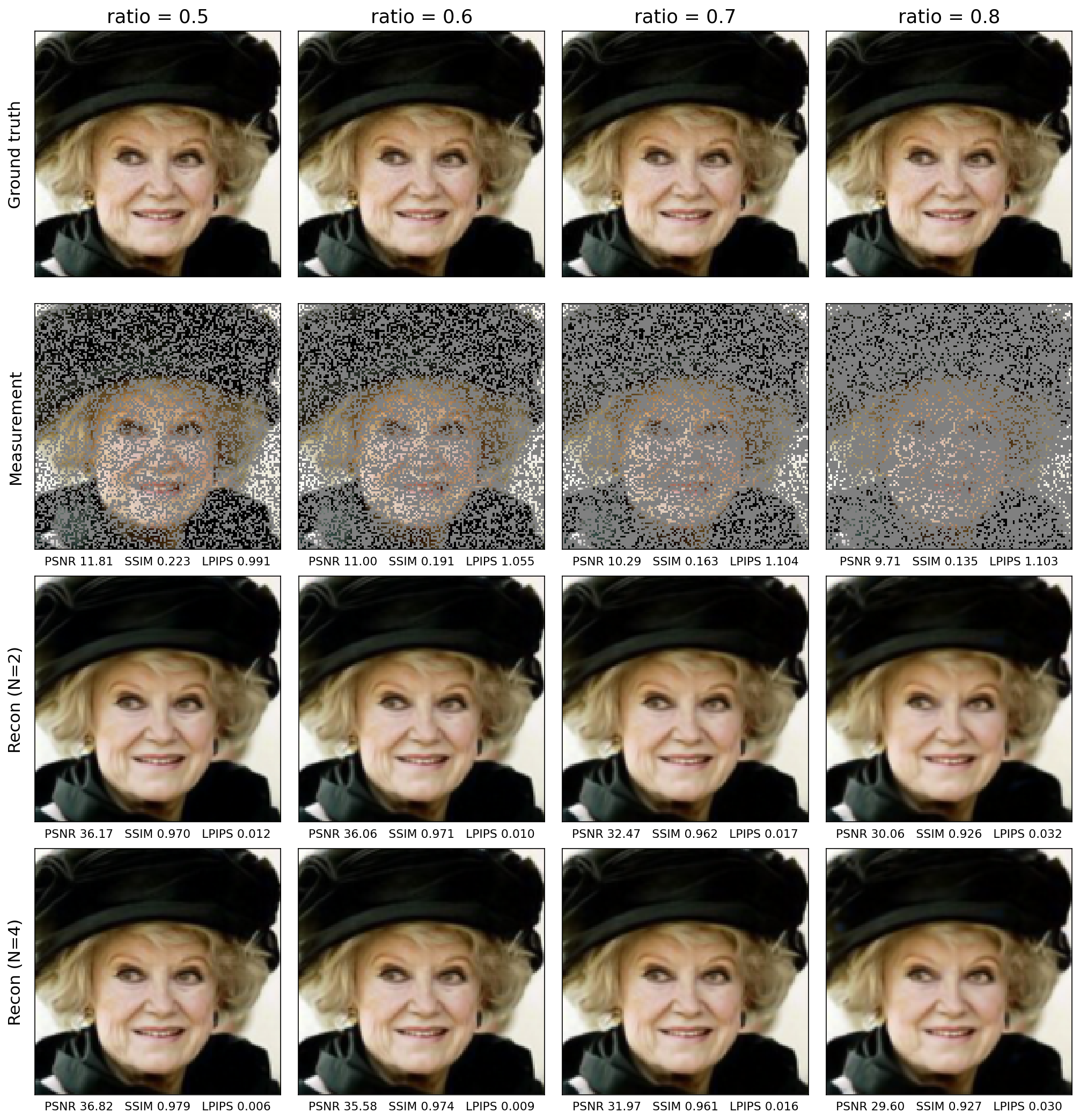}
\caption{Reconstructions under forward-model mismatch on random inpainting (CelebA,
$128 \times 128$). A single model trained at a $0.7$ mask ratio with $\sigma = 0.01$ is applied
to test measurements generated with mask ratios from $0.5$ to $0.8$. Quality is monotone in the test mask ratio, with no failure mode at either extreme. Without matched-ratio models these numbers characterize graceful degradation rather than quantifying the mismatch penalty.
Quantitative values are reported in Table~\ref{tab:ablation_operator}.}
\label{fig:mismatch}
\end{figure}

\begin{figure}[t]
\centering
\includegraphics[width=\textwidth]{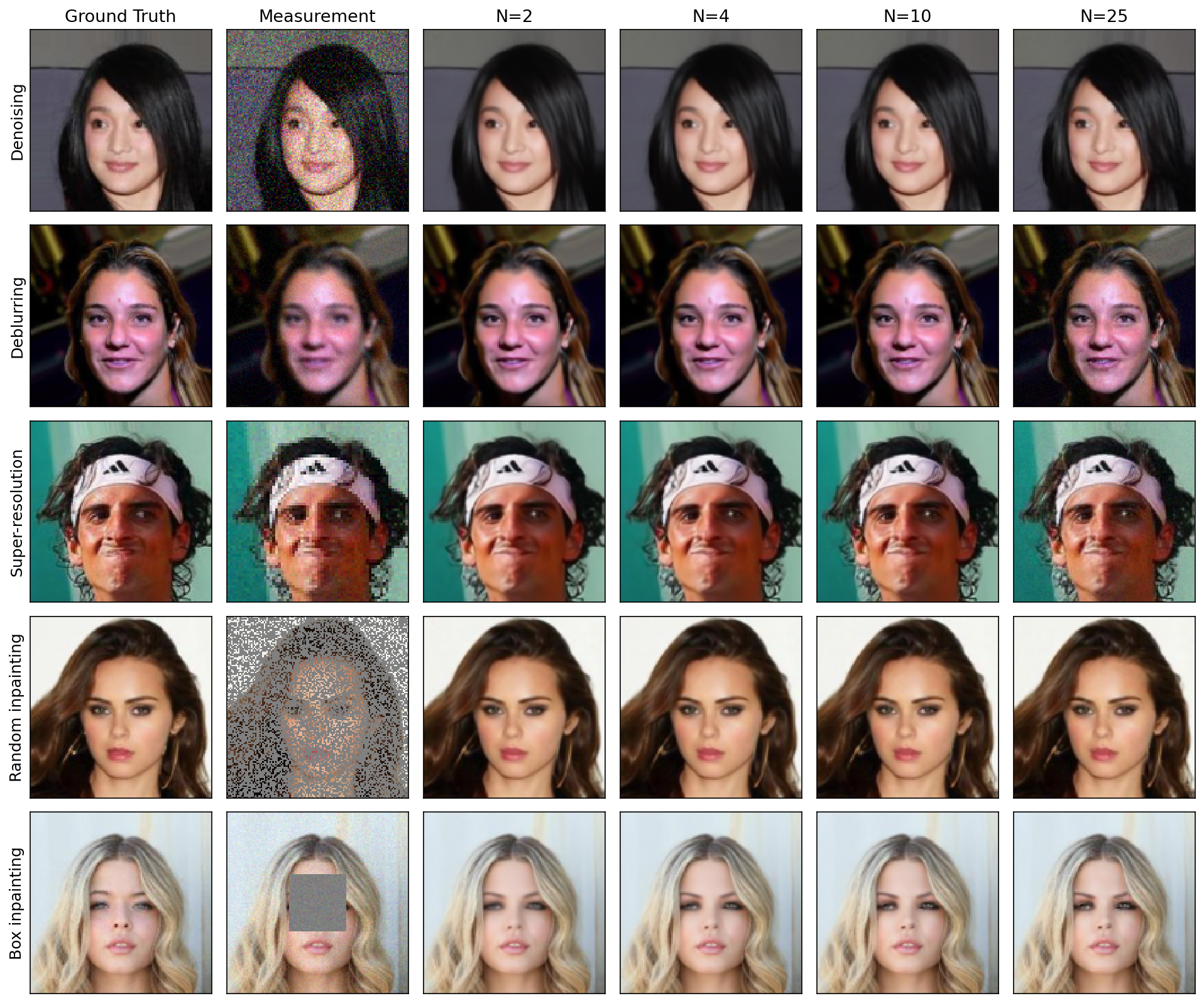}
\caption{Distortion--perception trade-off on CelebA ($128 \times 128$).}
\label{fig:pd_celeba}
\end{figure}

\begin{figure}[t]
\centering
\includegraphics[width=\textwidth]{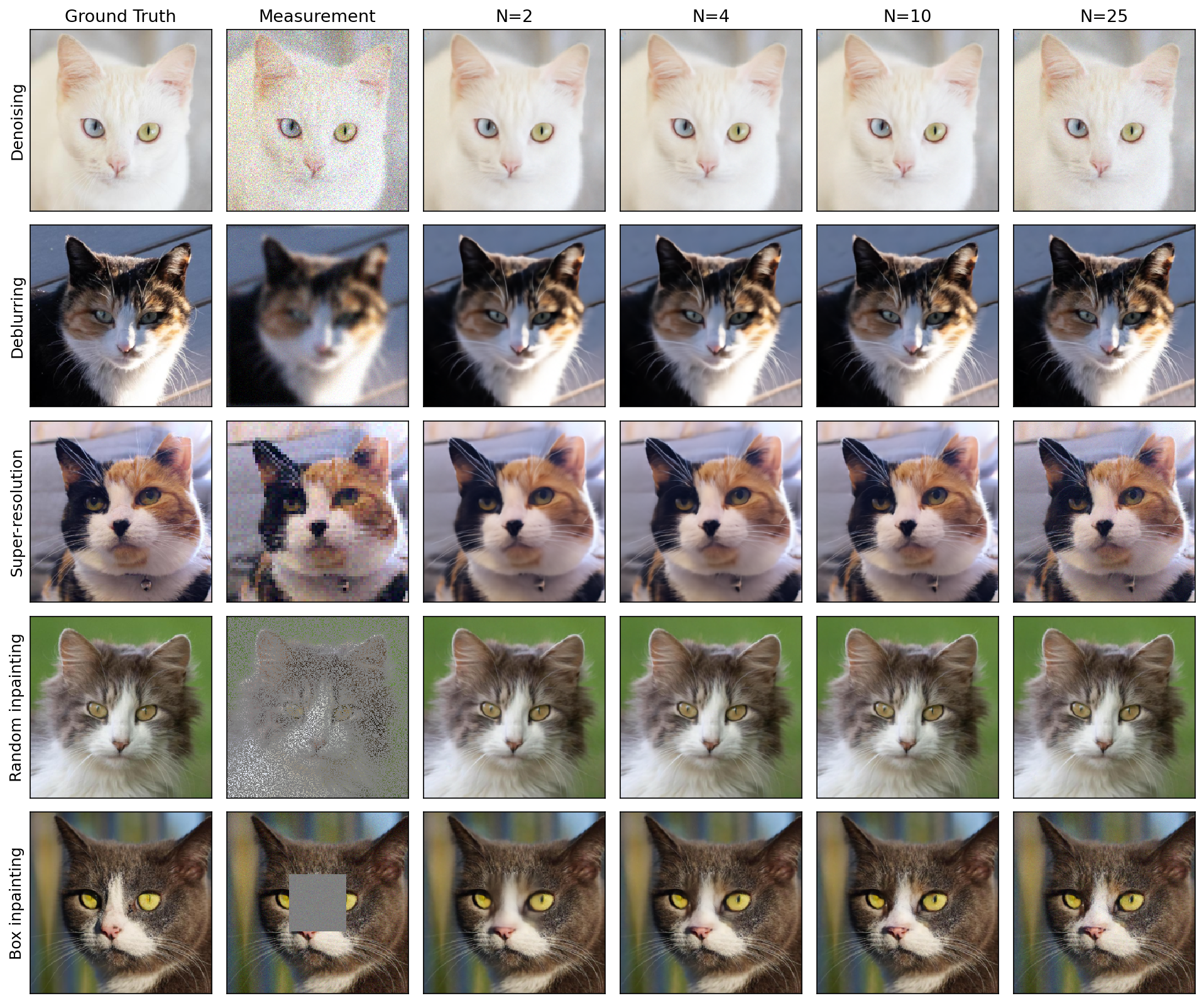}
\caption{Distortion--perception trade-off on AFHQ ($256 \times 256$).}
\label{fig:pd_afhq}
\end{figure}

\begin{figure}[t]
\centering
\includegraphics[width=0.88\textwidth]{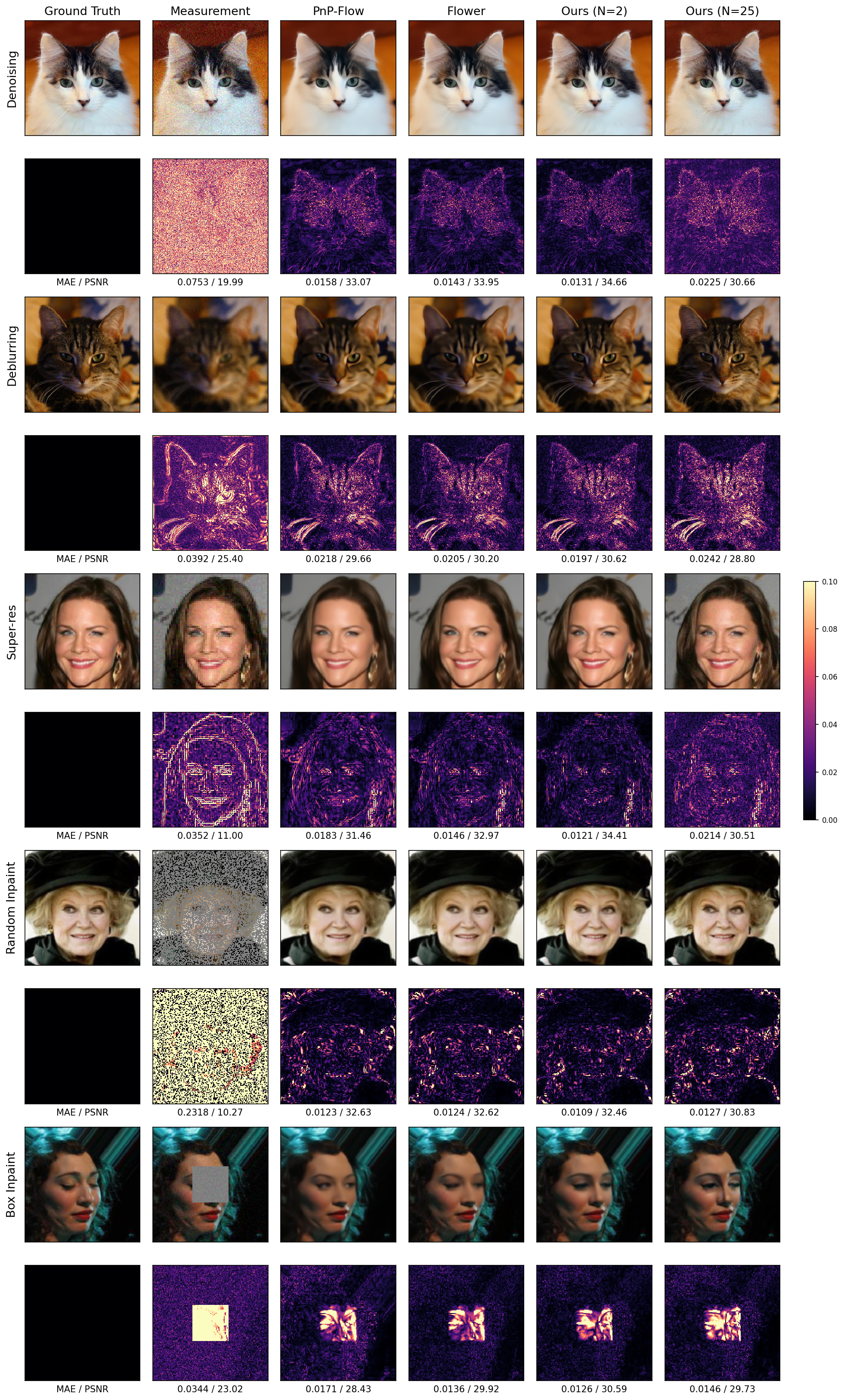}
\caption{ Reconstructions (top) and per-pixel error maps (bottom, magma, $[0, 0.1]$) for the five inverse problems (denoising, deblurring, ×2 super-resolution, random inpainting, box inpainting). Columns: ground truth, measurement, PnP-Flow, Flower, Ours (N=2), Ours (N=25). MAE / PSNR (dB) is reported under each panel. Ours (N=2) yields the lowest error on every task.}
\label{fig:all_error}
\end{figure}

\begin{figure}[t]
\centering
\includegraphics[width=0.6\textwidth]{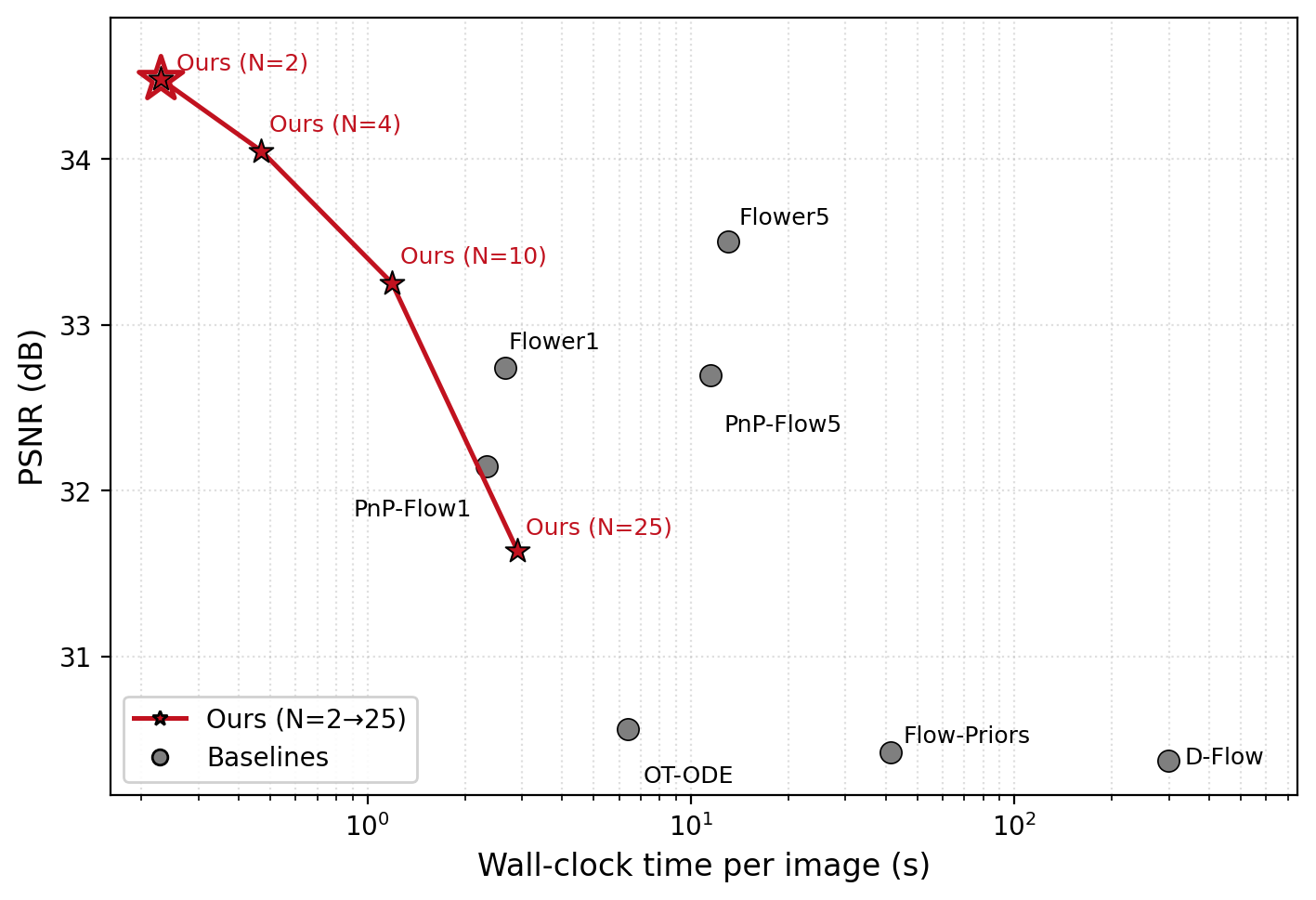}
\caption{Reconstruction quality versus inference cost on CelebA
($128\times128$). PSNR is averaged over the five restoration tasks of
Table~\ref{tab:repro_celeba} (100 test images each); the horizontal axis is per-image wall-clock, measured
on a single NVIDIA RTX A6000 and plotted on a log scale. Up and to the left is better. Our method is shown over Euler steps $N\in\{2,4,10,25\}$ with $K{=}5$ fixed, while each baseline contributes a single point. At $N{=}2$ we exceed the strongest baseline (Flower5) at $\approx50\times$ lower cost. PSNR decreases along our curve as $N$ grows because additional steps trade distortion for perceptual quality.}
\label{fig:pareto_celeba}
\end{figure}

\begin{figure}[t]
\centering
\includegraphics[width=0.9\textwidth]{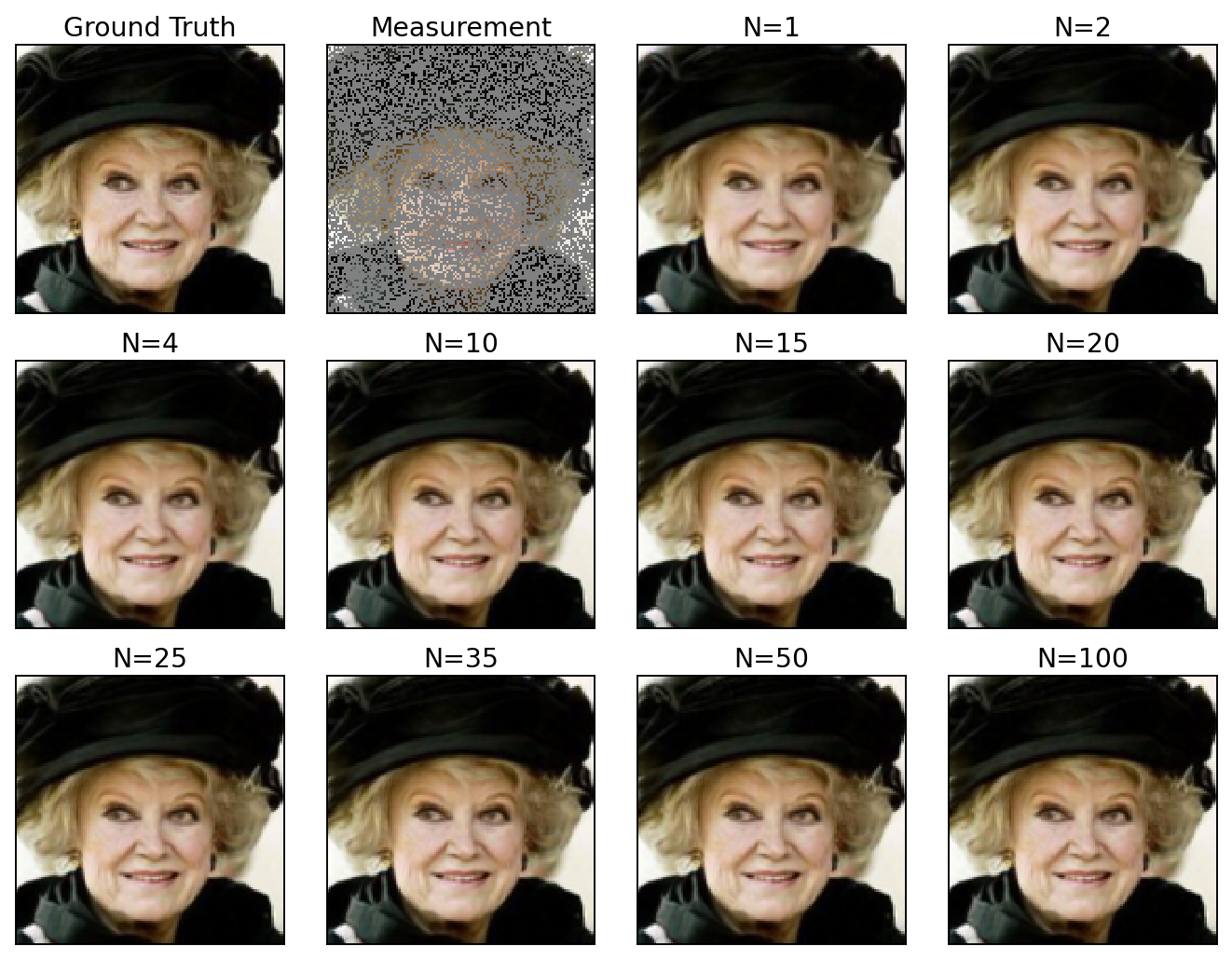}
\caption{CelebA random inpainting: reconstructions as the number of sampling steps grows ($N=1\to100$), showing added detail at higher $N$.}
\label{fig:celeba_inpaint_higher_N}
\end{figure}

\begin{figure}[t]
\centering
\includegraphics[width=0.9\textwidth]{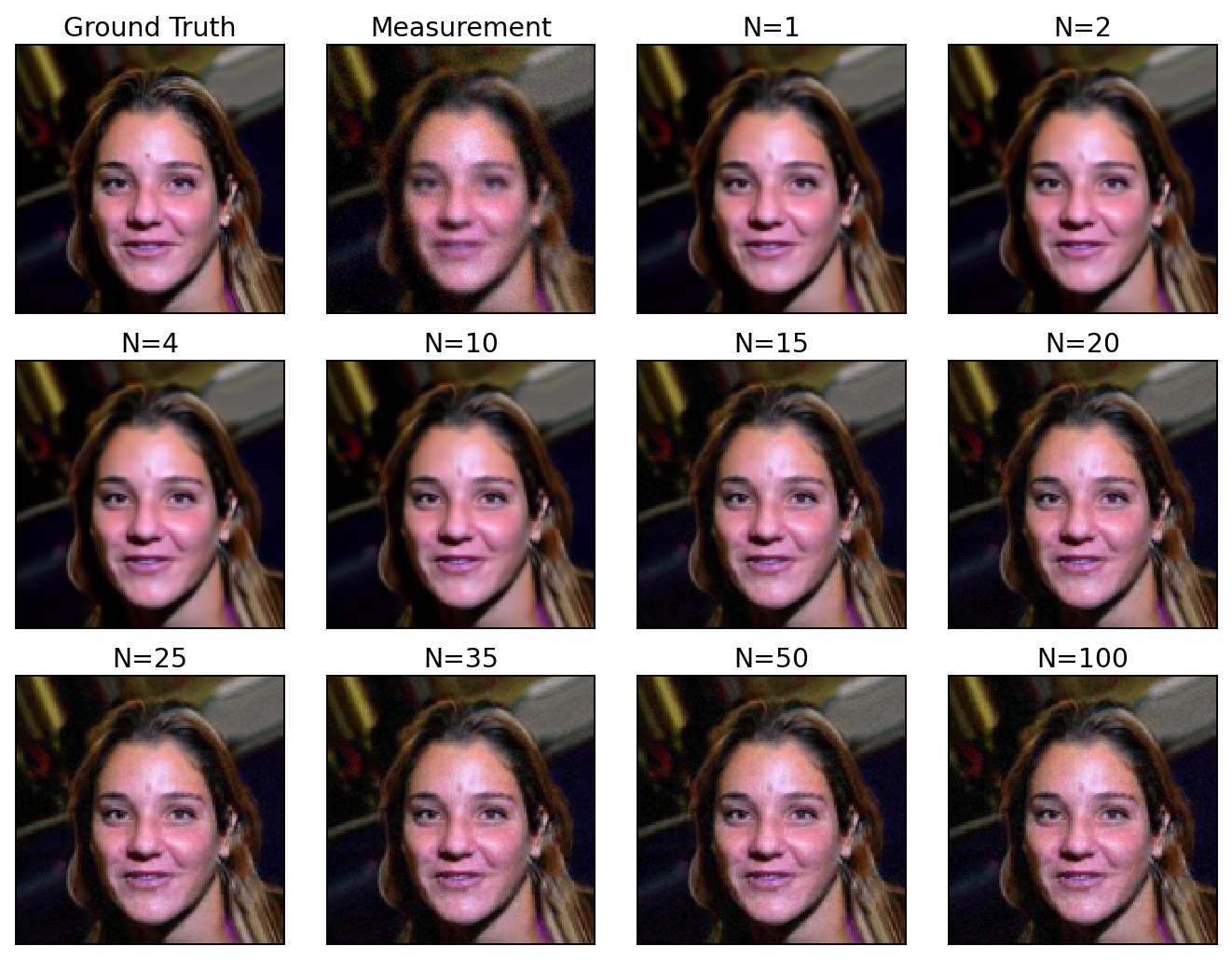}
\caption{CelebA deblurring reconstructions across sampling steps ($N=1\to100$).}
\label{fig:celeba_blur_higher_N}
\end{figure}

\begin{figure}[t]
\centering
\includegraphics[width=0.9\textwidth]{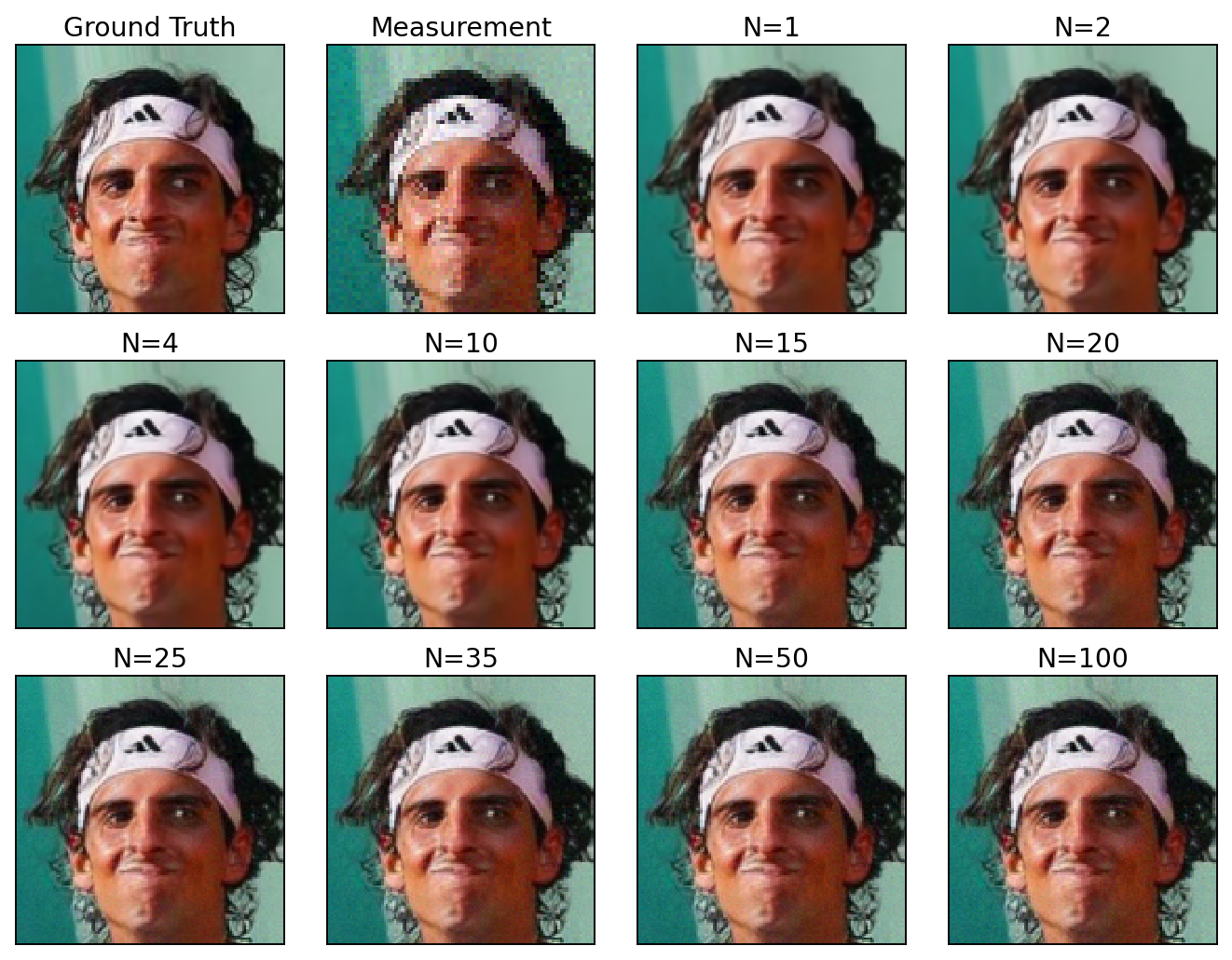}
\caption{CelebA x2 super-resolution reconstructions across sampling steps ($N=1\to100$).}
\label{fig:celeba_sr_higher_N}
\end{figure}

\begin{figure}[t]
\centering
\includegraphics[width=\textwidth]{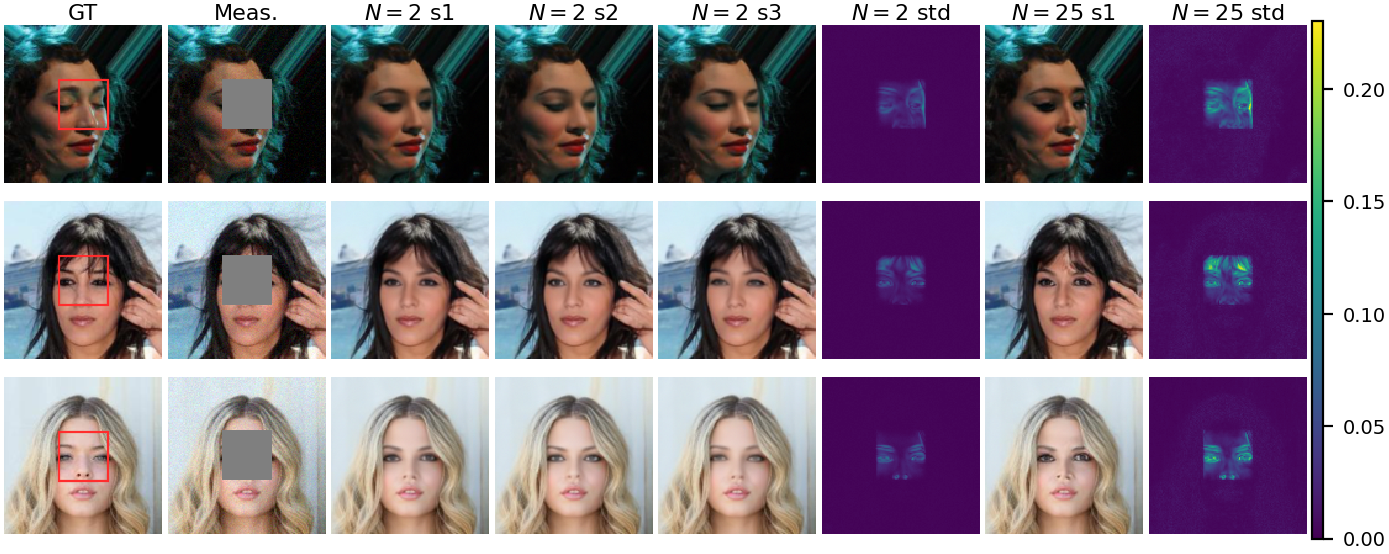}
\caption{Posterior sample diversity on box inpainting (CelebA, $128 \times 128$, $K{=}5$). For each test image the measurement $\ybm$ is held fixed and only the source draw $\xbm_0 \sim \Ncal(\bm 0,\Ibm)$ is varied. Columns show the ground truth, the measurement, three of the $S{=}8$ reconstructions at
$N{=}2$, the pixel-wise standard deviation over all eight samples, and the corresponding sample and standard deviation at $N{=}25$. Standard-deviation maps share a single color scale across the
figure. Variation is confined to the masked region and vanishes where measurements are available,
consistent with Proposition~\ref{prop:conditional_flow}, where the learned field transports the source to the
measurement-conditioned posterior rather than to a single point estimate. Longer integration
$(N{=}25)$ widens the spread, which accounts for the improvement in LPIPS and the loss in PSNR
reported in Table~\ref{tab:repro_celeba} and Table~\ref{tab:repro_celeba1000}.}
\label{fig:posterior_samples}
\end{figure}

\begin{figure}[t]
\centering
\includegraphics[width=\textwidth]{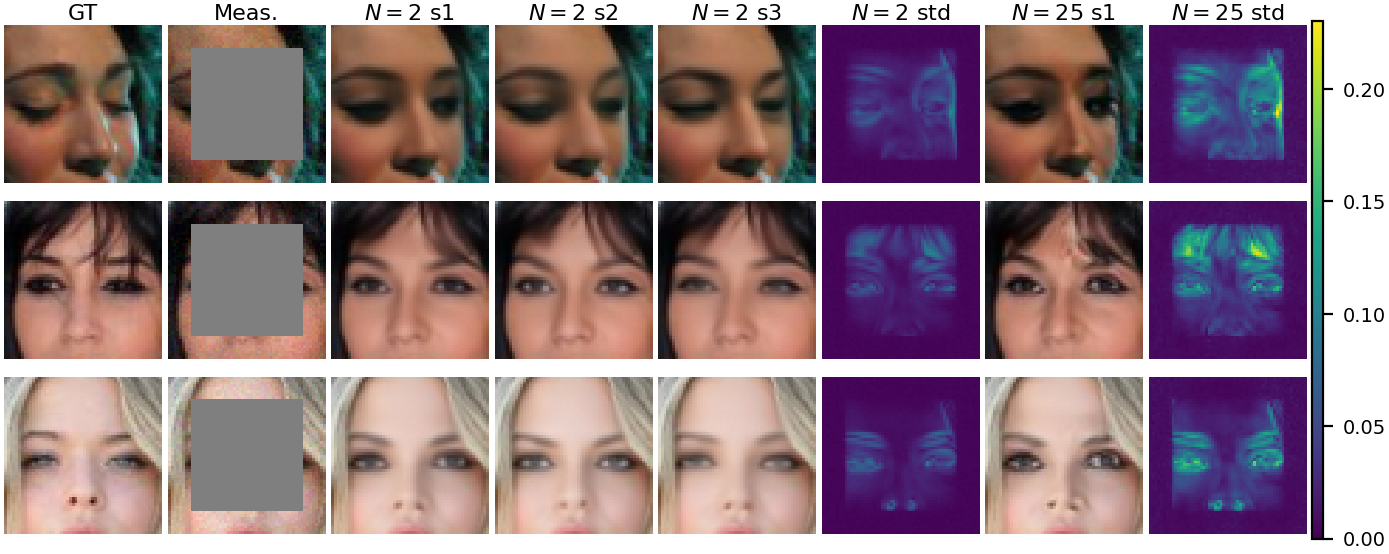}
\caption{Zoomed-in version of Figure~\ref{fig:posterior_samples}.}
\label{fig:posterior_samples_zoom}
\end{figure}

\begin{figure}[t]
\centering
\includegraphics[width=\textwidth]{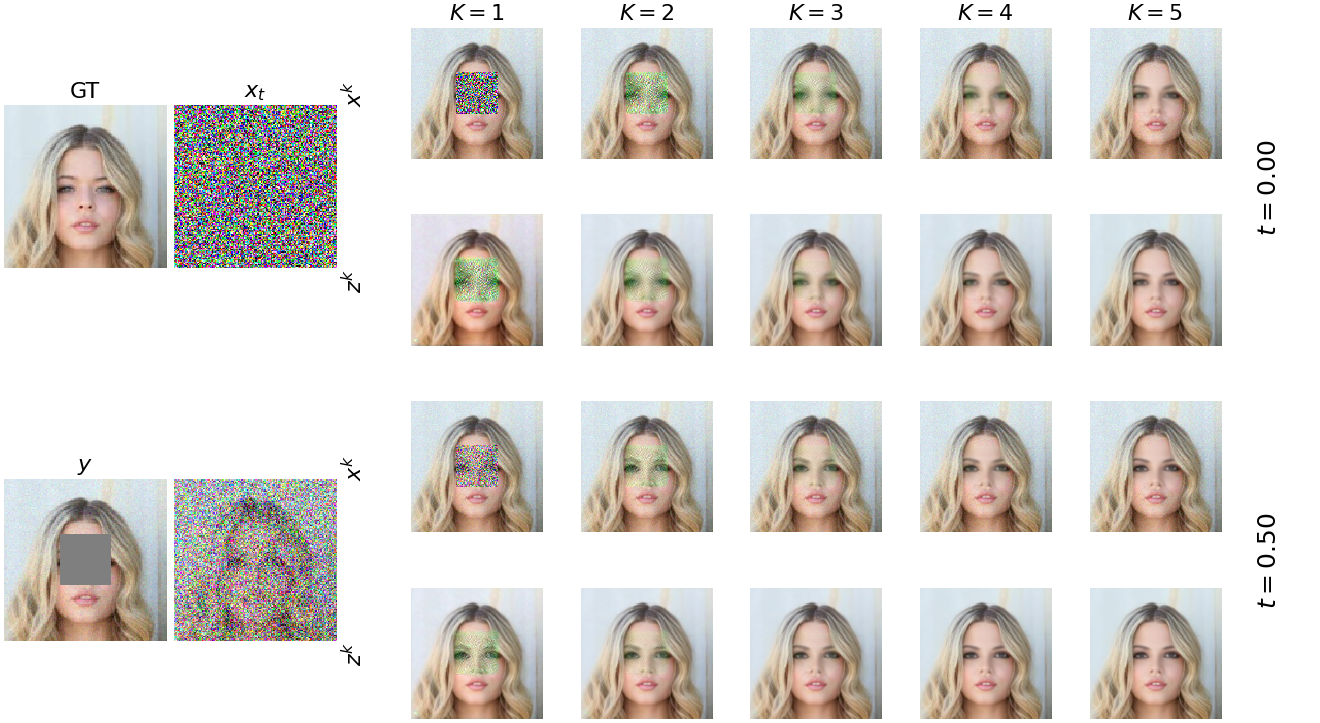}
\caption{Internal trajectory of the solver for box inpainting ($N=2$ Euler steps, $K=5$ HQS iterations each): from the ground truth and measurement, each Euler step (rows, labeled by $t$) refines the estimate through its inner iterations $K=1{:}5$, showing the data-consistency iterate $\xbm^k$ (top) and the damped learned update $\zbm^k$ (bottom).}
\label{fig:trajectory_all}
\end{figure}

\subsection{Compressed-sensing Fourier sampling.} \label{app_sec:cs}
We consider subsampled Fourier measurements $\Abm=\Pbm\Fbm$, the forward model used in accelerated MRI, where $\Fbm$ is the
two-dimensional DFT and $\Pbm$ a binary sampling mask. We evaluate
a Cartesian mask retaining $21.88\%$ of $k$-space on AFHQ-Cat, with complex Gaussian noise of standard deviation
$\sigma=0.002$, applying the operator channel by channel for color images. We
compare against Flower \cite{pourya2026flower} with $\gamma=0$, $N=100$ and
$N_{\mathrm{avg}}=1$, and against the zero-filled adjoint $\Abm^{\!*}\ybm$.
Figure~\ref{fig:cs_fourier} and Table~\ref{tab:cs} report the results.
\begin{table}[t]
    \centering 
    \caption{Compressed-sensing Fourier sampling on AFHQ-Cat (avg 100 images).}
\begin{tabular}{lccc cc}
\toprule
& \multicolumn{5}{c}{CS with Cartesian ($21.88\%$)} \\
\cmidrule(lr){2-6}
Method & Zero-filled & Flower1-OT & Ours ($N=2$) &  Ours ($N=4$) &  Ours ($N=10$)\\
\midrule
PSNR $\uparrow$ & 25.62 & 31.17 & 32.06 & 31.71 & 31.06 \\
SSIM $\uparrow$  & 0.725  & 0.870  & 0.892 & 0.885 & 0.869 \\
LPIPS $\downarrow$ &  0.343 & 0.062 & 0.074 & 0.064 &  0.054\\
\bottomrule
\end{tabular}
    \label{tab:cs}
\end{table}

\begin{figure}[t]
\centering
\includegraphics[width=\textwidth]{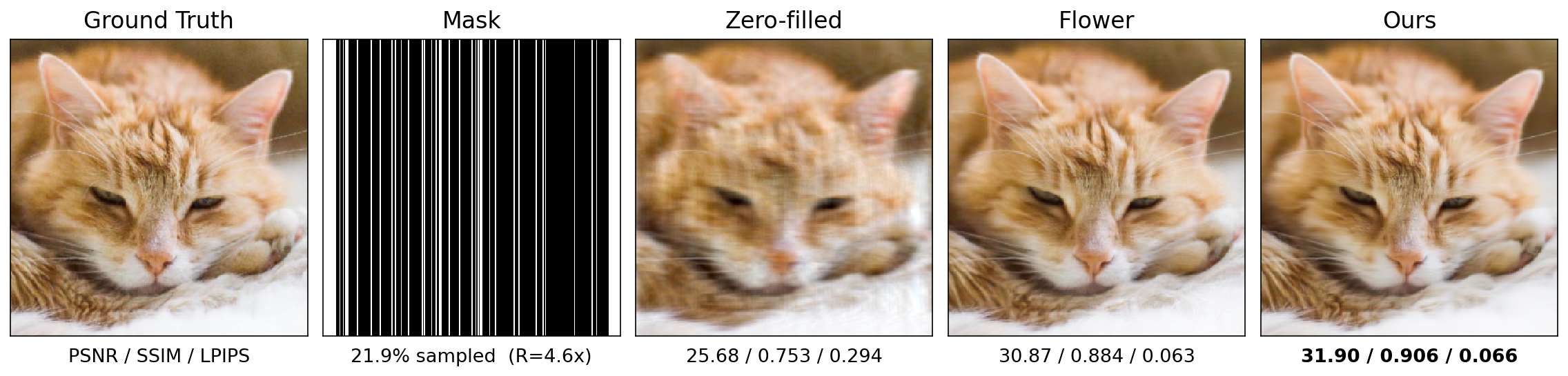}
\caption{Compressed-sensing Fourier sampling on AFHQ-Cat ($256\times256$),
$\sigma=0.002$. Columns: ground truth, zero-filled adjoint $\Abm^{\!*}\ybm$,
Flower1-OT, ours ($N{=}2$) with Cartesian mask ($21.88\%$).}
\label{fig:cs_fourier}
\end{figure}

\end{document}